\def\UrlSpecials{\do\~{\kern -.15em\lower .7ex\hbox{~}\kern .04em}} \catcode`~=13 
\newcommand{\calC}{\mathcal{C}}
\newcommand{\calD}{\mathcal{D}}
\newcommand{\calL}{\mathcal{L}}
\newcommand{\calN}{\mathcal{N}}
\newcommand{\calP}{\mathcal{P}}
\newcommand{\calS}{\mathcal{S}}
\newcommand{\calT}{\mathcal{T}}
\newcommand{\calU}{\mathcal{U}}
\newcommand{\calV}{\mathcal{V}}
\newcommand{\calX}{\mathcal{X}}
\newcommand{\ba}{\mathbf{a}}
\newcommand{\bA}{\mathbf{A}}
\newcommand{\bb}{\mathbf{b}}
\newcommand{\bI}{\mathbf{I}}
\newcommand{\br}{\mathbf{r}}
\newcommand{\bs}{\mathbf{s}}
\newcommand{\bu}{\mathbf{u}}
\newcommand{\bv}{\mathbf{v}}
\newcommand{\bW}{\mathbf{W}}
\newcommand{\bx}{\mathbf{x}}
\newcommand{\bX}{\mathbf{X}}
\newcommand{\bz}{\mathbf{z}}
\newcommand{\rmd}{\mathrm{d}}
\newcommand{\rmH}{\mathrm{H}}
\newcommand{\rmS}{\mathrm{S}}
\newcommand{\bbP}{\mathbb{P}}
\newcommand{\bbR}{\mathbb{R}}
\DeclareMathAlphabet{\mathbsf}{OT1}{cmss}{bx}{n}
\DeclareMathAlphabet{\mathssf}{OT1}{cmss}{m}{sl}
\DeclareSymbolFont{bsfletters}{OT1}{cmss}{bx}{n}  
\DeclareSymbolFont{ssfletters}{OT1}{cmss}{m}{n}
\DeclareMathSymbol{\bsfGamma}{0}{bsfletters}{'000}
\DeclareMathSymbol{\ssfGamma}{0}{ssfletters}{'000}
\DeclareMathSymbol{\bsfDelta}{0}{bsfletters}{'001}
\DeclareMathSymbol{\ssfDelta}{0}{ssfletters}{'001}
\DeclareMathSymbol{\bsfTheta}{0}{bsfletters}{'002}
\DeclareMathSymbol{\ssfTheta}{0}{ssfletters}{'002}
\DeclareMathSymbol{\bsfLambda}{0}{bsfletters}{'003}
\DeclareMathSymbol{\ssfLambda}{0}{ssfletters}{'003}
\DeclareMathSymbol{\bsfXi}{0}{bsfletters}{'004}
\DeclareMathSymbol{\ssfXi}{0}{ssfletters}{'004}
\DeclareMathSymbol{\bsfPi}{0}{bsfletters}{'005}
\DeclareMathSymbol{\ssfPi}{0}{ssfletters}{'005}
\DeclareMathSymbol{\bsfSigma}{0}{bsfletters}{'006}
\DeclareMathSymbol{\ssfSigma}{0}{ssfletters}{'006}
\DeclareMathSymbol{\bsfUpsilon}{0}{bsfletters}{'007}
\DeclareMathSymbol{\ssfUpsilon}{0}{ssfletters}{'007}
\DeclareMathSymbol{\bsfPhi}{0}{bsfletters}{'010}
\DeclareMathSymbol{\ssfPhi}{0}{ssfletters}{'010}
\DeclareMathSymbol{\bsfPsi}{0}{bsfletters}{'011}
\DeclareMathSymbol{\ssfPsi}{0}{ssfletters}{'011}
\DeclareMathSymbol{\bsfOmega}{0}{bsfletters}{'012}
\DeclareMathSymbol{\ssfOmega}{0}{ssfletters}{'012}
\newcommand{\btheta}{\bm{\theta}}
\newcommand{\bxi}{\bm{\xi}}
\newcommand{\bone}{\mathbf{1}}
\theoremstyle{plain}
\newtheorem{theorem}{Theorem} 
\newtheorem{lemma}{Lemma}
\newtheorem{corollary}{Corollary}
\newtheorem{definition}{Definition} 
\newtheorem{remark}{Remark}
\newcommand{\qednew}{\nobreak \ifvmode \relax \else
      \ifdim\lastskip<1.5em \hskip-\lastskip
      \hskip1.5em plus0em minus0.5em \fi \nobreak
      \vrule height0.75em width0.5em depth0.25em\fi}
\begin{document}
    
\title{Sample Complexity Bounds for 1-bit Compressive Sensing and Binary Stable Embeddings with Generative Priors}

\author{Zhaoqiang Liu, Selwyn Gomes, Avtansh Tiwari, Jonathan Scarlett

\thanks{
Z.~Liu and S.~Gomes are with the Department of Computer Science, National University of Singapore (email: \url{dcslizha@nus.edu.sg}, \url{selwyn@comp.nus.edu.sg}). 


A.~Tiwari is with the Department of Electrical Engineering,  Indian Institute of Technology (IIT) Kanpur (email: \url{avtansh@iitk.ac.in}). 

J.~Scarlett is with the Department of Computer Science and the Department of Mathematics, National University of Singapore (email: \url{scarlett@comp.nus.edu.sg}).

This work was supported by the Singapore National Research Foundation (NRF) under grant number R-252-000-A74-281.}}

\maketitle

\begin{abstract}
    The goal of standard 1-bit compressive sensing is to accurately recover an unknown sparse vector from binary-valued measurements, each indicating the sign of a linear function of the vector.  Motivated by recent advances in compressive sensing with generative models, where a generative modeling assumption replaces the usual sparsity assumption, we study the problem of 1-bit compressive sensing with generative models. We first consider noiseless 1-bit measurements, and provide sample complexity bounds for approximate recovery under i.i.d.~Gaussian measurements and a Lipschitz continuous generative prior, as well as a near-matching algorithm-independent lower bound. Moreover, we demonstrate that the Binary $\epsilon$-Stable Embedding property, which characterizes the robustness of the reconstruction to measurement errors and noise, also holds for 1-bit compressive sensing with Lipschitz continuous generative models with sufficiently many Gaussian measurements. In addition, we apply our results to neural network generative models, and provide a proof-of-concept numerical experiment demonstrating significant improvements over sparsity-based approaches.
\end{abstract}

\section{Introduction}\label{sec:intro}

The compressive sensing (CS) problem~\cite{Fou13,wainwright2019high}, which aims to recover a {\em sparse} signal from a small number of linear measurements, is fundamental in machine learning, signal processing and statistics. It has been popular over the past 1--2 decades and has become increasingly well-understood, with theoretical guarantees including sharp performance bounds for both practical algorithms \cite{Wai09a,Don13,Ame14,Wen2016} and potentially intractable information-theoretically optimal algorithms \cite{Wai09,Ari13,Can13,Sca15}. 

Unlike conventional compressive sensing, which assumes infinite-precision real-valued measurements, in {\em 1-bit} compressive sensing~\cite{boufounos20081}, each measurement is quantized to a single bit, namely its sign. Considerable research effort has been placed to 1-bit compressive sensing~\cite{gupta2010sample,zhu2015towards,zhang2014efficient,gopi2013one,ai2014one,awasthi2016learning}, one motivation being that 1-bit quantization can be implemented in hardware with low cost and is robust to certain nonlinear distortions~\cite{boufounos2010reconstruction}. 

In addition, motivated by recent advances in deep generative models~\cite{Fos19}, a new perspective has recently emerged in CS, in which the sparsity assumption is replaced by the assumption that the underlying signal lies near the range of a suitably-chosen generative model, typically corresponding to a deep neural network~\cite{Bor17}. Along with several theoretical developments, it has been numerically verified that generative priors can reduce the number of measurements required for a given accuracy by large factors such as $5$ to $10$ \cite{Bor17}. 

In this paper, following the developments in both 1-bit CS and CS with generative priors, we establish a variety of fundamental theoretical guarantees for 1-bit compressive sensing using generative models. 

\vspace*{-1ex}
\subsection{Related Work}

{\bf Sparsity-based 1-bit compressive sensing:} The framework of 1-bit compressive sensing (CS) was introduced and studied in~\cite{boufounos20081}. Subsequently, various numerical algorithms were designed \cite{boufounos20081, boufounos2009greedy, boufounos2010reconstruction, laska2011trust, zymnis2009compressed}, often with convergence guarantees. In addition, several works have developed theoretical guarantees for support recovery and approximate vector recovery in 1-bit CS~\cite{gopi2013one,zhang2014efficient,zhu2015towards,acharya2017improved, plan2012robust, plan2013one}. In these works, it is usually assumed that the measurement matrix contains i.i.d.~Gaussian entries. Such an assumption is generalized to allow sub-Gaussian~\cite{ai2014one, dirksen2018non} and log-concave~\cite{awasthi2016learning} measurement matrices. A survey on 1-bit CS can be found in~\cite{li2018survey}.

To address the fact that standard 1-bit measurements give no information about the norm of the underlying signal vector, the so-called dithering technique, which adds artificial random noise before quantization, has been considered~\cite{knudson2016one,xu2018quantized,jacques2017time,dirksen2018non} to also enable the estimation of the norm. 

Perhaps most relevant to the present paper, \cite{jacques2013robust} studies the robustness of 1-bit CS by considering binary stable embeddings of sparse vectors. We seek to provide analogous theoretical guarantees to those in~\cite{jacques2013robust}, but with a generative prior in place of the sparsity assumption.  We adopt similar high-level proof steps, but with significantly different details.

{\bf Compressive sensing with generative models:} Bora {\em et al.}~\cite{Bor17} show that roughly $O(k \log L)$ random Gaussian linear measurements suffice for accurate recovery when the generative model is an $L$-Lipschitz function with bounded $k$-dimensional inputs. 
The analysis in~\cite{Bor17} is based on minimizing an empirical loss function. In practice, such a task may be hard, and the authors propose to use a simple gradient descent algorithm in the latent space. 
The theoretical analysis is based on showing that Gaussian random matrices satisfy a natural counterpart to the Restricted Eigenvalue Condition (REC) termed the Set-REC. 
Follow-up works of~\cite{Bor17} provide various additional algorithmic guarantees for compressive sensing with generative models~\cite{Sha18,peng2020solving,Dha18,Han18}, as well as information-theoretic lower bounds \cite{kamath2019lower,liu2020information}.

In a recent work, the authors of~\cite{qiu2019robust} study robust 1-bit compressive sensing with ReLU-based generative models. In particular, the authors design an empirical risk minimization algorithm, and prove that it is able to faithfully recover bounded target vectors produced by the model from quantized noisy measurements. Our results and those of \cite{qiu2019robust} are complementary to each other, with several differences in the setup:
\begin{itemize}
    \item In \cite{qiu2019robust}, the dithering technique is used, adding artificial random noise before quantization to enable the recovery the norm of the signal vector, whereas we do not consider the use of dithering. Both settings are of interest depending on whether dithering is feasible to implement in the application at hand.
    \item In \cite{qiu2019robust}, the focus is on ReLU networks without offset terms, whereas we consider general $L$-Lipschitz generative models.
    \item The pre-quantization noise in \cite{qiu2019robust} is assumed to be sub-exponential, whereas we allow for general and possibly adversarial noise.
    \item The theoretical analysis in~\cite{qiu2019robust} focuses on a particular recovery algorithm, whereas our results are information-theoretic in nature.
\end{itemize}

\vspace*{-1ex}
\subsection{Contributions}

In this paper, we establish a variety of fundamental theoretical guarantees for 1-bit compressive sensing using generative models.  Our main results are outlined as follows:
\begin{itemize}
    \item In Section~\ref{sec:ub_noiseless}, for noiseless measurements, we characterize the number of i.i.d.~Gaussian measurements sufficient (i.e., an upper bound on the sample complexity) to attain approximate recovery of the underlying signal under a Lipschitz continuous generative prior.
    \item In Section~\ref{sec:lb_noiseless}, for noiseless measurements, we show that our upper bound is nearly tight by giving a near-matching algorithm-independent lower bound for a particular Lipschitz continuous generative model.
    \item In Section~\ref{sec:ub_noisy}, we establish the Binary $\epsilon$-Stable Embedding (B$\epsilon$SE) property, which characterizes the reconstruction robustness to measurement errors and noise. Specifically, we characterize the number of i.i.d.~Gaussian measurements sufficient to ensure that this property holds. In Section~\ref{sec:nn_apply}, we specialize these results to feed-forward neural network generative models.
    \item In Section \ref{sec:algo}, we propose a practical iterative algorithm for 1-bit CS with generative priors, and demonstrate its effectiveness in a simple numerical example.
\end{itemize}

\vspace*{-1ex}
\subsection{Notation}\label{sec:notations}

We use upper and lower case boldface letters to denote matrices and vectors respectively. We write $[N]=\{1,2,\cdots,N\}$ for a positive integer $N$. A {\em generative model} is a function $G \,:\, \calD\to \bbR^n$, with latent dimension $k$, ambient dimension $n$, and input domain $\calD \subseteq \bbR^k$.
$\calS^{n-1} := \{\bx \in \bbR^n: \|\bx\|_2=1\}$ represents the unit sphere in $\bbR^n$. For $\bx,\bs \in \calS^{n-1}$, $\rmd_\rmS (\bx,\bs) := \frac{1}{\pi}\arccos \langle \bx, \bs \rangle$ denotes the geodesic distance, which is the normalized angle between vectors $\bx$ and $\bs$. For $\bv,\bv' \in \bbR^m$, $\rmd_{\rm H}(\bv,\bv') := \frac{1}{m}\sum_{i=1}^m \boldsymbol{1} \{ v_i \ne v'_i \}$ denotes the Hamming distance. 
We use $\|\bX\|_{2 \to 2}$ to denote the spectral norm of a matrix $\bX$. We define the $\ell_2$-ball $B_2^k(r):=\{\bz \in \bbR^k: \|\bz\|_2 \le r\}$, and the $\ell_{\infty}$-ball $B_\infty^k(r):=\{\bz \in \bbR^k: \|\bz\|_\infty \le r\}$.  For a set $B \subseteq \bbR^k$ and a generative model $G \,:\,\bbR^k \to \bbR^n$, we write $G(B) = \{ G(\bz) \,:\, \bz \in B  \}$.

\section{Noiseless Measurements} \label{sec:noiseless}

In this section, we derive near-matching upper and lower bounds on the sample complexity in the noiseless setting, in which the measurements take the form
\begin{equation}
    \bb = \Phi(\bx) := \mathrm{sign}(\bA\bx) \label{eq:measurements}
\end{equation}
for some measurement matrix $\bA \in \bbR^{m \times n}$ and unknown underlying signal $\bx \in G(B_2^k(r))$, where $G \,:\,B_2^k(r) \to \calS^{n-1}$ is the generative model. 

\begin{remark} 
    In the following, for clarity, we will assume that the range of the generative model is contained in the unit sphere, i.e., $G(B_2^k(r)) \subseteq \calS^{n-1}$, and provide guarantees of the form $\|\bx - \hat{\bx}\|_2 \le \epsilon$ for some estimate $\hat{\bx}$.  While the preceding assumption may appear restrictive, these results readily transfer to any general (unnormalized) generative model $\tilde{G}$ with $\tilde{G}(B_2^k(r)) \subseteq \bbR^n$ when the recovery guarantee is modified to $\big\| \frac{\bx}{\|\bx\|_2} - \frac{\hat{\bx}}{\|\hat{\bx}\|_2}\big\|_2 \le \epsilon$ (note that norm estimation is impossible under 1-bit measurements of the form \eqref{eq:measurements}).  The idea is to apply our results to $G(\bx) = \frac{\tilde{G}(\bx)}{\|\tilde{G}(\bx)\|_2}$; see Section~\ref{sec:nn_apply} for an example and further discussion.

    In addition, we consider spherical domains with radius $r$.  Similar to that in~\cite{Bor17}, the assumption of a bounded domain is mild, since the dependence on $r$ in the sample complexity will only be logarithmic.  In addition, our lower bound will show that such a dependence on $r$ is unavoidable.
\end{remark}

\subsection{Upper Bound}\label{sec:ub_noiseless}

Our first main result shows that with sufficiently many independent Gaussian measurements, with high probability, any two signals separated by some specified distance $\epsilon$ produce distinct measurements.  This amounts to an upper bound on the sample complexity for noiseless 1-bit recovery.

\begin{theorem}\label{thm:main_noiseless}
     Fix $r > 0$ and $\epsilon \in (0,1)$, and let $\bA \in \bbR^{m\times n}$ be generated as $A_{ij} \overset{i.i.d.}{\sim} \calN(0,1)$. Suppose that the generative model $G: \bbR^k \rightarrow \bbR^n$ is $L$-Lipschitz and $G(B_2^k(r)) \subseteq \calS^{n-1}$.  For $m = \Omega\left(\frac{k}{\epsilon}\log \frac{L r }{\epsilon^2}\right)$,\footnote{In all statements of the form $m = \Omega(\cdot)$ in our upper bounds, the implied constant is implicitly assumed to be sufficiently large.} with probability at least $1-e^{-\Omega(\epsilon m)}$, we have for all $\bx,\bs \in G(B_2^k(r))$ that
     \begin{equation}
          \|\bx-\bs\|_2 > \epsilon \Rightarrow \rmd_\rmH(\Phi(\bx),\Phi(\bs)) = \Omega(\epsilon).
     \end{equation}
    In particular, if $\|\bx - \bs\|_2 > \epsilon$, then $\Phi(x) \ne \Phi(\bs)$.
\end{theorem}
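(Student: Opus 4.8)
The last assertion is immediate: since $\rmd_\rmH(\Phi(\bx),\Phi(\bs))=0$ exactly when $\Phi(\bx)=\Phi(\bs)$, a lower bound of order $\epsilon$ on the Hamming distance rules out $\Phi(\bx)=\Phi(\bs)$. So the plan is to prove the displayed implication. Write $K:=G(B_2^k(r))\subseteq\calS^{n-1}$ and let $\ba_1,\dots,\ba_m\in\bbR^n$ be the rows of $\bA$. Two elementary facts are used throughout. (i) For fixed unit vectors $\bu,\bv$, the random hyperplane identity $\bbP[\mathrm{sign}\langle\ba_i,\bu\rangle\neq\mathrm{sign}\langle\ba_i,\bv\rangle]=\rmd_\rmS(\bu,\bv)$ shows that $\rmd_\rmH(\Phi(\bu),\Phi(\bv))$ is an average of $m$ \iid $\Ber(\rmd_\rmS(\bu,\bv))$ variables, so it concentrates around $\rmd_\rmS(\bu,\bv)$; in particular the multiplicative Chernoff bound gives $\bbP[\rmd_\rmH(\Phi(\bu),\Phi(\bv))<\tfrac12\rmd_\rmS(\bu,\bv)]\le e^{-\Omega(m\,\rmd_\rmS(\bu,\bv))}$. (ii) On the sphere $\|\bu-\bv\|_2=2\sin(\tfrac{\pi}{2}\rmd_\rmS(\bu,\bv))$, and $\tfrac{2}{\pi}t\le\sin t\le t$ on $[0,\tfrac{\pi}{2}]$ gives $2\,\rmd_\rmS(\bu,\bv)\le\|\bu-\bv\|_2\le\pi\,\rmd_\rmS(\bu,\bv)$; together with the triangle inequality for the geodesic metric this makes $\rmd_\rmS$ a $\tfrac12$-Lipschitz function of each argument on $\calS^{n-1}$.

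\textbf{Reduction to a net.} Fix a small constant $c_0$, set $\rho:=c_0\epsilon^2/L$, and take a $\rho$-net $\calC$ of $B_2^k(r)$ with $|\calC|\le(3r/\rho)^k$. Since $G$ is $L$-Lipschitz, $G(\calC)$ is an $L\rho$-net of $K$ and $L\rho=c_0\epsilon^2$. Applying fact (i) to each pair $\bx_0,\bs_0\in G(\calC)$ with $\rmd_\rmS(\bx_0,\bs_0)\ge\epsilon/(2\pi)$ and union bounding over the at most $|\calC|^2$ such pairs shows that, provided $m=\Omega(\tfrac{1}{\epsilon}\log|\calC|)=\Omega(\tfrac{k}{\epsilon}\log\tfrac{Lr}{\epsilon^2})$, with probability $1-e^{-\Omega(\epsilon m)}$ we have $\rmd_\rmH(\Phi(\bx_0),\Phi(\bs_0))\ge\tfrac12\rmd_\rmS(\bx_0,\bs_0)$ for all of them. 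Now let $\bx,\bs\in K$ with $\|\bx-\bs\|_2>\epsilon$, and choose net images $\bx_0,\bs_0\in G(\calC)$ with $\|\bx-\bx_0\|_2,\|\bs-\bs_0\|_2\le L\rho=c_0\epsilon^2$; for $c_0$ small this gives $\|\bx_0-\bs_0\|_2>\epsilon/2$, hence $\rmd_\rmS(\bx_0,\bs_0)>\epsilon/(2\pi)$ by (ii). Define $\Gamma:=\sup_{\bu\in K}\rmd_\rmH(\Phi(\bu),\Phi(\bu_0))$, where $\bu_0\in G(\calC)$ denotes a nearest net image of $\bu$. The triangle inequality for $\rmd_\rmH$ then yields
\begin{equation}
\rmd_\rmH(\Phi(\bx),\Phi(\bs)) \geq \rmd_\rmH(\Phi(\bx_0),\Phi(\bs_0)) - \rmd_\rmH(\Phi(\bx),\Phi(\bx_0)) - \rmd_\rmH(\Phi(\bs),\Phi(\bs_0)) \geq \frac{\epsilon}{4\pi} - 2\Gamma .
\end{equation}
So it remains to show that, with probability $1-e^{-\Omega(\epsilon m)}$, $\Gamma\le c'\epsilon$ for a suitably small constant $c'$.

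\textbf{The main obstacle: uniform local stability of the quantizer.} Bounding $\Gamma$ — equivalently, $\sup_{\bu_0\in G(\calC)}\ \sup_{\bv\in K,\,\|\bv-\bu_0\|_2\le L\rho}\rmd_\rmH(\Phi(\bu_0),\Phi(\bv))$ — is the technical heart, because $\mathrm{sign}(\cdot)$ is discontinuous and, a priori, an arbitrarily small perturbation of $\bu_0$ could flip a constant fraction of the bits. The naive estimate (a flip on row $i$ forces $|\langle\ba_i,\bu_0\rangle|\le\|\ba_i\|_2\|\bv-\bu_0\|_2\le\|\ba_i\|_2 L\rho$, then count such $i$) does not give the claimed sample complexity, since $\|\ba_i\|_2\approx\sqrt{n}$ would introduce a spurious $\log n$ term. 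To obtain the $n$-free bound one must exploit that the perturbed point $\bv$ stays in the $k$-dimensional set $K$: the family of sign-disagreement indicators $\{\ba\mapsto\mathbf 1[\mathrm{sign}\langle\ba,\bu_0\rangle\neq\mathrm{sign}\langle\ba,\bv\rangle]:\bv\in K\}$ has VC-type complexity $O(k)$, independent of $n$, and on the ball $\{\bv:\|\bv-\bu_0\|_2\le L\rho\}$ each such indicator has expectation, hence variance, at most $\rmd_\rmS(\bu_0,\bv)\le\tfrac12 L\rho$. A localized (Bernstein-type / relative-deviation) uniform-convergence bound over this class, followed by a union bound over $G(\calC)$, then gives $\Gamma=O\big(L\rho+\sqrt{L\rho\cdot k\log(Lr/\epsilon^2)/m}+k\log(Lr/\epsilon^2)/m\big)$ with probability $1-e^{-\Omega(\epsilon m)}$; since $L\rho=c_0\epsilon^2$ and $m=\Omega(\tfrac{k}{\epsilon}\log\tfrac{Lr}{\epsilon^2})$, this is $O(\epsilon^{3/2})\le c'\epsilon$. (Equivalently, one may run a multi-scale chaining argument down a cascade of ever-finer nets of $K$, combining the mean bound $\rmd_\rmS\le\tfrac12\|\cdot\|_2$ with Bernstein's inequality at each scale.) Substituting $\Gamma\le c'\epsilon$ into the displayed inequality gives $\rmd_\rmH(\Phi(\bx),\Phi(\bs))\ge\epsilon/(4\pi)-2c'\epsilon=\Omega(\epsilon)$, as required. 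I expect this uniform local-stability step — controlling the discontinuous sign map while keeping $n$ out of the sample complexity — to be by far the hardest part; everything else is the standard covering-plus-Chernoff bookkeeping used for the Set-REC in~\cite{Bor17} and for binary stable embeddings in~\cite{jacques2013robust}.
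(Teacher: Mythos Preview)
Your high-level decomposition differs from the paper's. You reduce to a net and then invoke the triangle inequality for $\rmd_\rmH$, so that all the work lands on the uniform local-stability quantity $\Gamma$. The paper instead uses a \emph{margin} argument: rather than merely lower-bounding $\rmd_\rmH(\Phi(\bx_0),\Phi(\bs_0))$, it invokes the quantitative separation lemma of Plan--Vershynin (Lemma~\ref{lem:noiseless_sep}) to show that on an $\Omega(\epsilon)$ fraction of rows one has $\langle\underline{\ba}_i,\bx_0\rangle>\epsilon/24$ and $\langle\underline{\ba}_i,\bs_0\rangle<-\epsilon/24$. It then controls the \emph{analog} perturbation $\frac{1}{m}\sum_i|\langle\underline{\ba}_i,\bx-\bx_0\rangle|$ by chaining in $\ell_2$ through a cascade of nets (Gaussian norm preservation, Lemma~\ref{lem:norm_pres}), terminating after $l=\lceil\log_2 n\rceil$ levels with a spectral-norm bound on $\bA$ for the residual. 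Because the deviations at each level are \emph{multiplicative} in $\|\bx_i-\bx_{i-1}\|_2$, they sum geometrically to $O(\delta)$; a Markov step then shows that on all but an $O(\epsilon)$ fraction of rows the perturbation stays below the $\epsilon/24$ margin, so those sign disagreements survive. No Hamming triangle inequality, and no direct control of $\rmd_\rmH(\Phi(\bx),\Phi(\bx_0))$, is needed.

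Both of your proposed routes to $\Gamma$ have gaps at the stated sample complexity. The VC claim is unjustified for a general $L$-Lipschitz $G$: a Lipschitz image of a $k$-ball controls metric entropy, not shatter coefficients, and the ambient class of halfspace indicators through the origin has VC dimension $n$, so there is no reason the sign-disagreement family indexed by $\bv\in K$ should have VC-type complexity $O(k)$. The Bernstein chaining on $\rmd_\rmH$ is closer in spirit, but Bernstein for Bernoulli averages yields \emph{additive} deviations of order $k(j+\log(Lr/\delta))/m$ at level $j$ (from the union bound over the $j$-th net), and $\sum_{j\le l}kj/m$ does not converge as $l$ grows; you are still left with a tail term $\rmd_\rmH(\Phi(\bu),\Phi(\bu_l))$ that must be controlled uniformly over $\bu$, and that is exactly the step that reintroduces $n$. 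The paper's switch to $\ell_2$ chaining is precisely what makes the deviations multiplicative (hence summable) and allows the spectral norm to absorb the tail at the price of only $\lceil\log_2 n\rceil$ levels without entering the final bound. In short, you correctly identified the hard step and the $n$-dependence pitfall, but the margin-plus-$\ell_2$-chaining device is the missing idea that closes it.
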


The proof is outlined below, with the full details given in the supplementary material.  From Theorem~\ref{thm:main_noiseless}, we immediately obtain the following corollary giving a recovery guarantee for noiseless 1-bit compressive sensing.

\begin{corollary}
     Let $\bA$ and $G$ follow the same assumptions as those given in Theorem~\ref{thm:main_noiseless}. Then, for a fixed $\epsilon \in (0,1)$, when $m = \Omega\left(\frac{k}{\epsilon}\log \frac{L r }{\epsilon^2}\right)$, the following holds with probability at least $1-e^{-\Omega(\epsilon m)}$: For any $\bx \in G(B_2^k(r))$ and its noiseless measurements $\bb = \Phi(\bx)$, any estimate $\hat{\bx} \in G(B_2^k(r))$ such that $\Phi(\hat{\bx}) = \bb$ satisfies 
     \begin{equation}
      \|\bx - \hat{\bx}\|_2 \le \epsilon.
     \end{equation}
\end{corollary}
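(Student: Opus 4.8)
The plan is to adapt the random-hyperplane-tessellation argument of~\cite{jacques2013robust} from the set of sparse vectors to the range $K := G(B_2^k(r)) \subseteq \calS^{n-1}$ of the Lipschitz generative model. The elementary input is the classical fact that for a single row $\ba_i$ of $\bA$ and any $\bx,\bs \in \calS^{n-1}$ one has $\Pr[\sgn\langle\ba_i,\bx\rangle \ne \sgn\langle\ba_i,\bs\rangle] = \rmd_\rmS(\bx,\bs)$, so that $m\cdot\rmd_\rmH(\Phi(\bx),\Phi(\bs))$ is a $\mathrm{Binomial}(m,\rmd_\rmS(\bx,\bs))$ variable. Because both vectors lie on the sphere, $\|\bx-\bs\|_2 > \epsilon$ gives $\langle\bx,\bs\rangle < 1-\epsilon^2/2$ and hence $\rmd_\rmS(\bx,\bs) = \tfrac1\pi\arccos\langle\bx,\bs\rangle > \epsilon/\pi$. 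For a fixed such pair, the \emph{multiplicative} Chernoff bound yields $\Pr\!\big[\rmd_\rmH(\Phi(\bx),\Phi(\bs)) \le \tfrac12\rmd_\rmS(\bx,\bs)\big] \le e^{-\Omega(\epsilon m)}$. Using the multiplicative rather than additive tail here is the crucial point: it makes the exponent scale as $\epsilon m$ rather than $\epsilon^2 m$, which is what produces the favorable $k/\epsilon$ (rather than $k/\epsilon^2$) factor in the final sample complexity.

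Next I would discretize. Let $M$ be a minimal $\delta$-net of $B_2^k(r)$ with $\delta \asymp \epsilon^2/L$, so that $G(M)$ is an $O(\epsilon^2)$-net of $K$ and $\log|M| \le k\log(1+2r/\delta) = O\!\big(k\log\tfrac{Lr}{\epsilon^2}\big)$. Union-bounding the fixed-pair estimate over the at most $|M|^2$ pairs of net points at Euclidean distance $>\epsilon/2$ shows that, once $m = \Omega\!\big(\tfrac{k}{\epsilon}\log\tfrac{Lr}{\epsilon^2}\big)$, with probability $1 - |M|^2 e^{-\Omega(\epsilon m)} = 1 - e^{-\Omega(\epsilon m)}$ every such pair $\bu,\bv$ obeys $\rmd_\rmH(\Phi(\bu),\Phi(\bv)) \ge \tfrac12\rmd_\rmS(\bu,\bv) = \Omega(\epsilon)$. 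For an arbitrary pair $\bx,\bs \in K$ with $\|\bx-\bs\|_2 > \epsilon$, taking nearest representatives $\bar\bx,\bar\bs \in G(M)$ (so $\|\bx-\bar\bx\|_2,\|\bs-\bar\bs\|_2 = O(\epsilon^2)$ and thus $\|\bar\bx-\bar\bs\|_2 > \epsilon/2$) and invoking the triangle inequality for the Hamming distance reduces everything to a uniform bound $\rmd_\rmH(\Phi(\bx),\Phi(\bar\bx)) \le c\epsilon$ over $\bx \in K$, with $c$ small relative to the constant implicit in $\rmd_\rmH(\Phi(\bar\bx),\Phi(\bar\bs)) = \Omega(\epsilon)$.

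This uniform boundary bound is the step I expect to be the main obstacle. A coordinate of $\Phi$ can flip between $\bx$ and $\bar\bx$ only if $|\langle\ba_i,\bar\bx\rangle| \le \|\ba_i\|_2\,\|\bx-\bar\bx\|_2$, i.e.\ the hyperplane $\ba_i^\perp$ passes within $O(\epsilon^2)$ of $\bar\bx$. Bounding the number of such hyperplanes by the crude quantity $\#\{i : |\langle\ba_i,\bar\bx\rangle| \le C\epsilon^2\sqrt n\}$ (which \emph{is} controllable by a Chernoff bound uniformly over the finite net $G(M)$) would work but introduces a stray $\sqrt n$, equivalently forcing $\delta \lesssim \epsilon/(L\sqrt n)$ and a spurious $\log n$ in the sample complexity. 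Sidestepping this requires exploiting that $K$ is effectively only $k$-dimensional rather than a full Euclidean ball: one refines the covering argument --- e.g.\ with a second, finer net, or by directly bounding the number of hyperplanes meeting the low-dimensional cell of $K$ around each net point, whose rescaled metric entropy is still $O(k)$ --- so that the flip count stays $O(\epsilon m)$ with only a further logarithmic loss in $Lr/\epsilon^2$. Assembling the net lower bound, this flip control, and the Hamming triangle inequality then gives $\rmd_\rmH(\Phi(\bx),\Phi(\bs)) = \Omega(\epsilon)$ whenever $\|\bx-\bs\|_2 > \epsilon$, and in particular $\Phi(\bx) \ne \Phi(\bs)$; the corollary is then immediate, since any $\hat\bx \in K$ with $\Phi(\hat\bx) = \bb = \Phi(\bx)$ must satisfy $\|\bx-\hat\bx\|_2 \le \epsilon$.
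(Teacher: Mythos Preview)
Your high-level plan---discretize $K$, control Hamming distance on net pairs, then extend to arbitrary pairs via the Hamming triangle inequality---matches the paper's, and the corollary itself is indeed immediate from the implication $\|\bx-\bs\|_2>\epsilon\Rightarrow\Phi(\bx)\ne\Phi(\bs)$ (the paper's Theorem~\ref{thm:main_noiseless}). But two differences matter.

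At the net level the paper does \emph{not} use the multiplicative Chernoff bound on the binomial $m\,\rmd_\rmH(\Phi(\bar\bx),\Phi(\bar\bs))$. Instead it invokes a robust-separation lemma of Plan--Vershynin (Lemma~\ref{lem:noiseless_sep}): for $\|\bar\bx-\bar\bs\|_2>\epsilon/2$, each row satisfies $\langle\underline{\ba}_i,\bar\bx\rangle>\epsilon/24$ \emph{and} $\langle\underline{\ba}_i,\bar\bs\rangle<-\epsilon/24$ with probability $\ge\epsilon/24$, and binomial concentration gives an $\Omega(\epsilon)$-fraction of rows with this two-sided \emph{margin}, not merely with opposite signs. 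Your Chernoff argument is valid for bounding $\rmd_\rmH$ on the net, but it is the margin that buys the extension step below.

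The ``main obstacle'' you flag is exactly where the work lies, and your sketch does not close it. The paper never bounds $\rmd_\rmH(\Phi(\bx),\Phi(\bar\bx))$ directly; instead it bounds the $\ell_1$ perturbation $\tfrac1m\sum_i|\langle\underline{\ba}_i,\bx-\bar\bx\rangle|$ and shows it is $O(\epsilon^2)$, so by Markov all but an $O(\epsilon)$-fraction of rows have $|\langle\underline{\ba}_i,\bx-\bar\bx\rangle|\le O(\epsilon)$, too small to erase the $\epsilon/24$ margin from the previous paragraph. The $\ell_1$ bound is obtained via a \emph{chain} of nets $M=M_0\subseteq\cdots\subseteq M_l$ with $l=\lceil\log_2 n\rceil$ and $M_i$ a $(\delta/2^i)$-net: one writes $\bx-\bar\bx=\sum_{i=1}^l(\bx_i-\bx_{i-1})+(\bx-\bx_l)$, applies a Johnson--Lindenstrauss-type concentration (Lemma~\ref{lem:norm_pres}) with a level-dependent deviation $\epsilon_i^2=\Theta(\epsilon+ik/m)$ at each scale, and absorbs the residual $\bx-\bx_l$ via a crude spectral-norm bound on $\bA$. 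This chaining is the concrete realization of your ``second, finer net''; the single-net argument with the crude $\|\ba_i\|_2$ bound does, as you correctly note, lose a factor of $\sqrt{n}$. If you instead insisted on the Hamming-triangle route, you would need an independent proof of the uniform flip bound $\rmd_\rmH(\Phi(\bx),\Phi(\bar\bx))\le c\epsilon$---which is precisely the paper's Corollary~\ref{coro:main_noiseless}, and whose proof rests on the same margin-plus-chaining machinery.
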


In addition, following similar ideas to those in the proof of Theorem~\ref{thm:main_noiseless}, we obtain the following corollary, which provides a supplementary guarantee to that of Theorem~\ref{thm:main_noiseless}. The proof can be found in the supplementary material.

\begin{corollary}\label{coro:main_noiseless}
     Let $\bA$ and $G$ follow the same assumptions as those given in Theorem~\ref{thm:main_noiseless}. Then, for a fixed $\epsilon \in (0,1)$, if $m = \Omega\left(\frac{k}{\epsilon}\log \frac{L r}{\epsilon^2}\right)$, with probability at least $1-e^{-\Omega(\epsilon m)}$, for all $\bx,\bs \in G(B_2^k(r))$, it holds that 
     \begin{equation}
         \|\bx-\bs\|_2 \le \epsilon \Rightarrow \rmd_\rmH(\Phi(\bx),\Phi(\bs)) \le O(\epsilon).
     \end{equation}
\end{corollary}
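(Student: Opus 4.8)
The plan is to mirror the proof of Theorem~\ref{thm:main_noiseless}, simply reversing the direction of the relevant inequalities. The one ingredient that is genuinely "new" is the elementary chord/geodesic comparison in the opposite sense: for unit vectors, $\rmd_\rmS(\bx,\bs)=\frac1\pi\arccos\langle\bx,\bs\rangle=\frac1\pi\arccos\!\big(1-\tfrac12\|\bx-\bs\|_2^2\big)\le\tfrac12\|\bx-\bs\|_2$, so the hypothesis $\|\bx-\bs\|_2\le\epsilon$ forces $\rmd_\rmS(\bx,\bs)\le\epsilon/2$. Combining this with the standard fact that for fixed $\bx,\bs\in\calS^{n-1}$ and a Gaussian row $\ba_i$ one has $\Pr[\mathrm{sign}\langle\ba_i,\bx\rangle\neq\mathrm{sign}\langle\ba_i,\bs\rangle]=\rmd_\rmS(\bx,\bs)$, so that $m\,\rmd_\rmH(\Phi(\bx),\Phi(\bs))$ is $\mathrm{Binomial}(m,\rmd_\rmS(\bx,\bs))$-distributed, a multiplicative Chernoff bound gives, for any \emph{single} pair with $\rmd_\rmS(\bx,\bs)\le\epsilon/2$, that $\rmd_\rmH(\Phi(\bx),\Phi(\bs))\le C\epsilon$ except with probability $e^{-\Omega(\epsilon m)}$.

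The second step is a covering argument to make this uniform. I would take a $\delta$-net $M$ of the latent ball $B_2^k(r)$ with $\delta\asymp\epsilon^2/L$, so that $\log|M|=O\!\big(k\log\frac{Lr}{\epsilon^2}\big)$ and, by $L$-Lipschitzness, the images $G(\bz_0)$ of net points form an $L\delta\asymp\epsilon^2$-net of $G(B_2^k(r))$. Applying the Chernoff estimate above and a union bound over the (at most $|M|^2$) pairs of net points whose images lie within $2\epsilon$ in $\ell_2$ — a family whose log-cardinality is $O\!\big(k\log\frac{Lr}{\epsilon^2}\big)=o(\epsilon m)$ under the stated scaling of $m$ — shows that with probability $1-e^{-\Omega(\epsilon m)}$ one has $\rmd_\rmH(\Phi(G(\bz_0)),\Phi(G(\bz_0')))=O(\epsilon)$ simultaneously over all such net pairs. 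For a general pair $\bx=G(\bz)$, $\bs=G(\bz')$ with $\|\bx-\bs\|_2\le\epsilon$, choosing nearest net points $\bz_0,\bz_0'$ gives $\|G(\bz_0)-G(\bz_0')\|_2\le\epsilon+2L\delta\le2\epsilon$, and the triangle inequality for the Hamming distance reduces the claim to controlling the two "local" terms in $\rmd_\rmH(\Phi(\bx),\Phi(\bs))\le\rmd_\rmH(\Phi(\bx),\Phi(G(\bz_0)))+\rmd_\rmH(\Phi(G(\bz_0)),\Phi(G(\bz_0')))+\rmd_\rmH(\Phi(G(\bz_0')),\Phi(\bs))$.

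The main obstacle is precisely these local terms: bounding $\rmd_\rmH(\Phi(G(\bz)),\Phi(G(\bz_0)))$ uniformly over all $\bz$ and their nearest net points $\bz_0$, since the sign map is discontinuous and the net cannot be "pushed through" it directly. A coordinate $i$ contributes only if $|\langle\ba_i,G(\bz_0)\rangle|\le|\langle\ba_i,G(\bz)-G(\bz_0)\rangle|$, i.e.\ only if the measurement $\langle\ba_i,G(\bz_0)\rangle$ is small compared with the $\asymp\epsilon^2$ perturbation. I would bound the number of such rows by a separate concentration-plus-net argument: for a fixed net direction $\bu$, $\#\{i:|\langle\ba_i,\bu\rangle|\le\tau\}$ is Binomial-dominated with mean $O(\tau m)$, so taking $\tau\asymp\epsilon$ and union bounding over $M$ gives $\#\{i:|\langle\ba_i,G(\bz_0)\rangle|\le\tau\}=O(\epsilon m)$ uniformly; one then needs the perturbation $|\langle\ba_i,G(\bz)-G(\bz_0)\rangle|$ to stay below $\tau$ for every $i$, which is where the care lies (e.g.\ conditioning on $\max_i\|\ba_i\|_2=O(\sqrt n)$, or using a finer multi-scale refinement of the net so as not to lose polynomial factors of $n$ in the exponent). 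Granting this local-stability lemma — which is exactly the ingredient already needed in the proof of Theorem~\ref{thm:main_noiseless}, so the details can be reused — all three terms are $O(\epsilon)$ and the corollary follows. I expect the net construction and the binomial tail bounds to be routine, with essentially all of the work concentrated in making the local-stability step tight.
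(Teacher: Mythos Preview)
Your proposal is correct and shares its key ingredient with the paper's proof---namely, reusing the chain-of-nets $\ell_1$ perturbation bound from Theorem~\ref{thm:main_noiseless}---but the decomposition is organized differently. The paper does not use the Hamming triangle inequality. Instead it proves a ``same-side-with-margin'' analog of Lemma~\ref{lem:noiseless_sep}: if $\|\bx-\bs\|_2\le\epsilon$ and $\ba\sim\calN(\mathbf 0,\bI_n)$, then with probability at least $1-\tfrac{2\epsilon}{3}$ one has both $\langle\ba,\bx\rangle>\epsilon/12$ and $\langle\ba,\bs\rangle>\epsilon/12$ (or both $<-\epsilon/12$). Applied to the net approximants $\bx_0,\bs_0$ and union-bounded over the net, this gives a set $\tilde J$ of size $\ge(1-O(\epsilon))m$ on which the net inner products agree in sign \emph{and} are bounded away from zero; intersecting with the set $I_2$ from Theorem~\ref{thm:main_noiseless} (rows with small perturbation) directly yields $\mathrm{sign}\langle\underline\ba_i,\bx\rangle=\mathrm{sign}\langle\underline\ba_i,\bs\rangle$ on a $(1-O(\epsilon))$-fraction of rows. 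Your route splits the work into a net-to-net term (handled by $\bbE[\rmd_\rmH]=\rmd_\rmS$ plus binomial concentration, no margin needed) and two local terms requiring a separate margin argument on net points; the paper's single lemma packages both conclusions at once, which is slightly more economical but not essentially different.

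One correction to your local-stability sketch: you write that the perturbation $|\langle\underline\ba_i,\bx-\bx_0\rangle|$ must ``stay below $\tau$ for every $i$''. Taken literally this is too strong and is what forces the unwanted $\sqrt n$ factor in your first proposed fix. What you actually need is only that $\#\{i:|\langle\underline\ba_i,\bx-\bx_0\rangle|>\tau\}=O(\epsilon m)$, which follows immediately from the chain-of-nets bound $\tfrac1m\sum_i|\langle\underline\ba_i,\bx-\bx_0\rangle|=O(\delta)=O(\epsilon^2)$ via Markov with $\tau\asymp\epsilon$. With that adjustment your argument goes through cleanly.
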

\begin{remark}
 Combining the results of Theorem~\ref{thm:main_noiseless} and Corollary~\ref{coro:main_noiseless}, we arrive at the so-called Local Binary Embedding property. This property is of independent interest, e.g., see \cite{oymak2015near}, and will also be used as a stepping stone to a stronger binary embedding property in Section~\ref{sec:bese}.  Briefly, the distinction is that the local binary embedding property can be interpreted as ``If $\bx$ is close to $\bs$ then $\Phi(\bx)$ is close to $\Phi(\bs)$ (and vice versa)'', whereas in Section~\ref{sec:bese} we seek a stronger statement of the form ``The distance between $\bx$ and $\bs$ always approximately equals the distance between $\Phi(\bx)$ and $\Phi(\bs)$''.
\end{remark}

\subsubsection{Proof Outline for Theorem~\ref{thm:main_noiseless}}

To prove Theorem~\ref{thm:main_noiseless}, we follow the technique used in~\cite{Bor17} to construct a chain of nets for $G(B_2^k(r))$, and approximate $\bx$ using a point $\bx_0$ in one of the $\epsilon$-nets (and similarly, approximating $\bs$ using $\bs_0$).  We can control various terms consisting of points in the $\epsilon$-nets using probabilistic arguments and the union bound. Before providing a more detailed outline, we state some useful auxiliary results.

\begin{lemma}{\em \hspace{1sp}\cite[Lemma~4.4]{plan2013one}}\label{lem:noiseless_sep}
     Let $\bx,\bs \in \calS^{n-1}$ and assume that $\|\bx- \bs\|_2 \ge \epsilon$ for some $\epsilon > 0$. Let $\ba \sim \calN(\boldsymbol{0},\bI_n)$. Then for $\epsilon_0 = \frac{\epsilon}{12}$, we have 
     \begin{equation}
      \bbP(\langle \ba, \bx \rangle > \epsilon_0, \langle \ba, \bs \rangle < -\epsilon_0) \ge \epsilon_0. 
     \end{equation}
\end{lemma}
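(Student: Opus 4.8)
The plan is to reduce the claim to a two-dimensional Gaussian computation and then lower bound the target probability by the (easily evaluated) sign-disagreement probability, paying a small penalty for the margins $\pm\epsilon_0$. Write $u := \langle \ba, \bx\rangle$ and $v := \langle \ba, \bs\rangle$. Since $\bx,\bs \in \calS^{n-1}$, each of $u,v$ is marginally $\calN(0,1)$, and $(u,v)$ is jointly Gaussian with correlation $\langle \bx,\bs\rangle = \cos\theta$, where $\theta \in (0,\pi]$ denotes the angle between $\bx$ and $\bs$. The first step is to translate the hypothesis $\|\bx-\bs\|_2 \ge \epsilon$ into a lower bound on $\theta$: from $\|\bx-\bs\|_2 = 2\sin(\theta/2) \ge \epsilon$ we get $\sin(\theta/2)\ge \epsilon/2$, and combining with $\sin(\theta/2)\le \theta/2$ yields $\theta \ge \epsilon$.

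Next I would discard the margins. Let $A := \{u>\epsilon_0,\, v<-\epsilon_0\}$ be the target event and $B := \{u>0,\, v<0\}$. Since $A \subseteq B$ and $B\setminus A \subseteq \{0<u\le\epsilon_0\}\cup\{-\epsilon_0\le v<0\}$, a union bound gives
\begin{equation}
 \bbP(A) \ge \bbP(B) - \bbP(0<u\le\epsilon_0) - \bbP(-\epsilon_0\le v<0). \nonumber
\end{equation}
The key quantity $\bbP(B) = \bbP(u>0,v<0)$ is exactly one half of the sign-disagreement probability $\bbP(\mathrm{sign}(u)\ne\mathrm{sign}(v))$, which by the standard random-hyperplane identity equals $\theta/\pi$; hence $\bbP(B) = \theta/(2\pi)$. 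This identity is elementary in the plane spanned by $\bx,\bs$: the two lines orthogonal to $\bx$ and to $\bs$ cut the circle of directions into arcs, and disagreement occurs on two arcs of angular width $\theta$. Each margin term is controlled using the bound $\tfrac{1}{\sqrt{2\pi}}$ on the standard normal density, giving $\bbP(0<u\le\epsilon_0)\le \epsilon_0/\sqrt{2\pi}$ and likewise for $v$.

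Combining these estimates and using $\theta \ge \epsilon$ gives
\begin{equation}
 \bbP(A) \ge \frac{\theta}{2\pi} - \frac{2\epsilon_0}{\sqrt{2\pi}} \ge \frac{\epsilon}{2\pi} - \frac{2\epsilon_0}{\sqrt{2\pi}}. \nonumber
\end{equation}
Substituting $\epsilon_0 = \epsilon/12$ makes the right-hand side equal to $\epsilon\big(\tfrac{1}{2\pi} - \tfrac{1}{6\sqrt{2\pi}}\big) \approx 0.093\,\epsilon$, which exceeds $\epsilon/12 = \epsilon_0$, establishing the claim.

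The only genuinely delicate point is the bookkeeping of constants in this last step: the margin penalty $2\epsilon_0/\sqrt{2\pi}$ must be controlled against the main term $\theta/(2\pi)\ge \epsilon/(2\pi)$, and it is precisely this competition that forces a constant as small as $1/12$ in the choice of $\epsilon_0$ (a larger $\epsilon_0$ would make the penalty overwhelm the main term). The remaining ingredients—the reduction to two dimensions, the relation $\theta\ge\epsilon$, and the random-hyperplane identity—are standard, and the near-antipodal case $\theta=\pi$ needs no separate treatment since the bound only invokes $\theta\ge\epsilon$.
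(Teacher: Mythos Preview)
The paper does not supply its own proof of this lemma; it is quoted directly from Plan and Vershynin (cited as Lemma~4.4 there). Your argument is correct and is essentially the standard one: reduce to the two-dimensional Gaussian $(u,v)$, use the random-hyperplane identity $\bbP(\mathrm{sign}(u)\ne\mathrm{sign}(v))=\theta/\pi$ together with symmetry to get $\bbP(u>0,v<0)=\theta/(2\pi)$, pay for the $\epsilon_0$-margins by bounding the Gaussian density, and check the constants. This is precisely the template the paper itself uses when it proves the companion result (Lemma~\ref{lem:noiseless_sep2}), so your approach is in line with the paper's methods even though the paper defers the present lemma to the cited reference.
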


This result essentially states that if two unit vectors are far apart, then for a random hyperplane, the probability of a certain level of separation can be lower bounded. In addition, we will use the following concentration inequality.

\begin{lemma}\label{lem:norm_pres}
    {\em \cite[Lemma 1.3]{vempala2005random}}
     Let $\bx \in \bbR^n$, and assume that the entries in $\bA \in \bbR^{m\times n}$ are sampled independently from $\calN(0,1)$. Then, for any $\epsilon \in (0,1)$, we have
     \begin{align} 
          & \bbP\left((1-\epsilon)\|\bx\|_2^2 \le \Big\|\frac{1}{\sqrt{m}}\bA\bx\Big\|_2^2\le (1+\epsilon)\|\bx\|_2^2\right) \nonumber \\ 
          & \hspace*{3.7cm} \ge 1-2e^{-\epsilon^2 (1 - \epsilon) m/4}.
     \end{align}
\end{lemma}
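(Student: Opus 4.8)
The plan is to prove this standard Johnson--Lindenstrauss-type bound via a Chernoff (moment-generating-function) argument applied to a chi-squared variable. First I would reduce to the unit-norm case: both sides of the inequality inside the probability scale by $\|\bx\|_2^2$, so the event is invariant under rescaling $\bx$, and I may assume $\|\bx\|_2 = 1$. With $\bx$ a fixed unit vector, each coordinate $(\bA\bx)_i = \langle \ba_i, \bx\rangle$ (where $\ba_i$ is the $i$-th row of $\bA$) is a linear combination of independent standard Gaussians with variance $\|\bx\|_2^2 = 1$, and distinct coordinates are independent because the rows are independent. Hence $Y_i := (\bA\bx)_i \sim \calN(0,1)$ i.i.d., and $S := \|\bA\bx\|_2^2 = \sum_{i=1}^m Y_i^2$ is chi-squared with $m$ degrees of freedom, so that $\big\|\frac{1}{\sqrt m}\bA\bx\big\|_2^2 = S/m$ and $\bbE[S] = m$.

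Next I would control each tail separately. The target event is $\{(1-\epsilon)m \le S \le (1+\epsilon)m\}$, so it suffices to show that each one-sided deviation has probability at most $e^{-\epsilon^2(1-\epsilon)m/4}$ and then take a union bound. Using the moment generating function $\bbE[e^{tY_i^2}] = (1-2t)^{-1/2}$ for $t < 1/2$, the upper tail follows from the Chernoff inequality
\begin{equation}
\bbP(S \ge (1+\epsilon)m) \le e^{-t(1+\epsilon)m}(1-2t)^{-m/2},
\end{equation}
optimized at $t = \frac{\epsilon}{2(1+\epsilon)}$, which yields the exponent $\frac{m}{2}\big(\log(1+\epsilon)-\epsilon\big)$. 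Analogously, the lower tail uses $\bbE[e^{-sY_i^2}] = (1+2s)^{-1/2}$ for $s > 0$, optimized at $s = \frac{\epsilon}{2(1-\epsilon)}$, giving the exponent $\frac{m}{2}\big(\epsilon + \log(1-\epsilon)\big)$.

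Finally I would convert these optimized exponents into the stated clean form using elementary logarithm inequalities. For the upper tail, the alternating expansion gives $\log(1+\epsilon) \le \epsilon - \frac{\epsilon^2}{2} + \frac{\epsilon^3}{3}$, so $\frac{m}{2}\big(\log(1+\epsilon)-\epsilon\big) \le -\frac{m\epsilon^2}{4}\big(1 - \tfrac{2\epsilon}{3}\big) \le -\frac{m\epsilon^2(1-\epsilon)}{4}$, where the last step uses $1 - \tfrac{2\epsilon}{3} \ge 1-\epsilon$ on $(0,1)$. For the lower tail, $\log(1-\epsilon) \le -\epsilon - \frac{\epsilon^2}{2}$ gives $\frac{m}{2}\big(\epsilon + \log(1-\epsilon)\big) \le -\frac{m\epsilon^2}{4} \le -\frac{m\epsilon^2(1-\epsilon)}{4}$. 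A union bound over the two tails then produces the claimed factor of $2$ and completes the argument.

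I expect the only delicate point to be the algebraic bookkeeping in the last step: matching the optimized exponents to the exact constant $\epsilon^2(1-\epsilon)/4$ requires the correct truncated logarithm bounds, together with the observation that the upper tail (not the lower tail) is the binding one. Everything else is routine, since the reduction to a chi-squared variable and the Chernoff method are entirely standard.
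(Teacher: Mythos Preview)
Your proof is correct and is exactly the standard Chernoff/MGF argument for chi-squared concentration that underlies this Johnson--Lindenstrauss-type lemma. The paper does not supply its own proof; it simply cites the result from \cite{vempala2005random}, so there is nothing further to compare.
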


The following definition formally introduces the notion of an $\epsilon$-net, also known as a covering set.

\begin{definition}
    Let $(\calX,d)$ be a metric space, and fix $\epsilon>0$. A subset $S \subseteq \calX$ is said be an {\em $\epsilon$-net} of $\calX$ if, for all $\bx \in \calX$, there exists some $\bs \in S$ such that $d(\bx,\bs) \le \epsilon$.
\end{definition}

With the above auxiliary results in place, the proof of Theorem \ref{thm:main_noiseless} is outlined as follows:
\begin{enumerate}
    \item For a fixed $\delta >0$ and a positive integer $l$, let $M = M_0 \subseteq M_1 \subseteq \ldots \subseteq M_l$ be a chain of nets of $B_2^k(r)$ such that $M_i$ is a $\frac{\delta_i}{L}$-net with $\delta_i = \frac{\delta}{2^i}$. There exists such a chain of nets with \cite[Lemma~5.2]{vershynin2010introduction}
    \begin{equation}
        \log |M_i| \le k \log\frac{4Lr}{\delta_i}. \label{eq:net_size_main}
    \end{equation}
    Then, by the $L$-Lipschitz assumption on $G$, we have for any $i \in [l]$ that $G(M_i)$ is a $\delta_i$-net of $G(B_2^k(r))$. 
    \item For $\bx \in G(B_2^k(r))$, we write $\bx = (\bx - \bx_l) + (\bx_l - \bx_{l-1}) + \ldots + (\bx_1 - \bx_0) + \bx_0$ with $\bx_i \in  G(M_i)$ and $\|\bx- \bx_l\|_2 \le \frac{\delta}{2^l}$, $\|\bx_i - \bx_{i-1}\|_2 \le \frac{\delta}{2^{i-1}}$, $i \in [l]$. The triangle inequality yields $\|\bx - \bx_0\|_2 \le 2\delta$. We apply similar reasoning to a second signal $\bs \in G(B_2^k(r))$ with $\|\bx - \bs\|_2 > \epsilon$, and choose $\delta = O(\epsilon^2)$ sufficiently small so that $\|\bx_0 - \bs_0\|_2 > \frac{\epsilon}{2}$.  This allows us to apply Lemma \ref{lem:noiseless_sep} to get 
    \begin{equation}\label{eq:lemma8Eq_main}
         \bbP\left(\langle \underline{\ba}_i,\bx_0 \rangle > \frac{\epsilon}{24}, \langle \underline{\ba}_i,\bs_0 \rangle < -\frac{\epsilon}{24}\right) \ge \frac{\epsilon}{24}
    \end{equation}
    for any $i \in [m]$, where $\underline{\ba}_i$ is the $i$-th row of $\bA$.  Since the tests are independent, we can use binomial concentration to deduce that at least an $\Omega(\epsilon)$ fraction of the measurements satisfy the condition in \eqref{eq:lemma8Eq_main}, with probability $1 - e^{-\Omega(\epsilon m)}$.  Then, by \eqref{eq:net_size_main} and a union bound, the same holds simultaneously for {\em all} $(\bx',\bs') \in G(M) \times G(M)$ with high probability.
    \item We use the Cauchy-Schwartz inequality and triangle inequality to obtain the following decomposition:
    \begin{align}
         & \frac{1}{m} \sum_{i=1}^m |\langle \underline{\ba}_i, \bx - \bx_0 \rangle| \nonumber \\ 
         & \le \sum_{i=1}^l \Big\|\frac{1}{\sqrt{m}} \bA (\bx_i-\bx_{i-1})\Big\|_2 + \Big\|\frac{1}{\sqrt{m}} \bA (\bx - \bx_l)\Big\|_2, \label{eq:mimic_bd_main}
    \end{align}
    and upper bound the two terms as follows:
    \begin{enumerate}
        \item For the first term, we use Lemma \ref{lem:norm_pres} and a union bound over the signals in the $i$-th and $(i-1)$-th nets to upper bound each summand by $\big(1 + \frac{\epsilon_i}{2}\big)\frac{\delta}{2^{i-1}}$ with high probability, for some $\epsilon_1,\dotsc,\epsilon_l$.  We show that a choice of the form $\epsilon_i^2 = O\big(\epsilon + \frac{ik}{m}\big)$ suffices to take the overall term down to $O(\delta)$.
        \item For the second term, we upper bound the spectral norm of $\bA$ by $2 + \sqrt{\frac{n}{m}}$ with high probability, and show that when this bound holds, $l = O(\log n)$ suffices to bring the overall term down to $O(\delta)$.
    \end{enumerate}
    This argument holds uniformly in $\bx$, and we apply the resulting bound to both signals $\bx,\bs$ under consideration.  The choice $\delta = O(\epsilon^2)$ allows us to deduce that a fraction $1 - O(\epsilon)$ of the measurements satisfy $|\langle \underline{\ba}_i, \bx-\bx_0\rangle| + |\langle \underline{\ba}_i, \bs-\bs_0\rangle| \le O(\epsilon)$.  The implied constant in this fraction of measurements is carefully designed to be smaller than that in the $\Omega(\epsilon)$ fraction of Step 2.
    \item We combine Steps 2 and 3 to show that a fraction $\Omega(\epsilon)$ of the measurements satisfy {\em both} of the conditions therein, and we show that $\mathrm{sign}(\langle \underline{\ba}_i, \bx\rangle) \ne \mathrm{sign}(\langle \underline{\ba}_i, \bs\rangle)$ for every such measurement.  As a result, we find that $\rmd_\rmH(\Phi(\bx),\Phi(\bs)) \ge \Omega(\epsilon)$, as desired.
\end{enumerate}

\subsection{Lower Bound}\label{sec:lb_noiseless}

In this subsection, we address the question of whether the upper bound in Theorem \ref{thm:main_noiseless} can be improved.  To do this, following the approach of \cite{liu2020information}, we consider a specific $L$-Lipschitz generative model, and derive an algorithm-independent lower bound on the number of samples required to accurately recover signals from this model.  This result is formally stated as follows.

\begin{theorem}\label{thm:lb_noiseless2}
     Fix $r > 0$ and $L = \Omega\big(\frac{1}{r}\big)$ with a sufficiently large implied constant, and $\epsilon \in \big(0,\frac{\sqrt 3}{4 \sqrt 2}\big)$.  Then, there exists an $L$-Lipschitz generative model with input domain $B_2^k(r)$ such that for any measurement matrix $\bA$ and decoder producing an estimate $\hat{\bx}$ such that $\sup_{\bx \in G(B_2^k(r))} \|\bx - \hat{\bx}\|_2 \le \epsilon$, it must be the case that $m = \Omega\left(k \log(Lr) + \frac{k}{\epsilon}\right)$.
\end{theorem}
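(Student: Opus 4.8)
We sketch the intended approach, which follows the information-theoretic strategy of \cite{liu2020information}. First observe that if some decoder $\hat\bx=\psi(\mathrm{sign}(\bA\bx))$ achieves $\sup_{\bx\in G(B_2^k(r))}\|\bx-\hat\bx\|_2\le\epsilon$, then no two signals $\bx,\bx'\in G(B_2^k(r))$ with $\|\bx-\bx'\|_2>2\epsilon$ can produce the same sign pattern, since they would then receive the same estimate and violate the triangle inequality; hence $\bx\mapsto\mathrm{sign}(\bA\bx)$ is injective on every subset of $G(B_2^k(r))$ whose points are pairwise separated by more than $2\epsilon$. It therefore suffices to build one $L$-Lipschitz $G\colon B_2^k(r)\to\calS^{n-1}$ (with $n$ at our disposal) whose range contains two such sets $T_1,T_2$, where distinguishing $T_1$ forces $m=\Omega(k\log(Lr))$ via a crude counting bound and distinguishing $T_2$ forces $m=\Omega(k/\epsilon)$ via a sharper one; a single valid decoder must do both, so $m\ge\Omega(k\log(Lr)+k/\epsilon)$.

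I would take $G$ to split $B_2^k(r)$ into two radius-$(r/2)$ halves joined by a short bridge. On the first half, I let the image be a path-connected set whose nodes are $N_1=(cLr)^k$ points $\bu_1,\dots,\bu_{N_1}$ on $\calS^{n-1}$ pairwise separated by more than $2\epsilon$; this needs only $n=\Theta(k\log(Lr))$ (spheres admit packings of exponential size), and such a map is $O(L)$-Lipschitz because one can place $N_1$ grid points at spacing $\Theta(1/L)$ inside the radius-$(r/2)$ ball (there is room since $N_1^{1/k}/L=cr<r/2$ for $c\le\tfrac12$), snake a path through them, send a small ball around each to the corresponding $\bu_j$, and interpolate along short great-circle arcs between consecutive nodes. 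On the second half, I let $G$ be an $O(1/r)=O(L)$-Lipschitz surjection onto a spherical cap $\calC$ of constant geodesic radius lying in a $(k+1)$-dimensional coordinate subspace $\tilV$ orthogonal to the coordinates carrying $T_1$; such a surjection of a radius-$(r/2)$ ball onto a constant-radius cap exists with Lipschitz constant a constant multiple of $1/r$, and this is exactly where $L=\Omega(1/r)$ with a large implied constant is needed. The cap $\calC$ contains a set $T_2$ of $N_2=(c_0/\epsilon)^k$ points separated by more than $2\epsilon$ (volumetric packing bound, valid because $\epsilon<\tfrac{\sqrt3}{4\sqrt2}$ keeps $2\epsilon$ below the relevant threshold), and the two families are mutually separated since $\|\bu_j\|_2^2+\|\bw\|_2^2=2>(2\epsilon)^2$; absorbing the $O(\cdot)$ into the implied constant of $L=\Omega(1/r)$ makes $G$ genuinely $L$-Lipschitz.

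For $T_1$ I would use only the trivial bound that $\bx\mapsto\mathrm{sign}(\bA\bx)$ takes at most $2^m$ values, so $2^m\ge N_1=(cLr)^k$ gives $m=\Omega(k\log(Lr))$. For $T_2$ I would exploit its low-dimensional support: since $\mathrm{sign}(\bA\bx)$ depends only on the direction of $\bx$ and $T_2\subseteq\tilV\cong\bbR^{k+1}$, the rows of $\bA$ restricted to $\tilV$ give a central arrangement of at most $m$ hyperplanes, $\mathrm{sign}(\bA\cdot)$ is constant on each of its faces, and the total number of faces is bounded by a polynomial in $m$ of degree $k$ whose leading coefficient is of order $C^k/(k-1)!$ for an absolute constant $C$. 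Requiring this to be at least $N_2=(c_0/\epsilon)^k$ and taking $k$-th roots — using $((k-1)!)^{1/k}=\Theta(k)$ and $(C^k)^{1/k}=O(1)$ — yields $m=\Omega(k/\epsilon)$. Combining the two bounds, and recalling that one decoder must distinguish $T_1$ and $T_2$ simultaneously, gives $m=\Omega(k\log(Lr)+k/\epsilon)$.

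The main obstacle is the $T_2$ bound: extracting the full $\Omega(k/\epsilon)$, rather than the easy $\Omega(1/\epsilon)$ or $\Omega(k\log(1/\epsilon))$ that a lazier argument produces, hinges on retaining the $1/(k-1)!$ factor in the hyperplane-arrangement face count and on checking that the lower-dimensional faces — those on which some entries of $\bA\bx$ vanish, which cannot be perturbed away because $G$ is chosen before $\bA$ — inflate the count only by a factor exponential in $k$ (hence harmless after a $k$-th root), not by a factor growing in $m$. A secondary technical point is to verify that the two halves can genuinely be stitched into a single $L$-Lipschitz map on $B_2^k(r)$, in particular that a radius-$(r/2)$ ball admits a surjection onto a constant-radius spherical cap with Lipschitz constant $O(1/r)$; this is precisely what the hypothesis $L=\Omega(1/r)$ with a sufficiently large implied constant is there to guarantee.
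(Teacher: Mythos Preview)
Your high-level strategy matches the paper's exactly: build one $L$-Lipschitz $G$ whose range contains two hard subsets, use the trivial $2^m$ count against one to get $m=\Omega(k\log(Lr))$, and use an orthant/face count in a $k$-dimensional subspace against the other to get $m=\Omega(k/\epsilon)$. The paper even cites the same orthant-counting lemma from \cite{jacques2013robust} (giving the bound $2^k\binom{m}{k}$), which after $\binom{m}{k}\le(em/k)^k$ yields $m=\Omega(k/\epsilon)$ just as your $k$-th-root computation does.

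Where you diverge is in the construction of $G$. The paper does not glue two pieces together; instead it adapts the single explicit model from \cite{liu2020information} that outputs \emph{group-sparse} signals (each input coordinate $z_i$ controls one block of the output via a piecewise-linear ``double-triangle'' map, with the last block held at a constant $x_{\mathrm c}>0$ so that normalization is well-defined). The two hard sets are then both subsets of the range of this one model: the $\Omega(k\log(Lr))$ set is obtained via Gilbert--Varshamov on the $(n/k)^{k-1}$ possible group-sparse supports, and the $\Omega(k/\epsilon)$ set is a packing of a single fixed-support slice (which is exactly a portion of a $k$-dimensional sphere). The Lipschitz constant comes out as $L=\Theta(n/(kr))$ by a two-line calculation, so $k\log(n/k)$ and $k\log(Lr)$ coincide automatically---no snake path, no stitching, no separate verification that a half-ball surjects onto a cap. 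Your construction should work, but the paper's is considerably less delicate.

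On your stated ``main obstacle'': the worry about lower-dimensional faces is a non-issue here. The paper's $\mathrm{sign}$ is $\{-1,+1\}$-valued (cf.\ the B$\epsilon$SE definition), so every point---including those with some $\langle\ba_i,\bx\rangle=0$---receives a sign vector already counted among the chambers, and the bound $2^k\binom{m}{k}$ from \cite{jacques2013robust} applies directly without any face inflation.
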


The proof is given in the supplementary material, and is briefly outlined as follows.  We follow the high-level approach from \cite{liu2020information} of choosing a generative model that can produce group-sparse signals, with suitable normalization to ensure that all signals lie on the unit sphere.  Both the $\Omega\left(\frac{k}{\epsilon}\right)$ and $\Omega\left( k \log(Lr) \right)$ lower bounds are established by choosing a {\em hard subset} of signals, and comparing its size to the number of possible output sequences:
\begin{itemize}
    \item For the $\Omega\left(\frac{k}{\epsilon}\right)$ bound, following \cite{acharya2017improved}, we consider packing as many signals as possible onto a unit sphere corresponding to the subspace of an arbitrary single sparsity pattern, and we bound the number of output sequences using a result from \cite{jacques2013robust} on the number of orthants of $\bbR^m$ intersected by a single lower-dimensional subspace.
    \item For the $\Omega\left( k \log(Lr) \right)$ bound, we use the Gilbert-Varshamov bound to show that there exist $e^{\Omega(k\log\frac{n}{k})}$ sequences separated by a constant distance, and trivially upper bound the number of output sequences by $2^m$.  This gives an $m = \Omega\big(k\log\frac{n}{k}\big)$ lower bound, which reduces to $\Omega(k \log (Lr))$ upon calculating the Lipschitz constant of our chosen generative model.
\end{itemize}

\begin{remark}
    In Theorem~\ref{thm:main_noiseless}, the sample complexity derived is $\Omega\left(\frac{k}{\epsilon}\log \frac{L r }{\epsilon^2}\right)$. 
     Comparing with the lower bound provided in Theorem~\ref{thm:lb_noiseless2}, we observe that when $\epsilon = \Theta(1)$ the upper and lower bounds match, and when $\epsilon = o(1)$, they match up to a logarithmic factor in $\frac{Lr}{\epsilon^2}$.
\end{remark}

\begin{remark}
    A recent result in \cite{Flo19} suggests that the presence of separate $\frac{k}{\epsilon}$ and $k \log (Lr)$ terms (as opposed to a combined term such as $\frac{k}{\epsilon} \log(Lr)$) is the correct behavior in certain cases.  Specifically, it is shown that in the case of sparse signals, one can indeed achieve $m = O\big( \frac{k}{\epsilon} + k \log\frac{n}{k} \big)$ by moving beyond i.i.d.~Gaussian measurement matrices.  However, the technique is based on first identifying a superset of the sparse support, and it is unclear what a suitable counterpart would be in the case of general generative models.
\end{remark}

\section{Binary Embeddings and Noisy Measurements}\label{sec:ub_noisy}

Thus far, we have considered recovery guarantees under noiseless measurements.  In this section, we turn to the {\em Binary $\epsilon$-Stable Embedding} (B$\epsilon$SE) property (defined below), which roughly requires the binary measurements to preserve the geometry of signals produced by the generative model.  Similarly to the case of sparse signals \cite{jacques2013robust}, we will see that this permits 1-bit CS recovery guarantees even in the presence of random or adversarial noise.

\begin{definition}
     Let $\epsilon \in (0,1)$. A mapping $\Phi(\cdot)\,:\, \bbR^n \rightarrow \{-1,1\}^m$ is a Binary $\epsilon$-Stable Embedding (B$\epsilon$SE) for vectors in $G(B_2^k(r)) \subseteq \calS^{n-1}$ if 
     \begin{equation}
        \rmd_\rmS(\bx,\bs) - \epsilon \le \rmd_\rmH(\Phi(\bx),\Phi(\bs)) \le \rmd_\rmS(\bx,\bs) + \epsilon
     \end{equation}
    for all $\bx,\bs \in G(B_2^k(r))$, where $\rmd_\rmS$ is the geodesic distance ({\em cf.}, Section \ref{sec:notations}).
\end{definition}

\subsection{Establishing the B$\epsilon$SE Property}\label{sec:bese}

Our main goal in this section is to prove the following theorem, which gives the B$\epsilon$SE property. 

\begin{theorem}\label{thm:main}
     Let $\bA$ and $G$ follow the same assumptions as those given in Theorem~\ref{thm:main_noiseless}. For a fixed $\epsilon  \in (0,1)$, if $m = \Omega\left(\frac{k}{\epsilon^2}\log \frac{Lr}{\epsilon}\right)$, then with probability at least $1-e^{-\Omega(\epsilon^2 m)}$, we have for all $\bx,\bs \in G(B_2^k(r))$ that
     \begin{equation}\label{eq:bese}
      |\rmd_{\rmS}(\bx,\bs) - \rmd_{\rmH}(\Phi(\bx),\Phi(\bs))| \le \epsilon.
     \end{equation}
\end{theorem}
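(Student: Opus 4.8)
The plan is to establish the B$\epsilon$SE property by combining a pointwise concentration argument over a fine net with the ``transfer'' machinery already developed for Theorem~\ref{thm:main_noiseless}. The starting point is the classical identity that for fixed $\bx,\bs \in \calS^{n-1}$ and $\ba \sim \calN(\bzero,\bI_n)$, we have $\bbP(\mathrm{sign}(\langle \ba,\bx\rangle) \ne \mathrm{sign}(\langle \ba,\bs\rangle)) = \rmd_\rmS(\bx,\bs)$. Hence, for a fixed pair $(\bx,\bs)$, $\rmd_\rmH(\Phi(\bx),\Phi(\bs))$ is an average of $m$ i.i.d.\ Bernoulli$(\rmd_\rmS(\bx,\bs))$ random variables, so Hoeffding's inequality gives $\bbP(|\rmd_\rmH(\Phi(\bx),\Phi(\bs)) - \rmd_\rmS(\bx,\bs)| > t) \le 2e^{-2mt^2}$. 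Taking $t = \epsilon/c$ for a suitable constant and a union bound over all pairs in a net $G(M')$ of $G(B_2^k(r))$ of resolution $\delta'$ (with $\log|M'| \le k\log\frac{4Lr}{\delta'}$ via \cite[Lemma~5.2]{vershynin2010introduction} and the Lipschitz property) shows that, as long as $m = \Omega\big(\frac{k}{\epsilon^2}\log\frac{Lr}{\delta'}\big)$, \eqref{eq:bese} holds simultaneously for all net points $\bx_0,\bs_0 \in G(M')$ with probability $1 - e^{-\Omega(\epsilon^2 m)}$.

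The remaining task is to transfer this from the net to all of $G(B_2^k(r))$. Given arbitrary $\bx,\bs$, pick net points $\bx_0,\bs_0$ with $\|\bx-\bx_0\|_2, \|\bs-\bs_0\|_2 \le \delta'$. Two error terms must be controlled. First, $|\rmd_\rmS(\bx,\bs) - \rmd_\rmS(\bx_0,\bs_0)|$: since $\arccos$ restricted to the relevant range has bounded (or at worst mildly singular) derivative and the geodesic distance changes by $O(\|\bx-\bx_0\|_2 + \|\bs-\bs_0\|_2)$ away from antipodal points, choosing $\delta' = O(\epsilon)$ — or, to be safe near the singularity of $\arccos$ at $\pm 1$, perhaps $\delta' = O(\epsilon^2)$ — makes this $O(\epsilon)$. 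Second, and more delicately, $|\rmd_\rmH(\Phi(\bx),\Phi(\bs)) - \rmd_\rmH(\Phi(\bx_0),\Phi(\bs_0))|$: this is bounded by the fraction of rows $i$ for which $\mathrm{sign}(\langle\underline{\ba}_i,\bx\rangle) \ne \mathrm{sign}(\langle\underline{\ba}_i,\bx_0\rangle)$ plus the fraction for which the analogous flip happens for $\bs$ versus $\bs_0$. But this is exactly what Corollary~\ref{coro:main_noiseless} controls: applied at resolution $\delta'$, it gives that $\rmd_\rmH(\Phi(\bx),\Phi(\bx_0)) \le O(\delta')$ for all such close pairs with high probability (reusing the same chain-of-nets construction from the proof outline of Theorem~\ref{thm:main_noiseless}, now at a sufficiently fine scale). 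Taking $\delta' = O(\epsilon)$ (or $O(\epsilon^2)$ as dictated by the geodesic-distance term) and noting $m = \Omega\big(\frac{k}{\epsilon^2}\log\frac{Lr}{\epsilon}\big)$ dominates the sample complexity required for both the net union bound and the invocation of Corollary~\ref{coro:main_noiseless}, a triangle inequality combining the three pieces yields $|\rmd_\rmS(\bx,\bs) - \rmd_\rmH(\Phi(\bx),\Phi(\bs))| \le \epsilon$, completing the proof.

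I expect the main obstacle to be bookkeeping the interplay between the net resolution $\delta'$, the Hoeffding deviation parameter $t$, and the failure probability, so that a single condition $m = \Omega\big(\frac{k}{\epsilon^2}\log\frac{Lr}{\epsilon}\big)$ simultaneously (i) makes the union bound over $|M'|^2$ pairs succeed at confidence $1 - e^{-\Omega(\epsilon^2 m)}$, (ii) supplies enough measurements for the auxiliary local-embedding estimates from Corollary~\ref{coro:main_noiseless} invoked at scale $\delta'$, and (iii) keeps the $\log(1/\delta')$ factor from blowing up — which forces $\delta'$ to be only polynomially small in $\epsilon$, not exponentially. A secondary subtlety is the behavior of $\rmd_\rmS$ near antipodal points: one should check whether signals in $G(B_2^k(r))$ can be arbitrarily close to antipodal, and if so, handle that regime either by a direct argument (when $\rmd_\rmS(\bx,\bs)$ is close to $1$, both $\rmd_\rmH$ and $\rmd_\rmS$ are close to $1$) or by accepting the slightly smaller $\delta' = O(\epsilon^2)$, which the stated sample complexity still comfortably accommodates since it only affects the logarithmic factor.
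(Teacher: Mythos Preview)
Your proposal is correct and follows essentially the same route as the paper: Hoeffding-type concentration (the paper cites this as \cite[Lemma~2]{jacques2013robust}) plus a union bound over an $\epsilon$-net handles the net points, Corollary~\ref{coro:main_noiseless} controls $\rmd_\rmH(\Phi(\bx),\Phi(\bx_0))$ and $\rmd_\rmH(\Phi(\bs),\Phi(\bs_0))$, and a three-term triangle inequality combines everything. Your worry about the $\arccos$ singularity is unnecessary: the paper uses the elementary bound $\rmd_\rmS(\bu,\bv) \le \tfrac{1}{2}\|\bu-\bv\|_2$ (valid globally on $\calS^{n-1}$, proved as Lemma~\ref{lem:simple}), so $|\rmd_\rmS(\bx,\bs)-\rmd_\rmS(\bx_0,\bs_0)| \le \rmd_\rmS(\bx,\bx_0)+\rmd_\rmS(\bs,\bs_0) \le \epsilon$ directly with net resolution $\delta'=\epsilon$, and no $O(\epsilon^2)$ refinement is needed.
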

In the proof of Theorem~\ref{thm:main}, we construct an $\epsilon$-net and use $\bx_0,\bs_0$ in the net to approximate $\bx,\bs$. We use the triangle inequality to decompose $|\rmd_{\rmS}(\bx,\bs) - \rmd_{\rmH}(\Phi(\bx),\Phi(\bs))|$ into three terms: $|\rmd_\rmH(\Phi(\bx),\Phi(\bs)) - \rmd_\rmH(\Phi(\bx_0),\Phi(\bs_0))|$, $|\rmd_\rmH(\Phi(\bx_0),\Phi(\bs_0))-\rmd_\rmS(\bx_0,\bs_0)|$ and $|\rmd_\rmS(\bx,\bs) - \rmd_\rmS(\bx_0,\bs_0)|$. We derive an upper bound for the first term by using Corollary~\ref{coro:main_noiseless} to bound $\rmd_\rmH(\Phi(\bx),\Phi(\bx_0))$ and $\rmd_\rmH(\Phi(\bs),\Phi(\bs_0))$.  The second term is upper bounded using a concentration bound from \cite{jacques2013robust} and a union bound for all $(\bx_0,\bs_0)$ pairs in the $\epsilon$-net. The third term is directly upper bounded via the definition of an $\epsilon$-net.

Before formalizing this outline, we introduce the following useful lemmas. 

\begin{lemma}{\em \hspace{1sp}\cite[Lemma~3.2]{goemans1995improved}}\label{lem:first}
     Suppose that $\br$ is drawn uniformly from the unit sphere $\calS^{n-1}$. Then for any two fixed vectors $\bx, \bs \in \calS^{n-1}$, we have 
     \begin{equation}
      \bbP \left(\mathrm{sign}(\langle \bx, \br \rangle) \ne \mathrm{sign}(\langle \bs, \br \rangle)\right) = \rmd_\rmS(\bx,\bs).
     \end{equation}
\end{lemma}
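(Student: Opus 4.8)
The plan is to reduce the $n$-dimensional computation to a planar one by exploiting the rotational invariance of the uniform measure on $\calS^{n-1}$. The first observation is that both $\mathrm{sign}(\langle \bx, \br\rangle)$ and $\mathrm{sign}(\langle \bs, \br\rangle)$ depend on $\br$ only through its orthogonal projection $\br_P$ onto the two-dimensional subspace $P := \spn\{\bx,\bs\}$, since $\langle \bx, \br\rangle = \langle \bx, \br_P\rangle$ and likewise for $\bs$; the degenerate cases $\bx = \pm\bs$ are immediate (probabilities $0$ and $1$, matching $\rmd_\rmS = 0$ and $\rmd_\rmS = 1$), so I would assume $\bx,\bs$ linearly independent. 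Since $\br_P = \bzero$ only on $\calS^{n-1} \cap P^\perp$, a set of measure zero, one may condition on $\br_P \neq \bzero$ and consider the normalized projection $\hat\br_P := \br_P / \|\br_P\|_2$. I would then argue that $\hat\br_P$ is uniform on the unit circle of $P$: for any rotation $R$ of $P$ extended by the identity on $P^\perp$, $R$ is an orthogonal transformation of $\bbR^n$, so $R\br$ has the same law as $\br$, while $R$ commutes with projection onto $P$; hence $R\hat\br_P$ has the same law as $\hat\br_P$ for every such $R$, and the only rotation-invariant probability measure on a circle is the uniform one.

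Next I would parametrize the unit circle of $P$ by an angle $\phi \in [0, 2\pi)$, choosing coordinates so that $\bx$ lies at $\phi = 0$ and $\bs$ at $\phi = \theta$, where $\theta := \arccos\langle\bx,\bs\rangle \in (0,\pi)$. A one-line computation (writing points of the circle as $(\cos\phi,\sin\phi)$) shows that $\langle\bx,\hat\br_P\rangle > 0$ corresponds to the arc $\phi \in (-\tfrac\pi2,\tfrac\pi2)$ and $\langle\bs,\hat\br_P\rangle > 0$ to the arc $\phi \in (\theta-\tfrac\pi2,\theta+\tfrac\pi2)$, each of length $\pi$. Up to the finitely many tie points (measure zero), the event $\mathrm{sign}(\langle\bx,\br\rangle) \neq \mathrm{sign}(\langle\bs,\br\rangle)$ is exactly the symmetric difference of these two arcs, which is a union of two arcs of total length $2\theta$. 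By the uniformity of $\hat\br_P$, the probability of this event is $2\theta/(2\pi) = \theta/\pi = \tfrac1\pi\arccos\langle\bx,\bs\rangle = \rmd_\rmS(\bx,\bs)$, as claimed.

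There is no real obstacle here: the statement is the classical Goemans--Williamson random-hyperplane identity. The only points that deserve a little care in the write-up are the justification that $\hat\br_P$ is uniform on the circle (for which orthogonal invariance of the sphere measure is exactly the right tool) and the arc-length bookkeeping, so that one lands on $\theta/\pi$ rather than $2\theta/\pi$ or $\theta/(2\pi)$; it is also worth noting that $\theta \le \pi$ ensures the symmetric difference does not wrap around the circle more than once.
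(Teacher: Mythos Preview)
Your argument is correct and is the standard Goemans--Williamson random-hyperplane computation. The paper does not supply its own proof of this lemma; it simply cites \cite[Lemma~3.2]{goemans1995improved} and uses the result as a black box, so there is nothing to compare against beyond noting that your write-up matches the classical proof.
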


Based on this lemma, the following lemma concerning the geodesic distance and the Hamming distance follows via a concentration argument. 

\begin{lemma}{\em \hspace{1sp}\cite[Lemma~2]{jacques2013robust}}\label{lem:second}
     For fixed $\bx,\bs \in \calS^{n-1}$ and $\epsilon >0$, we have 
     \begin{equation}
      \bbP\left(|\rmd_\rmH(\Phi(\bx),\Phi(\bs)) - \rmd_\rmS(\bx,\bs)|\le \epsilon\right) \ge 1-2e^{-\epsilon^2 m}, 
     \end{equation}
    where the probability is with respect to the generation of $\bA$ with i.i.d.~standard normal entries.
\end{lemma}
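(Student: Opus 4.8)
The plan is to recognize the normalized Hamming distance $\rmd_\rmH(\Phi(\bx),\Phi(\bs))$ as an empirical average of i.i.d.\ Bernoulli random variables whose common mean is exactly $\rmd_\rmS(\bx,\bs)$, and then apply a standard Hoeffding-type concentration inequality. Write $\underline{\ba}_1,\dotsc,\underline{\ba}_m$ for the rows of $\bA$, which are i.i.d.\ $\calN(\bzero,\bI_n)$, and set
\begin{equation}
    Z_i := \boldsymbol{1}\big\{ \mathrm{sign}(\langle \underline{\ba}_i,\bx\rangle) \ne \mathrm{sign}(\langle \underline{\ba}_i,\bs\rangle) \big\}, \qquad i \in [m],
\end{equation}
so that $\rmd_\rmH(\Phi(\bx),\Phi(\bs)) = \frac{1}{m}\sum_{i=1}^m Z_i$, and the $Z_i$ are mutually independent because the rows of $\bA$ are.

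First I would compute $\bbE[Z_i]$. With probability one $\underline{\ba}_i \ne \bzero$, and on this event dividing by the positive scalar $\|\underline{\ba}_i\|_2$ leaves the sign of each inner product unchanged; hence, almost surely, $Z_i = \boldsymbol{1}\{\mathrm{sign}(\langle \br_i,\bx\rangle) \ne \mathrm{sign}(\langle \br_i,\bs\rangle)\}$, where $\br_i := \underline{\ba}_i/\|\underline{\ba}_i\|_2$. By the rotational invariance of the standard Gaussian distribution, $\br_i$ is distributed uniformly on $\calS^{n-1}$, so Lemma~\ref{lem:first} gives $\bbE[Z_i] = \bbP\big(\mathrm{sign}(\langle \br_i,\bx\rangle) \ne \mathrm{sign}(\langle \br_i,\bs\rangle)\big) = \rmd_\rmS(\bx,\bs)$. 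Consequently $\bbE\big[\rmd_\rmH(\Phi(\bx),\Phi(\bs))\big] = \rmd_\rmS(\bx,\bs)$.

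Then I would apply Hoeffding's inequality to the i.i.d.\ $\{0,1\}$-valued variables $Z_1,\dotsc,Z_m$: for any $\epsilon > 0$,
\begin{equation}
    \bbP\Big( \big| \tfrac{1}{m}\textstyle\sum_{i=1}^m Z_i - \rmd_\rmS(\bx,\bs) \big| > \epsilon \Big) \le 2 e^{-2m\epsilon^2} \le 2 e^{-m\epsilon^2},
\end{equation}
and taking complements yields exactly the claimed bound (in fact with a better constant in the exponent). Any sub-Gaussian concentration inequality for bounded independent summands would serve equally well here; if one instead wanted a bound that degrades gracefully as $\rmd_\rmS(\bx,\bs) \to 0$, a multiplicative Chernoff bound could be substituted, but for the additive-$\epsilon$ statement as written Hoeffding is the cleanest route.

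There is essentially no serious obstacle in this argument; it is a routine concentration statement. The only point that needs a little care is the reduction from the Gaussian rows of $\bA$ to directions uniform on $\calS^{n-1}$ — namely observing that $\mathrm{sign}$ is invariant under positive rescaling and that the event $\langle \underline{\ba}_i,\bx\rangle = 0$ (where the convention for $\mathrm{sign}(0)$ might otherwise matter) has probability zero — so that Lemma~\ref{lem:first} applies and pins the mean of each $Z_i$ to $\rmd_\rmS(\bx,\bs)$ exactly rather than approximately. Once that identification is made, the independence of the rows of $\bA$ and boundedness of the $Z_i$ close the argument immediately.
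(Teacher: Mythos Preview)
Your proposal is correct and matches the approach the paper indicates: the paper does not give a full proof but states that the lemma ``follows via a concentration argument'' from Lemma~\ref{lem:first}, which is precisely what you do by writing $\rmd_\rmH$ as an average of i.i.d.\ Bernoulli variables with mean $\rmd_\rmS(\bx,\bs)$ and applying Hoeffding.
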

Note that in Lemma \ref{lem:second} the vectors $\bx,\bs$ are fixed in advance, before the sample matrix $\bA$ is drawn. In contrast, for Theorem~\ref{thm:main}, we need to consider drawing $\bA$ first and then choosing $\bx,\bs$ arbitrarily. 

In addition, we have the following simple lemma, which states that the Euclidean norm distance and geodesic distance are almost equivalent for vectors on the unit sphere. 

\begin{lemma}\label{lem:simple}
     For any $\bx,\bs \in \calS^{n-1}$, we have 
     \begin{equation}
      \frac{1}{\pi} \|\bx-\bs\|_2 \le \rmd_\rmS(\bx,\bs) \le \frac{1}{2}\|\bx-\bs\|_2.
     \end{equation}
\end{lemma}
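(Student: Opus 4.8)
The plan is to reduce both inequalities to elementary one-variable trigonometric bounds by parametrizing everything through the angle between $\bx$ and $\bs$. Write $\theta := \arccos\langle \bx,\bs\rangle$, so that $\rmd_\rmS(\bx,\bs) = \theta/\pi$. Since $\bx,\bs \in \calS^{n-1}$, we have $\langle\bx,\bs\rangle \in [-1,1]$ and hence $\theta \in [0,\pi]$, i.e. $\theta/2 \in [0,\pi/2]$. Using $\|\bx-\bs\|_2^2 = \|\bx\|_2^2 + \|\bs\|_2^2 - 2\langle\bx,\bs\rangle = 2 - 2\cos\theta$ together with the half-angle identity $1 - \cos\theta = 2\sin^2(\theta/2)$, I obtain $\|\bx-\bs\|_2 = 2\sin(\theta/2)$.

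Substituting this expression into the claimed chain of inequalities, the left inequality $\tfrac{1}{\pi}\|\bx-\bs\|_2 \le \rmd_\rmS(\bx,\bs)$ becomes $2\sin(\theta/2) \le \theta$, equivalently $\sin u \le u$ for $u := \theta/2 \ge 0$; and the right inequality $\rmd_\rmS(\bx,\bs) \le \tfrac{1}{2}\|\bx-\bs\|_2$ becomes $\theta/\pi \le \sin(\theta/2)$, equivalently $\sin u \ge \tfrac{2u}{\pi}$ for $u := \theta/2 \in [0,\pi/2]$. The first is the standard bound $\sin u \le u$ for $u\ge 0$, which follows, e.g., from the mean value theorem applied to $u \mapsto u - \sin u$ (derivative $1-\cos u \ge 0$). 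The second is Jordan's inequality on $[0,\pi/2]$, which follows from concavity of $\sin$ on that interval: the graph of $\sin$ lies above the chord joining $(0,0)$ and $(\pi/2,1)$, and that chord is precisely the line $y = 2u/\pi$.

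I do not anticipate any genuine obstacle; the only point requiring a moment's care is verifying that the relevant argument $\theta/2$ always lies in $[0,\pi/2]$, which is exactly the interval on which Jordan's inequality is valid — and this holds for every pair $\bx,\bs \in \calS^{n-1}$ precisely because their inner product is bounded by $1$ in absolute value, so the estimate is uniform over the sphere. The writeup will therefore proceed in three short steps: (i) set $\theta = \arccos\langle\bx,\bs\rangle$ and record $\|\bx-\bs\|_2 = 2\sin(\theta/2)$; (ii) apply $\sin(\theta/2) \le \theta/2$ for the lower bound on $\rmd_\rmS$; (iii) apply $\sin(\theta/2) \ge \theta/\pi$ for the upper bound on $\rmd_\rmS$.
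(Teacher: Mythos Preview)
Your proposal is correct and takes essentially the same approach as the paper: both arguments reduce the claim to a pair of elementary one-variable trigonometric bounds, and the paper's cosine inequalities $1-\tfrac{x^2}{2}\le\cos x\le 1-\tfrac{2}{\pi^2}x^2$ are, via the half-angle identity, exactly your $\sin u\le u$ and Jordan's inequality $\sin u\ge \tfrac{2u}{\pi}$. The only cosmetic difference is that the paper parametrizes by the chord length $d=\|\bx-\bs\|_2$ while you parametrize by the angle $\theta$, which makes your write-up slightly cleaner.
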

\begin{proof}
     Let $d = \|\bx-\bs\|_2$. Using the definition of geodesic distance and $\langle \bx, \bs \rangle = \frac{1}{2}\big( \|\bx\|_2^2 + \|\bs\|_2^2 - \|\bx - \bs\|_2^2 \big)$, we have $\rmd_\rmS(\bx,\bs) = \frac{1}{\pi} \arccos \big(1-\frac{d^2}{2}\big)$. It is straightforward to show that $1-\frac{2}{\pi^2}x^2 \ge \cos x \ge 1-\frac{x^2}{2}$ for any $x \in [0,\pi]$. In addition, letting $a  = \arccos \big(1-\frac{d^2}{2}\big)$, we have $1-\frac{a^2}{2} \le \cos a = 1-\frac{d^2}{2} \le 1-\frac{2}{\pi^2} a^2$, which implies $d \le a \le \frac{\pi}{2}d$. Therefore, $\frac{d}{\pi}\le\rmd_\rmS(\bx,\bs) = \frac{a}{\pi} \le \frac{d}{2}$. 
\end{proof}

We now proceed with the proof of Theorem~\ref{thm:main}. 

\begin{proof}[Proof of Theorem~\ref{thm:main}.]
     Let $M$ be an $\frac{\epsilon}{L}$-net of $B_2^k(r)$ such that $\log |M| \le k \log\frac{4Lr}{\epsilon}$. By the $L$-Lipschitz property of $G$,  $G(M)$ is a $\epsilon$-net of $G(B_2^k(r))$. Hence, for any $\bx, \bs \in G(B_2^k(r)) $, there exist $\bx_0,\bs_0 \in G(M)$ such that 
     \begin{equation}\label{eq:bxbybsbt}
      \|\bx- \bx_0\|_2 \le \epsilon, \quad \|\bs-\bs_0\|_2 \le \epsilon.
     \end{equation}
    By Lemma~\ref{lem:simple}, we have 
    \begin{equation}
     \rmd_\rmS(\bx,\bx_0) \le \frac{\epsilon}{2}, \quad \rmd_\rmS(\bs,\bs_0) \le \frac{\epsilon}{2},
    \end{equation}
    and hence, by the triangle inequality,
    \begin{equation}\label{eq:first}
     \left|\rmd_{\rmS}(\bx,\bs)-\rmd_\rmS(\bx_0,\bs_0)\right| \le \epsilon.
    \end{equation}
    In addition, by Lemma~\ref{lem:second} and the union bound, if we set $m = \Omega\left(\frac{k}{\epsilon^2} \log \frac{Lr}{\epsilon} \right)$, with probability at least $1-|M|^2 e^{-\epsilon^2 m} = 1-e^{-\Omega(\epsilon^2 m)}$, we have 
    \begin{equation}\label{eq:second}
     |\rmd_\rmH(\Phi(\bu),\Phi(\bv)) - \rmd_\rmS(\bu,\bv)|\le \epsilon
    \end{equation}
    for all $(\bu,\bv) \in G(M) \times G(M)$.
    Furthermore, by Corollary~\ref{coro:main_noiseless} and~\eqref{eq:bxbybsbt}, if $m = \Omega\left(\frac{k}{\epsilon}\log \frac{Lr}{\epsilon^2}\right)$,  then with probability at least $1-e^{-\Omega(\epsilon m)}$, we have  
    \begin{equation}\label{eq:Phi_bxbybsbt}
        \rmd_\rmH(\Phi(\bx),\Phi(\bx_0)) \le C\epsilon, \quad \rmd_\rmH(\Phi(\bs),\Phi(\bs_0)) \le C\epsilon,
    \end{equation}
    where $C$ is a positive constant. (Note that the result of Corollary \ref{coro:main_noiseless} holds {\em uniformly} for signals in $G(B_2^k(r))$.)  Using the two upper bounds in \eqref{eq:Phi_bxbybsbt} and applying the triangle inequality in the same way as \eqref{eq:first}, we obtain
    \begin{equation}
        |\rmd_\rmH(\Phi(\bx),\Phi(\bs)) - \rmd_\rmH(\Phi(\bx_0),\Phi(\bs_0))| \le 2C\epsilon. \label{eq:third}
    \end{equation}

    Combining~\eqref{eq:first},~\eqref{eq:second} and~\eqref{eq:third}, we obtain that if $m = \Omega\left(\frac{k}{\epsilon^2} \log \frac{Lr}{\epsilon} + \frac{k}{\epsilon}\log \frac{Lr}{\epsilon^2}\right)$, with probability at least $1-e^{-\Omega(\epsilon^2 m)} -  e^{-\Omega(\epsilon m)}$, the following holds uniformly in $\bx,\bs \in G(B_2^k(r))$:
    \begin{align}
     & |\rmd_\rmH(\Phi(\bx),\Phi(\bs)) - \rmd_\rmS(\bx,\bs)| \nonumber \\
     & \quad\le |\rmd_\rmH(\Phi(\bx),\Phi(\bs)) - \rmd_\rmH(\Phi(\bx_0),\Phi(\bs_0))| \nonumber  \\ 
     & \quad\qquad + |\rmd_\rmH(\Phi(\bx_0),\Phi(\bs_0))-\rmd_\rmS(\bx_0,\bs_0)| \nonumber \\
     & \quad\qquad + |\rmd_\rmS(\bx,\bs) - \rmd_\rmS(\bx_0,\bs_0)| \nonumber \\
     & \quad\le 2C\epsilon + \epsilon + \epsilon \nonumber \\
     &= 2(C+1)\epsilon. \label{eq:combine3}
    \end{align}
    Then, recalling that $Lr = \Omega(1)$, and scaling $\epsilon$ by $2(C+1)$, we deduce that when $m = \Omega\left(\frac{k}{\epsilon^2} \log \frac{Lr}{\epsilon}\right)$, we have with probability at least $1-e^{-\Omega(\epsilon^2 m)}$ that 
    \begin{equation}
     |\rmd_\rmH(\Phi(\bx),\Phi(\bs)) - \rmd_\rmS(\bx,\bs)| \le \epsilon.
    \end{equation}
\end{proof}

\subsection{Implications for Noisy 1-bit CS}

Here we demonstrate that Theorem~\ref{thm:main} implies recovery guarantees for 1-bit CS in the case of noisy measurements. In particular, we have the following corollary. 
\begin{corollary}\label{coro:noisy_exp}
    Let $\bA$ and $G$ follow the same assumptions as those given in Theorem~\ref{thm:main_noiseless}. For an $\epsilon  \in (0,1)$, if $m = \Omega\left(\frac{k}{\epsilon^2}\log \frac{Lr}{\epsilon}\right)$, then with probability at least $1-e^{-\Omega(\epsilon^2 m)}$, we have the following: For any $\bx \in G(B_2^k(r))$, if $\tilde{\bb} := \mathrm{sign}(\bA \bx)$ and $\bb$ is any vector of corrupted measurements satisfying $\rmd_\rmH(\bb,\tilde{\bb})\le \tau_1$, then any $\hat{\bx} \in G(B_2^k(r))$ with $\rmd_\rmH(\mathrm{sign}(\bA\hat{\bx}),\bb) \le \tau_2$ satisfies
    \begin{equation}
        \rmd_{\rmS}\left(\bx,\hat{\bx}\right) \le \epsilon + \tau_1 + \tau_2. \label{eq:approx_dS}
    \end{equation}
\end{corollary}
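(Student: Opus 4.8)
The plan is to combine the B$\epsilon$SE property of Theorem~\ref{thm:main} with the triangle inequality for the Hamming distance; essentially nothing else is needed. First I would invoke Theorem~\ref{thm:main} with the stated scaling $m = \Omega\left(\frac{k}{\epsilon^2}\log\frac{Lr}{\epsilon}\right)$, so that with probability at least $1 - e^{-\Omega(\epsilon^2 m)}$ the inequality \eqref{eq:bese} holds simultaneously for \emph{all} pairs of signals in $G(B_2^k(r))$. We then work on this single high-probability event for the rest of the argument. Since $\bx$ and $\hat{\bx}$ both lie in $G(B_2^k(r))$ by hypothesis, the B$\epsilon$SE bound applies to the pair $(\bx,\hat{\bx})$, and in particular its upper-bound half gives
\[
\rmd_\rmS(\bx,\hat{\bx}) \le \rmd_\rmH(\Phi(\bx),\Phi(\hat{\bx})) + \epsilon.
\]

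Next I would rewrite the Hamming term in terms of the measurement vectors. By definition $\Phi(\bx) = \mathrm{sign}(\bA\bx) = \tilde{\bb}$ and $\Phi(\hat{\bx}) = \mathrm{sign}(\bA\hat{\bx})$, and $\rmd_\rmH$ is a genuine metric on $\{-1,1\}^m$ (it counts the fraction of coordinates in which two $\pm 1$ vectors disagree, hence satisfies the triangle inequality). Applying the triangle inequality together with the two hypotheses $\rmd_\rmH(\bb,\tilde{\bb}) \le \tau_1$ and $\rmd_\rmH(\mathrm{sign}(\bA\hat{\bx}),\bb) \le \tau_2$ yields
\[
\rmd_\rmH(\Phi(\bx),\Phi(\hat{\bx})) = \rmd_\rmH(\tilde{\bb},\mathrm{sign}(\bA\hat{\bx})) \le \rmd_\rmH(\tilde{\bb},\bb) + \rmd_\rmH(\bb,\mathrm{sign}(\bA\hat{\bx})) \le \tau_1 + \tau_2.
\]
Substituting this into the previous display immediately gives $\rmd_\rmS(\bx,\hat{\bx}) \le \epsilon + \tau_1 + \tau_2$, which is \eqref{eq:approx_dS}.

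Since each step is a direct substitution, I do not expect any real obstacle; the proof is essentially a one-line deduction from Theorem~\ref{thm:main}. The only points requiring (minor) care are: (i) no fresh randomness or additional union bound is introduced, so the probability statement is exactly the one inherited from Theorem~\ref{thm:main}; (ii) one should note explicitly that $\rmd_\rmH$ obeys the triangle inequality so that the two noise-tolerance hypotheses can be chained; and (iii) only the upper-bound direction of the B$\epsilon$SE inequality \eqref{eq:bese} is used here, the lower-bound half being irrelevant for this corollary. If anything, the ``work'' in this section lies entirely in Theorem~\ref{thm:main}, and this corollary merely packages it into a noisy-recovery statement analogous to the sparsity-based result of~\cite{jacques2013robust}.
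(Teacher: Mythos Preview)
Your proposal is correct and matches the paper's proof essentially line for line: invoke Theorem~\ref{thm:main} to get the B$\epsilon$SE inequality, then use the triangle inequality for $\rmd_\rmH$ to bound $\rmd_\rmH(\Phi(\bx),\Phi(\hat{\bx})) \le \tau_1 + \tau_2$, and combine. Your added remarks about only needing the upper-bound half of \eqref{eq:bese} and introducing no fresh randomness are accurate and slightly more explicit than the paper's version.
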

\begin{proof}
 By Theorem~\ref{thm:main}, we have
 \begin{equation}\label{eq:bese_coro}
  |\rmd_{\rmS}(\bx,\hat{\bx}) - \rmd_{\rmH}(\Phi(\bx),\Phi(\hat{\bx}))| \le \epsilon.
 \end{equation}
 In addition, by the triangle inequality and $\tilde{\bb} = \Phi(\bs)$,
 \begin{equation}\label{eq:triangle_coro}
  \rmd_{\rmH}(\Phi(\bx),\Phi(\hat{\bx})) \le \rmd_\rmH(\bb,\tilde{\bb})+ \rmd_\rmH(\mathrm{sign}(\bA\hat{\bx}),\bb) \le \tau_1 + \tau_2.
 \end{equation}
 Combining~\eqref{eq:bese_coro}--\eqref{eq:triangle_coro} gives \eqref{eq:approx_dS}. 
\end{proof} 

Corollary~\ref{coro:noisy_exp} gives a guarantee for arbitrary (possibly adversarial) perturbations of $\tilde{\bb}$ to produce $\bb$.  Naturally, this directly implies high-probability bounds on the recovery error in the case of random noise. For instance, if $\bb = \mathrm{sign}(\bA\bx + \bxi)$ with $\bxi \sim \calN(\boldsymbol{0},\sigma^2 \bI_m)$, then by~\cite[Lemma 4]{jacques2013robust}, we have for any $\gamma > 0$ that
\begin{equation}
 \bbP \left[\rmd_\rmH(\tilde{\bb},\bb) > \frac{\sigma}{2} + \gamma\right] \le e^{-2m\gamma^2}.
\end{equation}  
Analogous results can be derived for other noise distributions such as Poisson noise, random sign flips, and so on. In addition, we may derive upper bounds on $\rmd_\rmH(\mathrm{sign}(\bA\hat{\bx}),\bb)$ for specific algorithms. For example, for algorithms with consistent sign constraints, we have $\rmd_\rmH(\mathrm{sign}(\bA\hat{\bx}),\bb) = 0$, which corresponds to $\tau_2 = 0$ in Corollary~\ref{coro:noisy_exp}.

\section{Neural Network Generative Models}\label{sec:nn_apply}

In this section, we consider feedforward neural network generative models. Such a model $\tilde{G}: \bbR^{k} \rightarrow  \bbR^n$ with $d$ layers can be written as 
\begin{equation}\label{eq:nn_function}
 \tilde{G}(\bz) = \phi_d\left(\phi_{d-1}\left(\cdots \phi_2( \phi_1(\bz,\btheta_1), \btheta_2)\cdots, \btheta_{d-1}\right), \btheta_d\right),
\end{equation}
where $\bz \in B_2^k(r)$, $\phi_i(\cdot)$ is the functional mapping corresponding to the $i$-th layer, and $\btheta_i = (\bW_i,\bb_i)$ is the parameter pair for the $i$-th layer:  $\bW_i \in \bbR^{n_i \times n_{i-1}}$ is the matrix of weights, and $\bb_i \in \bbR^{n_i}$ is the vector of offsets, where $n_i$ is the number of neurons in the $i$-th layer.  Note that $n_0 = k$ and $n_d = n$. Defining $\bz^0 = \bz$ and $\bz^i= \phi_i(\bz^{i-1},\btheta_i)$, we set $\phi_i(\bz^{i-1},\btheta_i) = \phi_i(\bW_i \bz^{i-1}+ \bb_i)$, $i = 1,2,\ldots,d$, for some operation $\phi_i(\cdot)$ applied element-wise. 

The function $\phi_i(\cdot)$ is referred to as the activation function for the $i$-th layer, with popular choices including (i) the ReLU function, $\phi_i(x) = \max(x,0)$; (ii) the Sigmoid function, $\phi_i(x) = \frac{1}{1+e^{-x}}$; and (iii) the Hyperbolic tangent function with $\phi_i(x) = \frac{e^{x}-e^{-x}}{e^{x}+e^{-x}}$. Note that for each of these examples, $\phi_i(\cdot)$ is $1$-Lipschitz.

To establish Lipschitz continuity of the entire network, we can utilize the following standard result.
\begin{lemma}\label{lem:lip_comp}
    Consider any two functions $f$ and $g$. If $f$ is $L_f$-Lipschitz and $g$ is $L_g$-Lipschitz, then their composition $f \circ g$ is $L_f L_g$-Lipschitz.
\end{lemma}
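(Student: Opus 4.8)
The plan is to unwind the definition of Lipschitz continuity directly; no machinery is needed. Recall that a function $h$ is $L$-Lipschitz if $\norm{h(\bu) - h(\bv)} \le L\norm{\bu - \bv}$ for all $\bu,\bv$ in its domain. I would begin by fixing two arbitrary points $\bx, \by$ in the domain of $g$ (which is also the domain of $f \circ g$), and then setting $\bu = g(\bx)$ and $\bv = g(\by)$, observing that these lie in the domain of $f$ precisely because the composition $f \circ g$ is assumed to be well-defined.

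Next I would apply the two Lipschitz bounds in sequence. Applying the bound for $f$ to the pair $(\bu,\bv)$ gives $\norm{f(g(\bx)) - f(g(\by))} = \norm{f(\bu) - f(\bv)} \le L_f \norm{\bu - \bv} = L_f\norm{g(\bx) - g(\by)}$. Applying the bound for $g$ to the pair $(\bx,\by)$ gives $\norm{g(\bx) - g(\by)} \le L_g \norm{\bx - \by}$. Chaining these two inequalities yields $\norm{(f\circ g)(\bx) - (f\circ g)(\by)} \le L_f L_g \norm{\bx - \by}$, which is exactly the asserted bound since $\bx$ and $\by$ were arbitrary.

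There is essentially no obstacle in this argument; the only point requiring a line of care is the domain/codomain bookkeeping, i.e.\ ensuring that $g(\bx)$ always lies in the domain of $f$, but this is automatic from the hypothesis that the composition is defined. The same one-line argument goes through verbatim in a general metric-space setting, but the Euclidean-norm version above is all that is needed for the neural-network application, where it will be invoked repeatedly together with the $1$-Lipschitzness of the activation functions and the (spectral-norm) Lipschitz constants of the affine layers $\bz \mapsto \bW_i \bz + \bb_i$.
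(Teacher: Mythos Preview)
Your argument is correct and is exactly the standard two-line chaining of the Lipschitz inequalities. The paper in fact states this lemma as a ``standard result'' and does not supply a proof at all, so there is nothing to compare against; your write-up would serve perfectly well as the omitted proof.
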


Suppose that $\tilde{G}$ is defined as in~\eqref{eq:nn_function} with at most $w$ nodes per layer. We assume that all weights are upper bounded by $W_{\max}$ in absolute value, and that the activation functions are $1$-Lipschitz. Then, from Lemma~\ref{lem:lip_comp}, we obtain that $\tilde{G}$ is $\tilde{L}$-Lipschitz with $\tilde{L} = \left(w W_{\max}\right)^d$ ({\em cf.}~\cite[Lemma~8.5]{Bor17}).  Since we consider normalized signals having unit norm, we limit our attention to signals in ${\rm Range}(\tilde{G})$ with norm at least $R_{\min}$, for some small $R_{\min} > 0$, so as to control the Lipschitz constant of $\frac{\tilde{G}(\bz)}{\|\tilde{G}(\bz)\|_2}$.  We obtain the following from Theorem~\ref{thm:main}. 



\begin{theorem}\label{thm:main_nn}
     Suppose that $\bA \in \bbR^{m\times n}$ is generated with $A_{ij} \overset{i.i.d.}{\sim} \calN(0,1)$, and the generative model $\tilde{G}: B_2^k(r) \rightarrow \bbR^n$ is defined as in~\eqref{eq:nn_function} with at most $w$ nodes per layer. Suppose that all weights are upper bounded by $W_{\max}$ in absolute value, and that the activation function is $1$-Lipschitz. 
     Then, for fixed $\epsilon  \in (0,1)$ and $R_{\rm min} > 0$, if $m = \Omega\big(\frac{k}{\epsilon^2}\log \frac{r(wW_{\max})^d }{\epsilon R_{\min}}\big)$, with probability at least $1-e^{-\Omega(\epsilon^2 m)}$, we have the following: For any $\bx \in \tilde{G}(B_2^k(r)) \setminus B_2^n(R_{\rm min})$, let $\tilde{\bb} := \mathrm{sign}(\bA \bx)$ be its uncorrupted measurements, and let $\bb$ be any corrupted measurements satisfying $\rmd_\rmH(\bb,\tilde{\bb})\le \tau_1$. Then, any $\hat{\bx} \in \tilde{G}(B_2^k(r)) \setminus B_2^n(R_{\rm min})$ with $\rmd_\rmH(\mathrm{sign}(\bA\hat{\bx}),\bb) \le \tau_2$ satisfies
     \begin{equation}
      \rmd_{\rmS}\left(\frac{\bx}{\|\bx\|_2},\frac{\hat{\bx}}{\|\hat{\bx}\|_2}\right) \le \epsilon  + \tau_1 + \tau_2.
     \end{equation}
\end{theorem}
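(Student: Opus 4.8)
The plan is to derive Theorem~\ref{thm:main_nn} from Corollary~\ref{coro:noisy_exp} (equivalently, Theorem~\ref{thm:main}) applied to the \emph{normalized} generative model
\[
 G(\bz) := \frac{\tilde{G}(\bz)}{\|\tilde{G}(\bz)\|_2}
\]
on the restricted latent domain $\calD_{R_{\min}} := \{\bz \in B_2^k(r) : \|\tilde{G}(\bz)\|_2 \ge R_{\min}\}$, whose image lies in $\calS^{n-1}$ by construction. Two things must be checked: (i) that $G$ is Lipschitz on $\calD_{R_{\min}}$ with a constant $L = O\!\big(\tfrac{(wW_{\max})^d}{R_{\min}}\big)$; and (ii) that the net-based arguments underlying Theorem~\ref{thm:main} are unaffected by restricting the latent domain from $B_2^k(r)$ to the subset $\calD_{R_{\min}}$.

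For (i), I would first recall from the discussion preceding the theorem (Lemma~\ref{lem:lip_comp} applied $d$ times, using $\|\bW_i\|_{2\to 2} \le \|\bW_i\|_F \le w W_{\max}$, the $1$-Lipschitzness of the activations, and the fact that offsets do not affect the Lipschitz constant) that $\tilde{G}$ is $\tilde{L}$-Lipschitz with $\tilde{L} = (wW_{\max})^d$. Next, the normalization map $\bv \mapsto \bv/\|\bv\|_2$ is $\tfrac{2}{R_{\min}}$-Lipschitz on $\{\bv : \|\bv\|_2 \ge R_{\min}\}$: writing
\[
 \frac{\bv}{\|\bv\|_2} - \frac{\bw}{\|\bw\|_2} = \frac{\bv - \bw}{\|\bv\|_2} + \frac{\bw}{\|\bv\|_2\,\|\bw\|_2}\big(\|\bw\|_2 - \|\bv\|_2\big)
\]
and using the reverse triangle inequality yields $\big\|\tfrac{\bv}{\|\bv\|_2}-\tfrac{\bw}{\|\bw\|_2}\big\|_2 \le \tfrac{2}{\|\bv\|_2}\|\bv-\bw\|_2 \le \tfrac{2}{R_{\min}}\|\bv-\bw\|_2$. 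Composing via Lemma~\ref{lem:lip_comp} shows that $G$ is $L$-Lipschitz on $\calD_{R_{\min}}$ with $L = \tfrac{2(wW_{\max})^d}{R_{\min}}$.

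For (ii), the key observation is that any subset of $B_2^k(r)$ has $\delta$-covering number at most that of $B_2^k(r)$ up to a constant factor (and the covering points may be taken inside the subset at the cost of a further factor of $2$ in $\delta$), so the chain-of-nets bound \eqref{eq:net_size_main}, the net used in the proof of Theorem~\ref{thm:main}, and all associated union bounds go through with $B_2^k(r)$ replaced by $\calD_{R_{\min}}$ and $L$ by $\tfrac{2(wW_{\max})^d}{R_{\min}}$. Hence Theorem~\ref{thm:main}, and therefore Corollary~\ref{coro:noisy_exp}, apply to $G$ over $\calD_{R_{\min}}$ once $m = \Omega\big(\tfrac{k}{\epsilon^2}\log\tfrac{Lr}{\epsilon}\big) = \Omega\big(\tfrac{k}{\epsilon^2}\log\tfrac{r(wW_{\max})^d}{\epsilon R_{\min}}\big)$ (the constant inside the logarithm being absorbed). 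It then remains to translate back: for $\bx \in \tilde{G}(B_2^k(r)) \setminus B_2^n(R_{\min})$ we have $\bx = \tilde{G}(\bz)$ with $\bz \in \calD_{R_{\min}}$ and $G(\bz) = \bx/\|\bx\|_2$ (and likewise for $\hat{\bx}$); moreover $\mathrm{sign}(\bA\bx) = \mathrm{sign}(\bA(\bx/\|\bx\|_2))$ since positive scaling preserves signs, so $\tilde{\bb} = \Phi(\bx/\|\bx\|_2)$ and $\mathrm{sign}(\bA\hat{\bx}) = \Phi(\hat{\bx}/\|\hat{\bx}\|_2)$. Applying Corollary~\ref{coro:noisy_exp} to the unit-norm signals $\bx/\|\bx\|_2$ and $\hat{\bx}/\|\hat{\bx}\|_2$ gives $\rmd_\rmS\big(\tfrac{\bx}{\|\bx\|_2},\tfrac{\hat{\bx}}{\|\hat{\bx}\|_2}\big) \le \epsilon + \tau_1 + \tau_2$, which is the claim.

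I expect the only genuine subtlety to be the domain-restriction bookkeeping in step (ii)---confirming that replacing $B_2^k(r)$ by $\calD_{R_{\min}}$ leaves the covering-number estimates (and hence the sample complexity and failure probability) unchanged up to constants---together with the minor point that Theorem~\ref{thm:main} presupposes $Lr = \Omega(1)$, which here reads $\tfrac{2(wW_{\max})^d r}{R_{\min}} = \Omega(1)$ and holds automatically whenever $wW_{\max} \ge 1$ (true for any nontrivial network) or $R_{\min}$ is taken small enough. Everything else is substitution of the Lipschitz constant and simplification of the logarithm.
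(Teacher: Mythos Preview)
Your proposal is correct and follows essentially the same route as the paper: define the normalized model $G = \tilde{G}/\|\tilde{G}\|_2$ on the restricted domain where $\|\tilde{G}(\bz)\|_2 \ge R_{\min}$, compute its Lipschitz constant by composing the $(wW_{\max})^d$-Lipschitzness of $\tilde{G}$ with the Lipschitz constant of the normalization map, observe that the covering-number bounds survive restriction to a subset of $B_2^k(r)$, and invoke Corollary~\ref{coro:noisy_exp}. The only cosmetic differences are that the paper obtains the sharper constant $1/R_{\min}$ for the normalization map (via the inequality $\big\|\tfrac{\bx}{\|\bx\|_2}-\tfrac{\bx'}{\|\bx'\|_2}\big\|_2 \le \max\{\tfrac{1}{\|\bx\|_2},\tfrac{1}{\|\bx'\|_2}\}\|\bx-\bx'\|_2$) rather than your $2/R_{\min}$, and that you make explicit the sign-invariance $\mathrm{sign}(\bA\bx)=\mathrm{sign}(\bA(\bx/\|\bx\|_2))$ which the paper leaves implicit; neither affects the result.
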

\begin{proof}
    Let $\tilde{\calD}= \{\bz \in B_2^k(r)\,:\, \|\tilde{G}(\bz)\|_2 > R_{\min}\}$, and define $G(\bz) := \frac{\tilde{G}(\bz)}{\|\tilde{G}(\bz)\|_2}$ for $\bz \in \tilde{\calD}$. Observe that $G(\tilde{\calD}) \subseteq \calS^{n-1}$.  In addition, by Lemma~\ref{lem:lip_comp} and the assumption $\|\tilde{G}(\bz)\|_2 \ge R_{\min}$, we have that $G$ is $L$-Lipschitz on $\tilde{\calD}$ with $L = \frac{(w W_{\max})^d}{R_{\min}}$.  Recall that Theorem~\ref{thm:main} and Corollary \ref{coro:noisy_exp} are proved by forming an $\epsilon$-net of $B_2^k(r)$.  Since an $\epsilon$-net for a given set implies a $2\epsilon$-net of the same size (or smaller) for any subset, the same results hold (with a near-identical proof) when the domain of the generative model is restricted to $\tilde{\calD} \subseteq B_2^k(r)$.  Thus, the desired result follows by applying Corollary~\ref{coro:noisy_exp} to $G$ with the restricted domain $\tilde{\calD}$. 
\end{proof}
\begin{remark}
    The dependence on $R_{\min}$ in the sample complexity is very mild; for instance, under the typical scaling of $(w W_{\max})^d = n^{O(d)}$ \cite{Bor17}, the scaling laws remain unchanged even with $R_{\min} = \frac{1}{n^{O(d)}}$. In addition, for common types of data such as images and audio, vectors with a very low norm are not of significant practical interest (e.g., a flat audio signal or an image of all black pixels).
\end{remark}

\section{Efficient Algorithm \& Numerical Example} \label{sec:algo}

In \cite{jacques2013robust}, an algorithm termed Binary Iterative Hard Thresholding (BIHT)  was proposed for 1-bit compressive sensing of sparse signals.  In the case of a generative prior, we can adapt the BIHT algorithm by replacing the hard thresholding step by a projection onto the generative model.  This gives the following iterative procedure:
\begin{equation}
    \bx^{(t+1)}  = \calP_G \left(\bx^{(t)} + \lambda \bA^T(\bb - \mathrm{sign}(\bA\bx^{(t)}))\right), \label{eq:pgd}
\end{equation}
where $\calP_G(\cdot)$ is the projection function onto $G(B_2^k(r))$, $\bb$ is the observed vector, $\bx^{(0)} = \mathbf{0}$, and $\lambda >0$ is a parameter.  A counterpart to \eqref{eq:pgd} for the linear model was also recently proposed in \cite{Sha18}.

It has been shown in~\cite{jacques2013robust} that the quantity $\bA^T(\mathrm{sign}(\bA\bx)-\bb)$ is a subgradient of the convex one-sided $\ell_1$-norm $2\|[\bb \odot (\bA\bx)]_{-}\|_1$, where ``$\odot$'' denotes the element-wise product and $[x]_{-} = \min\{x,0\}$. Therefore, the BIHT algorithm can be viewed as a projected gradient descent (PGD) algorithm that attempts to minimize $\|[\bb \odot (\bA\bx)]_{-}\|_1$. In addition, as argued in~\cite{jacques2013quantized}, there exist certain promising properties suggesting the stability and convergence of the BIHT algorithm. 

{\bf Numerical example.} While our main contributions are theoretical, in the following we present a simple proof-of-concept experiment for the MNIST dataset.  The dataset consists of 60,000 handwritten images, each of size 28x28 pixels. The variational autoencoder (VAE) model uses a pre-trained VAE with a latent dimension of $k=20$. The encoder and decoder both have the structure of a fully connected neural network with two hidden layers.

The projection step in \eqref{eq:pgd} is approximated using gradient descent, performed using the Adam optimizer with $200$ steps and a learning rate of $0.1$.  The update of $\bx^{(t)}$ in \eqref{eq:pgd} is done with a step size of $\lambda = 1.25$, with a total of 15 iterations.  To reduce the impact of local minima, we choose the best estimate among $4$ random restarts.  The reconstruction error is calculated over 10 images by averaging the per-pixel error in terms of the $\ell_2$-norm.  In accordance with our theoretical results, we focus on i.i.d.~Gaussian measurements.

\begin{figure}[!tbp]
  \centering
  \begin{minipage}[b]{0.475\textwidth}
    \includegraphics[width=\textwidth]{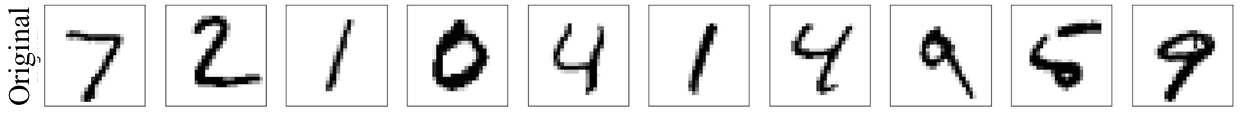}
    \includegraphics[width=\textwidth]{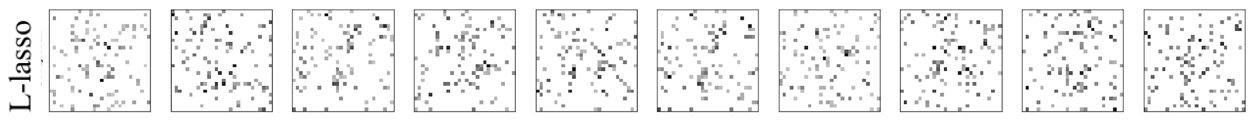}
    \includegraphics[width=\textwidth]{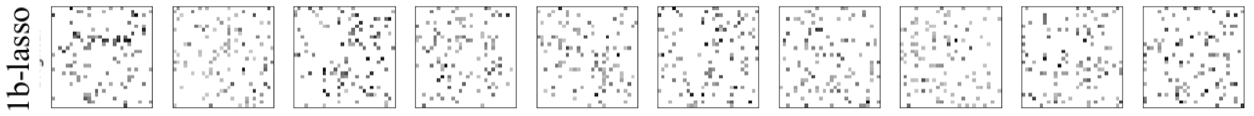}
    \includegraphics[width=\textwidth]{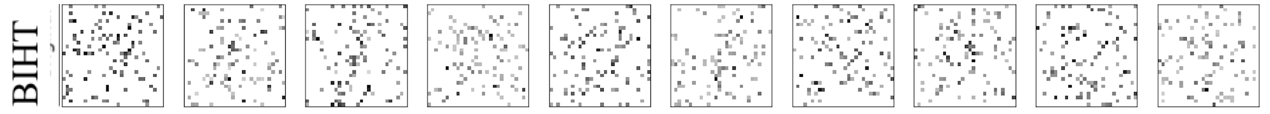}
    \includegraphics[width=\textwidth]{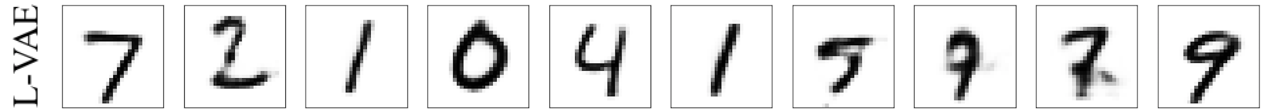}
    \includegraphics[width=\textwidth]{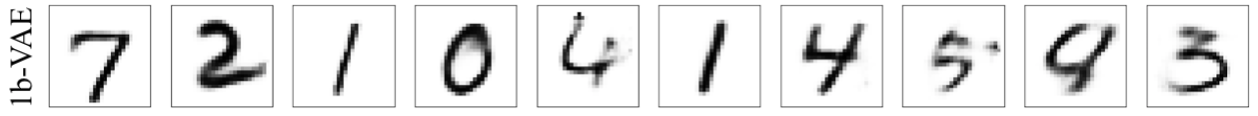}
  \end{minipage} \quad
  \begin{minipage}[b]{0.475\textwidth}
    \includegraphics[width=\textwidth]{Original.png}
    \includegraphics[width=\textwidth]{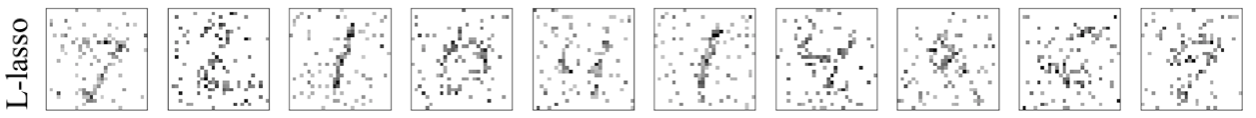}
    \includegraphics[width=\textwidth]{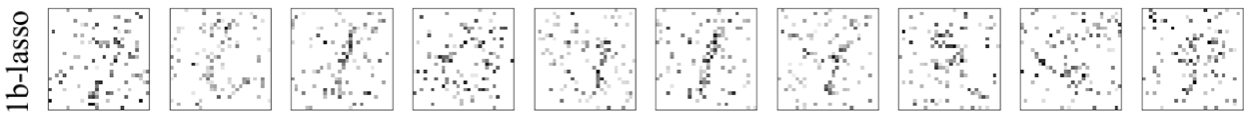}
    \includegraphics[width=\textwidth]{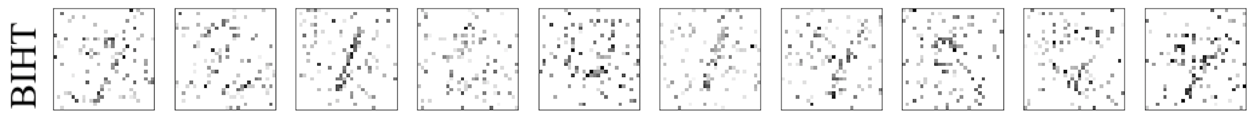}
    \includegraphics[width=\textwidth]{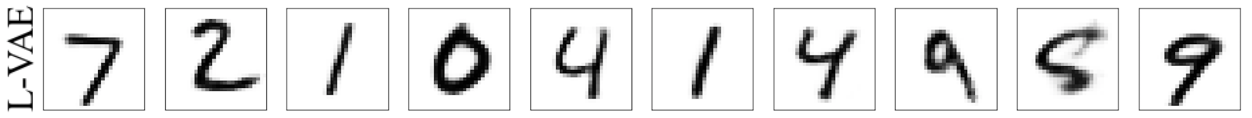}
    \includegraphics[width=\textwidth]{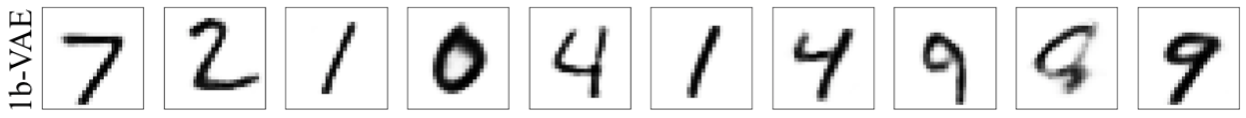}
  \end{minipage}
\caption{Examples of reconstructed images with 50 measurements (top) and reconstruction with 200 measurements (bottom) on the MNIST dataset. \label{fig:recon}}
   \vspace*{-2ex}
\end{figure}

\begin{figure}[!tbp]
  \centering
  \includegraphics[width=0.5\columnwidth]{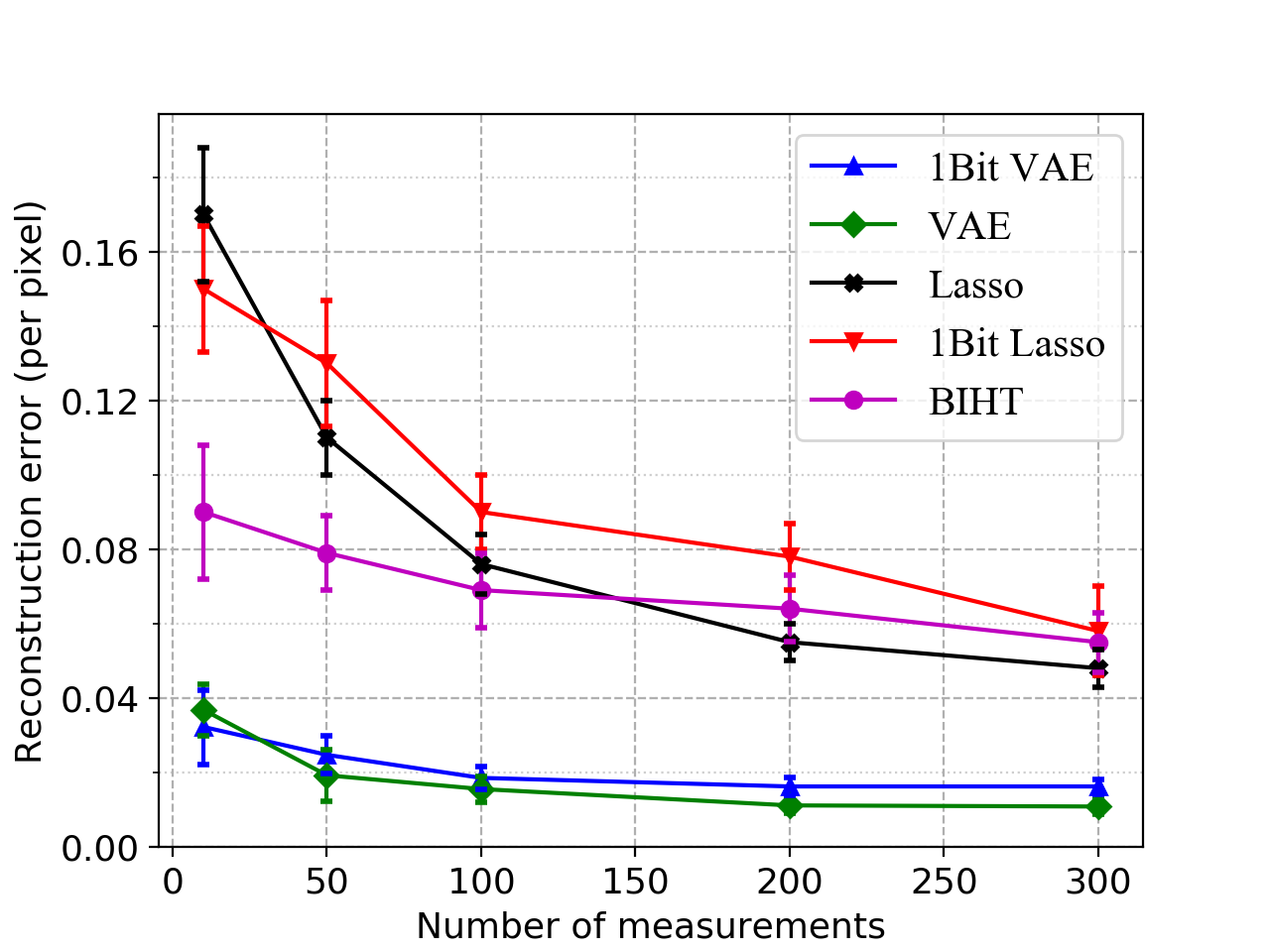}
  \caption{Average reconstruction error (per pixel) of the images from the MNIST dataset shown in Figure \ref{fig:recon}.  The error bars indicate half of a standard deviation.  \label{fig:error}}
   \vspace*{-2ex}
\end{figure}

In Figure \ref{fig:recon}, we provide some examples of reconstructed images under both linear measurements and 1-bit measurements,\footnote{We used the PGD implementation of \cite{Sha18} available at \url{https://github.com/shahviraj/pgdgan}, along with the pre-trained generative model and Lasso implementation of \cite{Bor17} available at \url{https://github.com/AshishBora/csgm}, and adapted these to their 1-bit variants.} using sparsity based algorithms (Lasso \cite{Tib96}, 1-bit Lasso \cite{plan2013one}, and BIHT \cite{jacques2013robust}) and generative prior based algorithms (linear PGD \cite{Sha18} and 1-bit PGD as per \eqref{eq:pgd}).  For convenience, we re-state the Lasso and 1-bit Lasso optimization problems here: We solve
\begin{equation}
    {\rm minimize}~~ \|\bx\|_1 \text{ ~~s.t.~~ } \bb = \bA\bx
\end{equation}
for linear measurements, and we solve
\begin{equation}
    {\rm minimize}~~ \|\bx\|_1 \text{ ~~s.t.~~ } \bb = {\rm sign}(\bA\bx), \|\bA\bx\|_1 = m
\end{equation}
for 1-bit measurements.  As discussed in \cite{plan2013one}, the second constraint can be viewed as normalization that prevents a zero or near-zero solution.

We observe from Figure \ref{fig:recon} that all three sparsity-based methods attain poor reconstructions even when $m=200$.  In contrast, the generative prior based methods attain mostly accurate reconstructions even when $m=50$, and highly accurate constructions when $m=200$.  

In this experiment, the loss due to the 1-bit quantization appears to be mild, and this is corroborated in Figure \ref{fig:error}, where we plot the average per-pixel reconstruction error as a function of the number of measurements, averaged over the images show in Figure \ref{fig:recon}.  For both generative model based methods, the reconstruction error eventually saturates, most likely due to {\em representation error} (i.e., the generative model being unable to perfectly produce the signal) \cite{Bor17}.  In addition, the curve for 1-bit measurements saturates to a slightly higher value than that of linear measurements, most likely due to the impossibility of estimating the norm.  However, at least for this particular dataset, the gap between the two remains small.

\section{Conclusion}

We have established sample complexity bounds for both noiseless and noisy 1-bit compressive sensing with generative models.  In the noiseless case, we showed that the sample complexity for $\epsilon$-accurate recovery is between $\Omega\left(k \log(Lr) + \frac{k}{\epsilon}\right)$ and $O\big( \frac{k}{\epsilon}\log \frac{L r }{\epsilon^2} \big)$.  For noisy measurements, we showed that the binary $\epsilon$-stable embedding property can be attained with $m = O\left(\frac{k}{\epsilon^2}\log \frac{Lr}{\epsilon}\right)$.  An immediate direction for further research is to establish a matching lower bound for the latter result, though we are unaware of any such result even for the simpler case of sparse signals. 

\appendices

\section{Proof of Theorem \ref{thm:main_noiseless} (Noiseless Upper Bound)}

    For fixed $\delta >0$ and a positive integer $l$, let $M = M_0 \subseteq M_1 \subseteq \ldots \subseteq M_l$ be a chain of nets of $B_2^k(r)$ such that $M_i$ is a $\frac{\delta_i}{L}$-net with $\delta_i = \frac{\delta}{2^i}$.  There exists such a chain of nets with \cite[Lemma~5.2]{vershynin2010introduction}
    \begin{equation}
        \log |M_i| \le k \log\frac{4Lr}{\delta_i}. \label{eq:net_size}
    \end{equation}
    By the $L$-Lipschitz assumption on $G$, we have for any $i \in [l]$ that $G(M_i)$ is a $\delta_i$-net of $G(B_2^k(r))$. 

    For any pair of points $\bx,\bs \in G(B_2^k(r))$ with $\|\bx-\bs\|_2 > \epsilon$, we write 
    \begin{align}
         \bx & = (\bx - \bx_l) + (\bx_l - \bx_{l-1}) + \ldots + (\bx_1 - \bx_0) + \bx_0, \\
         \bs& =  (\bs - \bs_l) + (\bs_l - \bs_{l-1}) + \ldots + (\bs_1 - \bs_0) + \bs_0,
    \end{align}
    where $\bx_i, \bs_i \in G(M_i)$ for all $i \in [l]$, and $\|\bx- \bx_l\| \le \frac{\delta}{2^l}$, $\|\bs- \bs_l\| \le \frac{\delta}{2^l}$, $\|\bx_i - \bx_{i-1}\|_2 \le \frac{\delta}{2^{i-1}}$, and $\|\bs_i - \bs_{i-1}\|_2 \le \frac{\delta}{2^{i-1}}$ for all $i \in [l]$. Therefore, the triangle inequality gives
    \begin{equation}\label{eq:bxbxbsbt}
        \|\bx-\bx_0\|_2 < 2\delta, \quad \|\bs-\bs_0\|_2 < 2\delta.
    \end{equation}
    Let $\delta = c_1\epsilon^2$ with $c_1>0$ being a sufficiently small constant. From \eqref{eq:bxbxbsbt}, the triangle inequality, and $\|\bx-\bs\|_2 > \epsilon$, we obtain 
    \begin{equation}
         \|\bx_0-\bs_0\|_2 > \frac{\epsilon}{2}.
    \end{equation}
    This separation between $\bx_0$ and $\bs_0$ permits the application of Lemma~\ref{lem:noiseless_sep}.  Specifically, letting $\underline{\ba}_i \in \bbR^n$ be the $i$-th row of $\bA$, Lemma~\ref{lem:noiseless_sep} (with $\frac{\epsilon}{2}$ in place of $\epsilon$) implies for each $i \in [m]$ that
    \begin{equation}\label{eq:lemma8Eq}
         \bbP\left(\langle \underline{\ba}_i,\bx_0 \rangle > \frac{\epsilon}{24}, \langle \underline{\ba}_i,\bs_0 \rangle < -\frac{\epsilon}{24}\right) \ge \frac{\epsilon}{24}.
    \end{equation}
    In accordance with the event inside the probability, and adopting the generic notation $(\bx',\bs') \in G(M) \times G(M)$ for an arbitrary pair in the net with $\|\bx'-\bs'\|_2>\frac{\epsilon}{2}$, we define 
     \begin{equation}
          \tilde{I}(\bx',\bs') := \bigg\{i \in [m] \,:\, \langle \underline{\ba}_i,\bx' \rangle > \frac{\epsilon}{24}, \langle \underline{\ba}_i,\bs' \rangle < -\frac{\epsilon}{24} \bigg\}. \label{eq:Itil}
     \end{equation}
     By~\eqref{eq:lemma8Eq} and a standard concentration inequality for binomial random variables~\cite[Theorem.~A.1.13]{alon2004probabilistic}, we have 
     \begin{equation}
        \bbP\left(|\tilde{I}(\bx',\bs')| < \frac{\epsilon m}{48}\right) \le e^{-\frac{\epsilon m}{192}}. 
     \end{equation}
    Recall from \eqref{eq:net_size} that $\log |M| \le k \log\frac{4Lr}{\delta}$.  By the union bound, for $m = \Omega\left(\frac{k}{\epsilon}\log \frac{L r}{\delta}\right)$, we have with probability at least $1-\exp(-\Omega(\epsilon m))$ that for {\em all} $(\bx',\bs') \in G(M) \times G(M)$ with $\|\bx'-\bs'\|_2 > \frac{\epsilon}{2}$, the following holds:
    \begin{equation}\label{eq:num_lb}
         |\tilde{I}(\bx',\bs')| \ge \frac{\epsilon m}{48}.
    \end{equation}
    
    We now turn to bounding the following normalized summation:
    \begin{align}
         \frac{1}{m} \sum_{i=1}^m |\langle \underline{\ba}_i, \bx - \bx_0 \rangle|
         & \le \left(\frac{1}{m}\sum_{i=1}^m \langle \underline{\ba}_i, \bx-\bx_0\rangle^2 \right)^{1/2} \\
         & = \Big\|\frac{1}{\sqrt{m}} \bA (\bx-\bx_0)\Big\|_2 \\ 
         & = \Big\|\frac{1}{\sqrt{m}} \bA \left(\sum_{i=1}^l (\bx_i-\bx_{i-1})\right) +  \frac{1}{\sqrt{m}} \bA (\bx - \bx_l)\Big\|_2 \\
         & \le \sum_{i=1}^l \Big\|\frac{1}{\sqrt{m}} \bA (\bx_i-\bx_{i-1})\Big\|_2 + \Big\|\frac{1}{\sqrt{m}} \bA (\bx - \bx_l)\Big\|_2. \label{eq:mimic_bd}
    \end{align}
    Using $\sqrt{1+\varepsilon} \le 1+\frac{\varepsilon}{2}$ (for $\varepsilon \ge -1$), Lemma~\ref{lem:norm_pres}, and the union bound, we have that for any $\epsilon_1,\ldots, \epsilon_l \in (0,1)$, with probability at least $1-\sum_{i=1}^l |M_i|\times |M_{i-1}| \times e^{-\Omega(\epsilon_i^2 m)}$, the following holds for all $i \in [l]$:
    \begin{equation}
         \Big\|\frac{1}{\sqrt{m}} \bA (\bx_i-\bx_{i-1})\Big\|_2 \le \left(1+\frac{\epsilon_i}{2}\right) \|\bx_i-\bx_{i-1}\|_2.  \label{eq:l2_bound1}
    \end{equation}
    uniformly in $\bx_i \in G(M_i)$ and $\bx_{i-1} \in G(M_{i-1})$.  In addition, \eqref{eq:net_size} gives $\log \left( |M_i|\times |M_{i-1}|\right) \le 2i k + 2k\log\frac{4Lr}{\delta}$. As a result, if we choose the $\epsilon_i$ to satisfy $\epsilon_i^2 = \Theta(\epsilon + \frac{ik}{m})$, then we have
    \begin{align}
         \sum_{i=1}^l |M_i|\times |M_{i-1}| \times e^{-\Omega(\epsilon_i^2 m)} & \le e^{-\Omega(\epsilon m)} \sum_{i=1}^l e^{-c_2 ik} \\
         & = e^{-\Omega(\epsilon m)},
    \end{align}
    where $c_2$ is a positive constant.  

    Recall that $m = \Omega\left(\frac{k}{\epsilon}\log \frac{L r}{\delta}\right)$, and that we assume $L = \Omega\big(\frac{1}{r}\big)$ with a sufficiently large implied constant; these together imply $m = \Omega\big( \frac{k}{\epsilon} \big)$.  Hence, we have
    \begin{align}
         \sum_{i=1}^l \Big\|\frac{1}{\sqrt{m}} \bA (\bx_i-\bx_{i-1})\Big\|_2 & \le \sum_{i=1}^l \left(1+\frac{\epsilon_i}{2}\right)\|\bx_i-\bx_{i-1}\|_2 \label{eq:l2_step1} \\
        & \le \sum_{i=1}^l \left(1+\frac{\epsilon_i}{2}\right) \frac{\delta}{2^{i-1}} \label{eq:l2_step2} \\
        & \le \delta \sum_{i=1}^l \frac{\sqrt{\epsilon}}{2^{i-1}} \times O\Big( \sqrt{1 + \frac{ik}{m\epsilon}}\Big) \label{eq:l2_step3} \\
        & = O(\sqrt{\epsilon}\delta) \label{eq:l2_step4a} \\
        & = O(\delta), \label{eq:l2_step4}
    \end{align}
    where \eqref{eq:l2_step1} follows from \eqref{eq:l2_bound1}, \eqref{eq:l2_step2} uses the definition of $\bx_i$, \eqref{eq:l2_step3} follows from the above choice of $\epsilon_i$, and \eqref{eq:l2_step4a} from the above-established fact $m = \Omega\big( \frac{k}{\epsilon} \big)$, and \eqref{eq:l2_step4} since we selected $\delta = c_1 \epsilon^2$.

    Recall that $\|\cdot\|_{2 \to 2}$ is the spectral norm.  By \cite[Corollary~5.35]{vershynin2010introduction}, we have $\big\|\frac{1}{\sqrt{m}} \bA\big\|_{2 \to 2} \le 2+ \sqrt{\frac{n}{m}}$ with probability at least $1-e^{-m/2}$. Hence, choosing $l = \lceil \log_2 n \rceil$, we have with probability at least $1-e^{-m/2}$ that
    \begin{equation}
         \Big\|\frac{1}{\sqrt{m}} \bA (\bx - \bx_l)\Big\|_{2 \to 2} \le \left(2+ \sqrt{\frac{n}{m}}\right) \frac{\delta}{2^l} = O(\delta), \label{eq:above2}
    \end{equation}
    where we used the fact that $\|\bx - \bx_l\| \le \frac{\delta}{2^i}$.
    
    Substituting \eqref{eq:l2_step4} and \eqref{eq:above2} into \eqref{eq:mimic_bd}, we deduce that with probability at least $1-e^{-\Omega(\epsilon m)}$,
    \begin{equation}
        \frac{1}{m} \sum_{i=1}^m |\langle \underline{\ba}_i, \bx - \bx_0 \rangle| \le c_3 \delta,
    \end{equation}
    where $c_3 >0$ is a constant.  Note that this holds {\em uniformly} in $\bx \in G(B_2^k(r))$, since all preceding high-probability events only concerned signals in the chain $M_0,\dotsc,M_l$ of nets, and were proved uniformly with respect to those nets.  Taking the inequality for both $\bx$ and $\bs$ and adding the two together, we obtain 
    \begin{equation}\label{eq:noise_vector_bds}
         \frac{1}{m} \sum_{i=1}^m |\langle \underline{\ba}_i, \bx-\bx_0\rangle| + \frac{1}{m} \sum_{i=1}^m |\langle \underline{\ba}_i, \bs-\bs_0 \rangle| \le 2 c_3\delta.
    \end{equation}

    To combine the preceding findings, let $I_1 = \tilde{I}(\bx_0,\bs_0)$ ({\em cf.}, \eqref{eq:Itil}), and 
    \begin{equation}\label{eq:I2_def}
         I_2 = \left\{i \in [m] : |\langle \underline{\ba}_i, \bx-\bx_0\rangle| + |\langle \underline{\ba}_i, \bs-\bs_0\rangle| \le \frac{192 c_3 \delta}{\epsilon}\right\}.
    \end{equation}
    By~\eqref{eq:num_lb} and~\eqref{eq:noise_vector_bds}, we have that when $m = \Omega\left(\frac{k}{\epsilon}\log \frac{L r}{\delta}\right) = \Omega\left(\frac{k}{\epsilon}\log \frac{L r}{\epsilon^2}\right)$ (recalling the choice $\delta = c_1 \epsilon^2$), with probability at least $1-\exp(-\Omega(\epsilon m))$,
    \begin{equation}\label{eq:bds_I1I2}
        |I_1| \ge \frac{\epsilon m}{48}, \quad |I_2^c| \le \frac{\epsilon m}{96}.
    \end{equation}
    Defining $I := I_1 \cap I_2$, it follows that 
    \begin{equation}\label{eq:lb_cardI}
         |I| \ge |I_1| - |I_2^c| \ge \frac{\epsilon m}{96}.
    \end{equation}
    In addition, for any $i \in I$, we have 
    \begin{align}
         \langle \underline{\ba}_i, \bx \rangle  & =  \langle \underline{\ba}_i, \bx_0\rangle + \langle \underline{\ba}_i, \bx -\bx_0\rangle \\
         & > \frac{\epsilon}{24} - \frac{192c_3\delta}{\epsilon} \label{eq:recallDelta}\\
         & = \frac{\epsilon}{24} -  192 c_1 c_3 \epsilon \label{eq:c1_small1} \\
        & > \frac{\epsilon}{25}, \label{eq:c1_small}
    \end{align}
    where~\eqref{eq:c1_small1} holds because $\delta =c_1 \epsilon^2$, and \eqref{eq:c1_small} follows by choosing $c_1$ sufficiently small.   By a similar argument, we have for $i \in I$ that $\langle \underline{\ba}_i, \bs\rangle < -\frac{\epsilon}{25}$. 
    Therefore, for any $i \in I$, we have $1 = \mathrm{sign}(\langle \underline{\ba}_i, \bx\rangle) \ne \mathrm{sign}(\langle \underline{\ba}_i, \bs\rangle) = -1$, and \eqref{eq:lb_cardI} gives
    \begin{equation}
        \rmd_\rmH(\Phi(\bx),\Phi(\bs)) \ge \frac{|I|}{m} \ge \frac{\epsilon}{96},
    \end{equation}
    which leads to the desired result in Theorem \ref{thm:main_noiseless}.

\section{Proof of Corollary~\ref{coro:main_noiseless} (Supplementary Guarantee to Theorem \ref{thm:main_noiseless})}  

Similar to Lemma~\ref{lem:noiseless_sep}, we have the following lemma.

\begin{lemma}\label{lem:noiseless_sep2}
     Let $\bx,\bs \in \calS^{n-1}$ and assume that $\|\bx- \bs\|_2 \le \epsilon$ for some $\epsilon > 0$. If $\ba \sim \calN(\boldsymbol{0},\bI_n)$, then for $\epsilon_0 = \frac{\epsilon}{12}$, we have 
     \begin{align}
      &\bbP\Big( (\langle \ba, \bx \rangle > \epsilon_0, \langle \ba, \bs \rangle > \epsilon_0) \cup (\langle \ba, \bx \rangle < -\epsilon_0, \langle \ba, \bs \rangle < -\epsilon_0)\Big) \ge 1- \frac{2\epsilon}{3}. 
     \end{align}
\end{lemma}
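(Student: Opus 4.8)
The plan is to bound the probability of the complementary event $E^c$ and show it is at most $\frac{2\epsilon}{3}$. Write $u := \langle \ba, \bx\rangle$ and $v := \langle \ba, \bs\rangle$, so that $u$ and $v$ are each marginally $\calN(0,1)$ (jointly correlated, with correlation $\langle \bx,\bs\rangle$), and let $E$ denote the event whose probability we wish to lower bound. First I would record the set inclusion
\begin{equation*}
    E^c \subseteq \{\mathrm{sign}(u) \ne \mathrm{sign}(v)\} \cup \{|u| \le \epsilon_0\} \cup \{|v| \le \epsilon_0\},
\end{equation*}
which follows from a one-line case check: if $\mathrm{sign}(u) = \mathrm{sign}(v)$ and $|u|, |v| > \epsilon_0$, then either $u,v > \epsilon_0$ or $u,v < -\epsilon_0$, i.e., $E$ occurs. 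Applying the union bound reduces the problem to controlling these three terms.

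For the first term, I would invoke Lemma~\ref{lem:first}: since $\ba / \|\ba\|_2$ is uniformly distributed on $\calS^{n-1}$ and $\mathrm{sign}(\langle \ba, \cdot\rangle) = \mathrm{sign}(\langle \ba/\|\ba\|_2, \cdot\rangle)$, we get $\bbP(\mathrm{sign}(u) \ne \mathrm{sign}(v)) = \rmd_\rmS(\bx,\bs)$. Combining this with the bound $\rmd_\rmS(\bx,\bs) \le \frac{1}{2}\|\bx - \bs\|_2$ from Lemma~\ref{lem:simple} and the hypothesis $\|\bx - \bs\|_2 \le \epsilon$ yields $\bbP(\mathrm{sign}(u) \ne \mathrm{sign}(v)) \le \frac{\epsilon}{2}$. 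For the remaining two terms, since $u, v \sim \calN(0,1)$ and the standard normal density is bounded by $\frac{1}{\sqrt{2\pi}} < \frac{1}{2}$, I would bound $\bbP(|u| \le \epsilon_0) \le \frac{2\epsilon_0}{\sqrt{2\pi}} < \epsilon_0 = \frac{\epsilon}{12}$, and likewise for $v$. Adding the three bounds gives $\bbP(E^c) \le \frac{\epsilon}{2} + \frac{\epsilon}{12} + \frac{\epsilon}{12} = \frac{2\epsilon}{3}$, which is the claim.

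There is no serious obstacle here; the argument is essentially a short union bound, structurally parallel to the proof of Lemma~\ref{lem:noiseless_sep} but exploiting the ``same sign'' configuration rather than the ``opposite sign'' one. The only points requiring a little care are (i) the passage from the Gaussian vector $\ba$ to a uniform spherical vector so that Lemma~\ref{lem:first} is applicable, and (ii) verifying the elementary set inclusion for $E^c$ above. One may also observe that the statement is vacuous once $\epsilon \ge \frac{3}{2}$ (the right-hand side becomes nonpositive), so nothing is lost by using the crude constant $\frac{2}{\sqrt{2\pi}} < 1$ in the Gaussian small-ball estimate.
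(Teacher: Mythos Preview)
Your proof is correct and essentially identical to the paper's: both decompose the event using the three pieces $\{\mathrm{sign}(u)\ne\mathrm{sign}(v)\}$, $\{|u|\le\epsilon_0\}$, $\{|v|\le\epsilon_0\}$, then invoke Lemmas~\ref{lem:first} and~\ref{lem:simple} for the first and the Gaussian small-ball bound for the latter two. The only cosmetic difference is that you bound $\bbP(E^c)$ from above via the union bound while the paper lower-bounds $\bbP(E)$ directly, which are equivalent statements.
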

\begin{proof}
    We have  
     \begin{align}
      & \bbP\Big( (\langle \ba, \bx \rangle > \epsilon_0, \langle \ba, \bs \rangle > \epsilon_0) \cup (\langle \ba, \bx \rangle < -\epsilon_0, \langle \ba, \bs \rangle < -\epsilon_0)\Big) \nonumber \\
      & = \bbP( \langle \ba, \bx \rangle > \epsilon_0, \langle \ba, \bs \rangle > \epsilon_0) + \bbP(\langle \ba, \bx \rangle < -\epsilon_0, \langle \ba, \bs \rangle < -\epsilon_0) \\
      & \ge \bbP(\langle \ba, \bx \rangle > 0, \langle \ba, \bs \rangle > 0) - \bbP( |\langle \ba, \bx \rangle| \le \epsilon_0)  +  \bbP(\langle \ba, \bx \rangle < 0, \langle \ba, \bs \rangle < 0) - \bbP( |\langle \ba, \bs \rangle| \le \epsilon_0).
     \end{align}
    Note that by successively applying Lemmas~\ref{lem:first} and~\ref{lem:simple}, we have $\bbP(\langle \ba, \bx \rangle > 0, \langle \ba, \bs \rangle > 0) + \bbP(\langle \ba, \bx \rangle < 0, \langle \ba, \bs \rangle < 0) = 1-\rmd_\rmS(\bx,\bs) \ge 1-\frac{\epsilon}{2}$. In addition, because that $\langle \ba,\bx \rangle \sim \calN(0,1)$, we have 
    \begin{equation}
     \bbP(|\langle \ba,\bx \rangle| \le \epsilon_0) \le \epsilon_0 \sqrt{\frac{2}{\pi}},
    \end{equation}
    which is seen by trivially upper bounding the standard Gaussian density by $\frac{1}{\sqrt{2\pi}}$.
    Substituting $\epsilon_0 = \frac{\epsilon}{12}$, we obtain the desired inequality.
\end{proof}

Using Lemma~\ref{lem:noiseless_sep2} and following similar ideas to those in the proof of Theorem~\ref{thm:main_noiseless}, we deduce Corollary~\ref{coro:main_noiseless}. To avoid repetition, we provide only an outline below.

     We again construct a chain of nets and select $\bx_0,\bx_1,\ldots,\bx_l$ and $\bs_0,\bs_1,\ldots,\bs_l$ in the nets such that~\eqref{eq:bxbxbsbt} is satisfied. Let $\delta = c_1\epsilon^2$ with $c_1>0$ being a sufficiently small constant. From the triangle inequality, we obtain 
    \begin{equation}
        \|\bx_0-\bs_0\|_2 \le \frac{3\epsilon}{2}.
    \end{equation}
    Then, let 
    \begin{align}
          &\tilde{J}(\bx',\bs') := \Big\{i \in [m] \,:\, \Big(\langle \underline{\ba}_i,\bx' \rangle > \frac{\epsilon}{8}, \langle \underline{\ba}_i,\bs' \rangle >\frac{\epsilon}{8}\Big) 
            \cup \Big(\langle \underline{\ba}_i,\bx' \rangle < -\frac{\epsilon}{8}, \langle \underline{\ba}_i,\bs' \rangle < -\frac{\epsilon}{8}\Big) \Big\}.
    \end{align}
    Similar to~\eqref{eq:num_lb}, we can show that when $m = \Omega\left(\frac{k}{\epsilon}\log \frac{L r}{\delta}\right)$, with probability at least $1-e^{-\Omega(\epsilon m)}$, for all $(\bx',\bs')$ pairs in $G(M) \times G(M)$ with $\|\bx'-\bs'\|_2 \le \frac{3\epsilon}{2}$, we have 
    \begin{equation}\label{eq:num_lb2}
          |\tilde{J}(\bx',\bs')| \ge \left(1-\frac{3\epsilon}{2}\right) m.
    \end{equation}
    Combining~\eqref{eq:num_lb2} with~\eqref{eq:I2_def} and a suitable analog of \eqref{eq:bds_I1I2}, we obtain the desired result.
    
\section{Proof of Theorem \ref{thm:lb_noiseless2} (Noiseless Lower Bound)} 

The proof proceeds in several steps, given in the following subsections.

\subsection{Choice of Generative Model}

Recall that Theorem \ref{thm:lb_noiseless2}  only states the existence of some generative model for which $m = \Omega\left(k \log(Lr) + \frac{k}{\epsilon}\right)$ measurements are necessary.  Here we formally introduce the generative model, building on the approach from \cite{liu2020information} of generating group-sparse signals. 
We say that a signal in $\bbR^n$ is {\em $k$-group-sparse} if, when divided into $k$ blocks of size $\frac{n}{k}$,\footnote{To simplify the notation, we assume that $n$ is an integer multiple of $k$.  For general values of $n$, the same analysis goes through by letting the final $n - k\lfloor\frac{n}{k}\rfloor$ entries of $\bx$ always equal zero.} each block contains at most one non-zero entry.\footnote{More general notions of group sparsity exist, but for compactness we simply refer to this specific notion as $k$-group-sparse.} 

We construct an auxiliary generative model $\tilde{G} \,:\, B_2^k(r) \rightarrow \bbR^n$, and then normalize it to obtain the final model $G \,:\, B_2^k(r) \rightarrow \calS^{n-1}$.
Noting that $B_\infty^k\big(\frac{r}{\sqrt k}\big) \subseteq B_2^k(r) \subseteq B_\infty^k(r)$, we fix $x_{\mathrm{c}}, x_{\max} > 0$ and construct $\tilde{G}$ as follows: 
\begin{itemize}
    \item The output vector $\bx \in \bbR^n$ is divided into $k$ blocks of length $\frac{n}{k}$, denoted by $\bx^{(1)},\dotsc,\bx^{(k)} \in \bbR^{\frac{n}{k}}$.
    \item A given block $\bx^{(i)}$ is only a function of the corresponding input $z_i$, for $i=1,\dotsc,k$.
    \item The mapping from $z_i$ to $\bx^{(i)}$, $i \in [k-1]$ is as shown in Figure~\ref{fig:toy_gen}.  The interval $\big[-\frac{r}{\sqrt{k}},\frac{r}{\sqrt{k}}\big]$ is divided into $\frac{n}{k}$ intervals of length $\frac{2r\sqrt{k}}{n}$, and the $j$-th entry of $\bx^{(i)}$ can only be non-zero if $z_i$ takes a value in the $j$-th interval.  Within that interval, the mapping takes a ``double-triangular'' shape with extremal values $-x_{\max}$ and $x_{\max}$. 
    \item To handle the values of $z_i$ (with $i \in [k-1]$) outside $\big[-\frac{r}{\sqrt{k}},\frac{r}{\sqrt{k}}\big]$, we extend the functions in Figure~\ref{fig:toy_gen} to take values on the whole real line: For all values outside the indicated interval, each function value simply remains zero.
    \item Different from \cite{liu2020information}, we let the map corresponding to $z_k$ always produce $x_{n - n/k + 1} = x_{n - n/k + 2} = \ldots = x_{n-1} =0$ and $x_n = x_{\mathrm{c}} >0$, where the subscript `c' is used to signify ``constant''.  We allow $x_{\mathrm{c}} > x_{\max}$, as $x_{\max}$ only bounds the first $k-1$ non-zero entries.
    \item The preceding dot point leads to a Lipschitz-continuous function defined on all of $\bbR^k$, and we simply take $\tilde{G}$ to be that function restricted to $B_2^k(r)$.
\end{itemize}
\begin{figure}
    \begin{center}
        \includegraphics[width=0.5\textwidth]{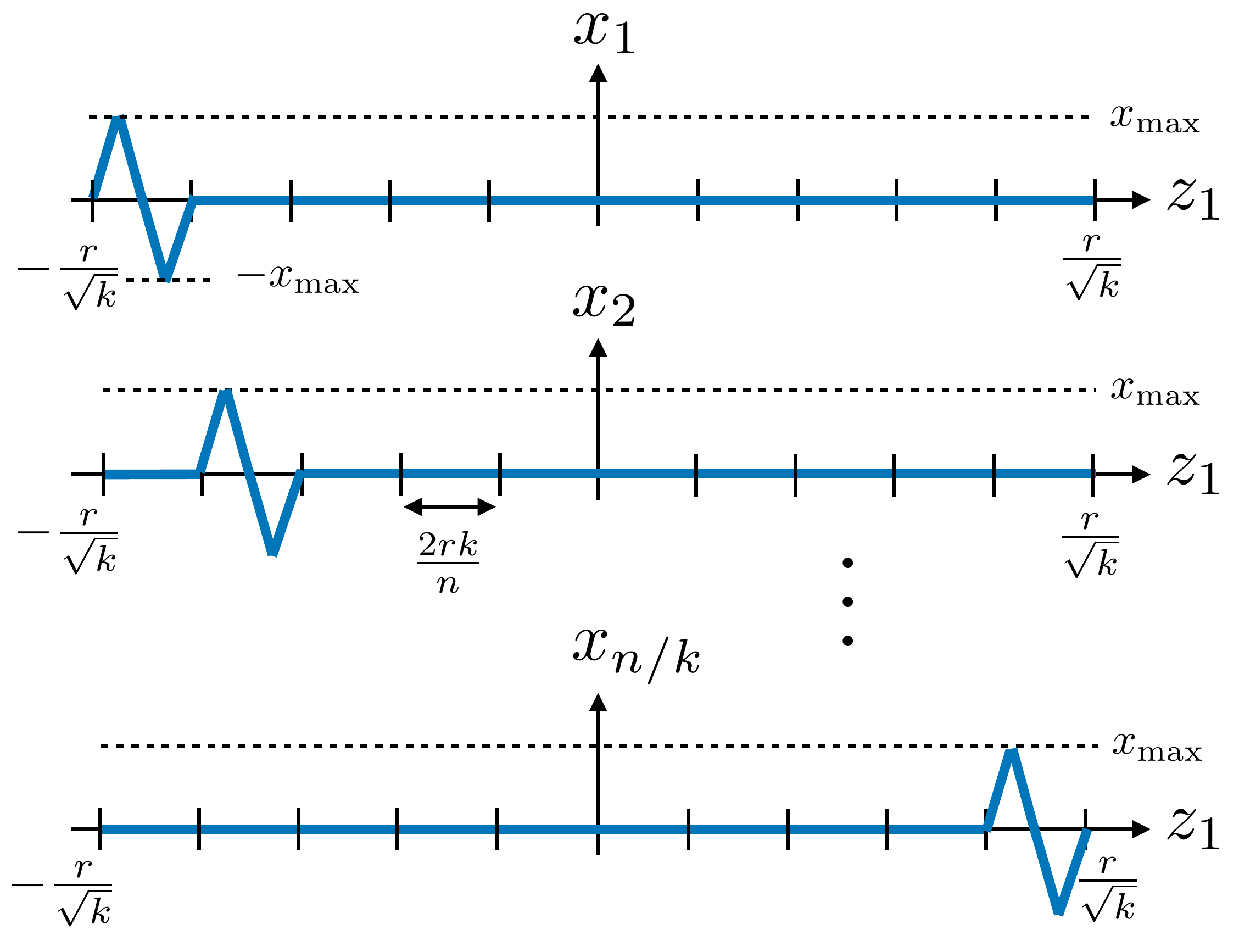}
    \end{center}
    \caption{Generative model that produces sparse signals.  This figure shows the mapping from $z_1 \to (x_1,\dotsc,x_{\frac{n}{k}})$, and the same relation holds for $z_2 \to (x_{\frac{n}{k}+1},\dotsc,x_{\frac{2n}{k}})$ and so on up to $z_{k-1} \to (x_{n-k+1-n/k},\dotsc,x_{n-n/k})$.} \label{fig:toy_gen} 
\end{figure}

To attain the final generative model used to prove Theorem \ref{thm:lb_noiseless2}, we take the output of $\tilde{G}$ and normalize it to be a unit vector: $G(\bz) = \frac{\tilde{G}(\bz)}{\|\tilde{G}(\bz)\|_2}$. We define
\begin{equation}
    \calX_k := \big\{ \bx \in \calS^{n-1} \,:\, \bx \text{ is } k \text{-group-sparse} \big\}. \label{eq:Xk}
\end{equation}
We observe the range $G(B_2^k(r))$ of $G$ is\footnote{For the extreme case that $x_{\mathrm{c}} =0$, it is easy to see that $G(B_2^k(r)) = \calX_k$ (ignoring the zero vector generated by $\tilde{G}$). It follows that for general $x_{\mathrm{c}}>0$, the range of the generative model is as given in~\eqref{eq:range_G}.  Indeed, $x_{\mathrm{c}}$ gets divided by $\|\tilde{G}(\bz)\|_2$, which in turn can take any value between $x_{\mathrm{c}}$ and $\sqrt{(k-1)x_{\max}^2 + x_{\mathrm{c}}^2}$. } 
\begin{align}\label{eq:range_G}
    \tilde{\calX}_k := \left\{\bx \in \calX_k \,:\, x_n \ge \frac{x_{\mathrm{c}}}{\sqrt{(k-1)x_{\max}^2 + x_{\mathrm{c}}^2}}\right\}. 
\end{align}
Furthermore, we have the following lemma regarding the Lipschitz continuity of $G$.
\begin{lemma}\label{lem:lip_G}
     The generative model $G \,:\, B_2^k(r) \rightarrow \calS^{n-1}$ defined above, with parameters $n$, $k$, $r$, $x_{\mathrm{c}}$, and $x_{\max}$, has a Lipschitz constant given by 
     \begin{equation}
        L =\frac{2n x_{\max}}{\sqrt{k}r x_{\mathrm{c}}}.
     \end{equation}
\end{lemma}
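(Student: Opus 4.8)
The plan is to first control the Lipschitz constant of the unnormalized model $\tilde{G}$, and then transfer the bound to $G = \tilde{G}/\|\tilde{G}\|_2$ using the standard fact that for any map $f$ with $\inf_{\bz}\|f(\bz)\|_2 \ge \rho > 0$, the normalized map $f/\|f\|_2$ is $\big(\mathrm{Lip}(f)/\rho\big)$-Lipschitz. The latter follows from the chain rule: the Jacobian of $f/\|f\|_2$ equals $\frac{1}{\|f\|_2}\big(I - \tfrac{ff^\top}{\|f\|_2^2}\big)\,Df$, and the projector $I - \tfrac{ff^\top}{\|f\|_2^2}$ has operator norm at most $1$; since $\tilde{G}$ is piecewise linear on the convex domain $B_2^k(r)$, its Lipschitz constant equals the essential supremum of $\|D\tilde{G}\|_{2\to 2}$, and the same identity holds off the finitely many breakpoints. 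Here the relevant lower bound is $\rho = x_{\mathrm{c}}$: the block of $\tilde{G}$ corresponding to $z_k$ forces $x_n \equiv x_{\mathrm{c}}$, so $\|\tilde{G}(\bz)\|_2^2 \ge x_n^2 = x_{\mathrm{c}}^2$ for every $\bz \in B_2^k(r)$.

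Next I would compute $\mathrm{Lip}(\tilde{G})$ directly. Because $\tilde{G}$ acts blockwise — block $\bx^{(i)}$ depends only on $z_i$ for $i \in [k-1]$, and the $k$-th block is constant — we have $\|\tilde{G}(\bz) - \tilde{G}(\bz')\|_2^2 = \sum_{i=1}^{k-1}\|\bx^{(i)}(z_i) - \bx^{(i)}(z_i')\|_2^2$, so it suffices to bound the (scalar-input) Lipschitz constant of each map $z_i \mapsto \bx^{(i)}$. Within each of the $\frac{n}{k}$ subintervals of $\big[-\tfrac{r}{\sqrt k},\tfrac{r}{\sqrt k}\big]$, which have length $\ell := \frac{2r\sqrt k}{n}$, at most one coordinate of $\bx^{(i)}$ is nonzero, and by the double-triangular profile in Figure~\ref{fig:toy_gen} (which traverses total variation $4x_{\max}$ across the subinterval) that coordinate is piecewise linear with slope of magnitude $\tfrac{4x_{\max}}{\ell}$; outside $\big[-\tfrac{r}{\sqrt k},\tfrac{r}{\sqrt k}\big]$ the map is constant (equal to $\bzero$) and it glues continuously at $\pm\tfrac{r}{\sqrt k}$. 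Hence each $z_i \mapsto \bx^{(i)}$ is $\tfrac{4x_{\max}}{\ell}$-Lipschitz on all of $\bbR$, giving $\|\tilde{G}(\bz) - \tilde{G}(\bz')\|_2^2 \le \big(\tfrac{4x_{\max}}{\ell}\big)^2 \sum_{i=1}^{k-1}|z_i - z_i'|^2 \le \big(\tfrac{4x_{\max}}{\ell}\big)^2\|\bz - \bz'\|_2^2$, i.e., $\mathrm{Lip}(\tilde{G}) \le \tfrac{4x_{\max}}{\ell} = \tfrac{2n x_{\max}}{\sqrt k\, r}$.

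Combining the two steps gives $\mathrm{Lip}(G) \le \mathrm{Lip}(\tilde{G})/x_{\mathrm{c}} = \tfrac{2n x_{\max}}{\sqrt k\, r\, x_{\mathrm{c}}} = L$, as claimed; one can also note this is essentially tight (the slope bound and the projector bound are each attained).

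I expect the obstacles to be bookkeeping rather than conceptual: (i) reading the correct slope $\tfrac{4x_{\max}}{\ell}$ off the double-triangular profile and carrying the constant through the arithmetic with $\ell = \tfrac{2r\sqrt k}{n}$; and (ii) justifying $\mathrm{Lip}(G) \le \mathrm{Lip}(\tilde{G})/x_{\mathrm{c}}$ cleanly even though $\tilde{G}$ is only piecewise linear — here convexity of $B_2^k(r)$ and the fact that the non-smooth points form a null set make the Rademacher/chain-rule argument go through without difficulty. The edge behavior at subinterval boundaries and at $\pm r/\sqrt k$ (function value $0$, continuity, slope bound still valid) also needs a line of comment, but is routine.
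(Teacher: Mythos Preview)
Your proposal is correct and follows essentially the same two-step strategy as the paper: first bound $\mathrm{Lip}(\tilde{G}) \le \frac{2nx_{\max}}{\sqrt{k}\,r}$, then use $\|\tilde{G}(\bz)\|_2 \ge x_{\mathrm{c}}$ to pass to the normalized map. The only cosmetic differences are that the paper cites \cite[Lemma~1]{liu2020information} for the first step (whereas you compute the slope directly from the double-triangle profile), and for the second step the paper invokes the elementary inequality $\big\|\frac{\bx}{\|\bx\|_2} - \frac{\bx'}{\|\bx'\|_2}\big\|_2 \le \max\{\|\bx\|_2^{-1},\|\bx'\|_2^{-1}\}\,\|\bx-\bx'\|_2$ rather than your Jacobian/projector argument.
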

\begin{proof}
     From \cite[Lemma 1]{liu2020information}, we know that $\tilde{G}$ is $\tilde{L}$-Lipschitz with $\tilde{L} = \frac{2n x_{\max}}{\sqrt{k}r}$. It is straightforward to show that for any $\bx,\bx' \ne \mathbf{0}$, $\big\|\frac{\bx}{\|\bx\|_2} - \frac{\bx'}{\|\bx'\|_2}\big\| \le  \max \big\{\frac{1}{\|\bx\|_2}, \frac{1}{\|\bx'\|_2}\big\} \|\bx -\bx'\|_2$. Due to the choice of $x_n$ in our construction, we have $\|\tilde{G}(\bz)\|_2 \ge x_{\mathrm{c}}$ for any $\bz \in B_2^k(r)$; hence, for any $\bz_1,\bz_2 \in B_2^k(r)$, we have
     \begin{align}
          \|G(\bz_1)-G(\bz_2)\|_2  &= \left\|\frac{\tilde{G}(\bz_1)}{\|\tilde{G}(\bz_1)\|_2} - \frac{\tilde{G}(\bz_2)}{\|\tilde{G}(\bz_2)\|_2}\right\|_2 \\
          & \le \frac{1}{x_{\mathrm{c}}} \|\tilde{G}(\bz_1) - \tilde{G}(\bz_2)\|_2 \\
          & \le \frac{\tilde{L}}{x_{\mathrm{c}}} \|\bz_1 - \bz_2\|_2,
     \end{align}
    meaning that $G$ is $L$-Lipschitz with $L = \frac{2n x_{\max}}{\sqrt{k}r x_{\mathrm{c}}}$.
\end{proof}

\subsection{Proof of $\Omega\big(\frac{k}{\epsilon}\big)$ Lower Bound} \label{sec:lb1}

With the generative model in place that produces group-sparse signals, we proceed by following ideas from the 1-bit sparse recovery literature \cite{jacques2013robust,acharya2017improved}.  The following lemma is a simple modification of a lower bound for the packing number of the unit sphere.  The proof is deferred to Section \ref{app:pf_packing}.

\begin{lemma}\label{lem:lb_packingNum}
     For $\lambda \in (0,1)$, define
     \begin{equation}
        Z_k(\lambda) := \{\bz \in \calS^{k-1}\,:\, z_k \ge \lambda\}. \label{eq:Zk}
     \end{equation}
     Then, for any $k$ and $\epsilon \in \big(0,\frac{1}{2}\big)$, there exists a subset $\calC \subseteq Z_k(\frac{1}{2})$ of size $|\calC| \ge \left(\frac{c}{\epsilon}\right)^k$ (with $c$ being an absolute constant) such that for all $\bz,\bz' \in \calC$, it holds that $\|\bz-\bz'\|_2 > 2\epsilon$.  
\end{lemma}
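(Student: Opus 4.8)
The plan is to realise the spherical cap $Z_k(\tfrac12)$ as a graph over a Euclidean ball of dimension $k-1$, transfer a Euclidean packing of that ball onto the cap, and then invoke the standard volumetric lower bound on packing numbers of Euclidean balls. This is exactly the ``simple modification'' of the classical sphere-packing estimate alluded to in the statement, so I expect no conceptual surprises.

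In detail, first I would write every $\bz \in \calS^{k-1}$ as $\bz = (\bw, z_k)$ with $\bw \in \bbR^{k-1}$, $z_k \in \bbR$ and $\|\bw\|_2^2 + z_k^2 = 1$, and define $\Psi \colon B_2^{k-1}(\tfrac12) \to \calS^{k-1}$ by $\Psi(\bw) := \big(\bw,\ \sqrt{1-\|\bw\|_2^2}\,\big)$. If $\|\bw\|_2 \le \tfrac12$ then $\sqrt{1-\|\bw\|_2^2} \ge \tfrac{\sqrt 3}{2} \ge \tfrac12$, so $\Psi$ takes values in $Z_k(\tfrac12)$. Moreover $\Psi$ is expansive: the first $k-1$ coordinates of $\Psi(\bw)-\Psi(\bw')$ are precisely $\bw-\bw'$, hence $\|\Psi(\bw)-\Psi(\bw')\|_2 \ge \|\bw-\bw'\|_2$. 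Consequently, if $\calC_0 \subseteq B_2^{k-1}(\tfrac12)$ satisfies $\|\bw-\bw'\|_2 > 2\epsilon$ for all distinct $\bw,\bw' \in \calC_0$, then $\calC := \Psi(\calC_0) \subseteq Z_k(\tfrac12)$ obeys $|\calC| = |\calC_0|$ and $\|\bz-\bz'\|_2 > 2\epsilon$ for all distinct $\bz,\bz' \in \calC$, which is the conclusion we want.

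It then remains to produce a large $2\epsilon$-separated subset of $B_2^{k-1}(\tfrac12)$, which is classical: take $\calC_0$ to be a \emph{maximal} $2\epsilon$-separated subset, so that by maximality the balls $\{\bw + B_2^{k-1}(2\epsilon)\}_{\bw \in \calC_0}$ cover $B_2^{k-1}(\tfrac12)$; comparing Lebesgue volumes in $\bbR^{k-1}$ gives $|\calC_0|\,(2\epsilon)^{k-1} \ge (\tfrac12)^{k-1}$, i.e.\ $|\calC_0| \ge (4\epsilon)^{-(k-1)}$. Absorbing the dimension-independent factor into an absolute constant $c$ yields the stated bound $|\calC| \ge (c/\epsilon)^{k}$ (the exponent $k-1$ already suffices for the application in Section~\ref{sec:lb1}, so nothing is lost). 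A more intrinsic alternative that avoids $\Psi$ is to pack the cap directly with geodesic balls -- a maximal geodesic $2\epsilon$-packing is also a $2\epsilon$-cover, so its size is at least the ratio of the surface area of $Z_k(\tfrac12)$ to that of a geodesic ball of radius $2\epsilon$ -- and then convert geodesic separation to Euclidean separation via Lemma~\ref{lem:simple}; I would nonetheless prefer the graph reduction as it is cleaner. I do not anticipate a genuine obstacle: both the expansiveness of $\Psi$ and the covering/volume estimate are routine, and the only real care is in bookkeeping the absolute constant.
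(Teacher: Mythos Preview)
Your argument is correct in substance but follows a different route from the paper and yields a slightly weaker bound. The paper slices the cap horizontally: for each fixed height $\tilde{\lambda}\in[\tfrac12,\tfrac34]$ the slice $\{\bz\in\calS^{k-1}:z_k=\tilde{\lambda}\}$ is a scaled copy of $\calS^{k-2}$, to which the classical sphere-packing lemma gives a $2\epsilon$-separated set of size $\ge(c'/\epsilon)^{k-1}$; it then packs the height coordinate itself with $\Omega(1/\epsilon)$ values that are $2\epsilon$-separated, and takes the union over slices. Points in different slices are separated via their last coordinate, points in the same slice by construction, and the product of cardinalities genuinely recovers the exponent $k$ stated in the lemma. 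Your graph parameterisation $\Psi$ is cleaner and the expansiveness step is correct, but it only gives $|\calC|\ge(4\epsilon)^{-(k-1)}$; your remark that one can ``absorb the dimension-independent factor'' to reach exponent $k$ is not right, since $(c/\epsilon)^{k-1}$ cannot be lower-bounded by $(c'/\epsilon)^{k}$ uniformly in $k$ and $\epsilon$ for any absolute $c'$. Your parenthetical observation that exponent $k-1$ already suffices for the $\Omega(k/\epsilon)$ sample-complexity lower bound is the correct way to close the gap, so for the purposes of the paper your proof is fine; if you want the exact statement of the lemma, the easiest fix is to layer your construction over $\Omega(1/\epsilon)$ well-separated heights as the paper does.
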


The following lemma allows us to bound the number of distinct $\bb$ vectors (observed vectors) that can be produced by sparse signals.

\begin{lemma}{\em \hspace{1sp}\cite[Lemma.~8]{jacques2013robust}}\label{lem:num_orthant}
     For $m \ge 2k$, the number of orthants intersected by a single $k$-dimensional subspace in an $m$-dimensional space is upper bounded by $2^k \binom{m}{k}$. 
\end{lemma}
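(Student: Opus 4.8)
The plan is to reduce the question to counting the full-dimensional cells of a \emph{central} hyperplane arrangement, and then to bound that count by a classical deletion recursion. Let $V \subseteq \bbR^m$ be the given $k$-dimensional subspace (a subspace of smaller dimension only meets fewer orthants, and the final bound is monotone in $k$, so this is the worst case), and let $\bB \in \bbR^{m\times k}$ be a matrix whose columns form a basis of $V$, with rows $\br_1^\top,\dots,\br_m^\top$. A point $\bB\bu$ lies in the open orthant indexed by $\bs \in \{-1,1\}^m$ precisely when $s_j(\br_j^\top\bu) > 0$ for all $j$; hence $V$ meets that orthant iff $\bs$ is the sign pattern of $(\br_1^\top\bu,\dots,\br_m^\top\bu)$ for some $\bu$ avoiding all the hyperplanes $H_j := \{\bu : \br_j^\top\bu = 0\}$ (if some $\br_j = \mathbf{0}$, then $V$ meets no open orthant and there is nothing to prove). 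Each connected component of $\bbR^k \setminus \bigcup_j H_j$ realizes exactly one such sign pattern, and distinct components realize distinct patterns, so the number of orthants met by $V$ is at most $f(m,k)$, the maximum number of full-dimensional cells in a central arrangement of $m$ hyperplanes in $\bbR^k$.

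Next I would bound $f(m,k)$ by the recursion $f(m,k)\le f(m-1,k)+f(m-1,k-1)$ for $m,k\ge 1$: adding the hyperplane $H_m$ to the arrangement $\{H_1,\dots,H_{m-1}\}$ subdivides only those cells that $H_m$ actually meets, and the number of cells it meets equals the number of cells of the arrangement induced on $H_m \cong \bbR^{k-1}$ by $\{H_j\cap H_m\}_{j<m}$, which is a central arrangement of at most $m-1$ hyperplanes in $\bbR^{k-1}$, hence at most $f(m-1,k-1)$. With the base cases $f(m,0)=1$ and $f(0,k)=1$, induction on $m+k$ gives $f(m,k) \le \sum_{i=0}^{k}\binom{m}{i}$, since $g(m,k):=\sum_{i=0}^{k}\binom{m}{i}$ obeys the same recursion by Pascal's rule with matching boundary values.

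Finally I would invoke the hypothesis $m\ge 2k$: for $0\le i\le k\le m/2$ we have $\binom{m}{i}\le\binom{m}{k}$, so $\sum_{i=0}^{k}\binom{m}{i}\le(k+1)\binom{m}{k}\le 2^k\binom{m}{k}$, the last step using $k+1\le 2^k$ for every integer $k\ge 0$. Chaining the three estimates yields that $V$ meets at most $2^k\binom{m}{k}$ orthants, as claimed.

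The only step with real content is the deletion recursion, i.e.\ checking that adding a hyperplane to a central arrangement creates exactly one new cell for each cell of the restricted arrangement living on that hyperplane; this is standard, but it is where the geometry sits, and one must also dispose of the degenerate cases (repeated or linearly dependent normals $\br_j$, or $\dim V < k$), which are harmless. Alternatively, one can skip this re-derivation and simply quote the classical cell count $2\sum_{i=0}^{k-1}\binom{m-1}{i}$ for central arrangements, then observe that it is at most $2k\binom{m}{k}\le 2^k\binom{m}{k}$ when $m\ge 2k$.
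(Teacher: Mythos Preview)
The paper does not supply its own proof of this lemma; it is quoted directly from \cite[Lemma~8]{jacques2013robust}, so there is nothing in the present paper to compare against. Your argument is correct and is the standard route: parametrize $V$ by a basis matrix $\bB$, observe that the open orthants met by $V$ are in bijection with the chambers of the central arrangement $\{H_j\}_{j\le m}$ in $\bbR^k$ determined by the rows of $\bB$, bound the chamber count via the deletion recursion $f(m,k)\le f(m-1,k)+f(m-1,k-1)$, and finish with $\sum_{i=0}^k\binom{m}{i}\le(k+1)\binom{m}{k}\le 2^k\binom{m}{k}$ for $m\ge 2k$. This is essentially how the cited source argues as well (the chamber count for central arrangements goes back to Schl\"afli, Cover, and Winder), so your proposal is a faithful self-contained reconstruction rather than a different approach.

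One small remark: your intermediate bound $\sum_{i=0}^k\binom{m}{i}$ is the region count for a \emph{general} affine arrangement and is loose for central ones; the sharp value $2\sum_{i=0}^{k-1}\binom{m-1}{i}$ you mention at the end is what the original reference uses. Either suffices, since both are at most $2^k\binom{m}{k}$ under $m\ge 2k$. Your handling of the degenerate cases (zero rows, repeated normals, $\dim V<k$) is also adequate.
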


With the above lemmas in place, we proceed by deriving a lower bound on the minimal worst-case reconstruction error, defined as follows (and implicitly depending on a fixed but arbitrary measurement matrix $\bA$):
\begin{equation}
    \epsilon_{\mathrm{opt}} := \inf_{\psi(\cdot)} \sup_{\bx \in G(B_2^k(r))}  \|\bx - \psi(\bx)\|_2,
\end{equation}
where $\psi(\cdot)$ is the overall mapping from $\bx$ to its estimate $\hat{\bx}$, and is therefore implicitly constrained to depend only on $(\bA,\Phi(\bx))$ with $\Phi(\bx) = {\rm sign}(\bA\bx)$.  Note that our definition of $\epsilon_{\mathrm{opt}}$ differs from that in~\cite{jacques2013robust}, since we adopt a refined strategy more similar to \cite{acharya2017improved} to arrive at $\epsilon_{\mathrm{opt}} = \Omega\big(\frac{k}{m}\big)$ instead of the weaker $\epsilon_{\mathrm{opt}} = \Omega\big(\frac{k}{m+k^{3/2}}\big)$.
 
\begin{lemma}\label{lem:lb_noiseless}
     For the generative model $G$ described above with $x_{\mathrm{c}}$ and $x_{\max}$ chosen to satisfy $(k-1)x_{\max}^2 = 3x_{\mathrm{c}}^2$, we have
    \begin{equation}
        \epsilon_{\mathrm{opt}} = \Omega\left(\frac{k}{m}\right). 
     \end{equation}
\end{lemma}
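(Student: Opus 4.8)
The plan is a counting/pigeonhole argument: pack exponentially many (in $k$) signals from the range of $G$ into a single group-sparse coordinate subspace so that they are pairwise $\Omega(\epsilon)$-separated, bound the number of distinct sign vectors $\mathrm{sign}(\bA\bx)$ achievable within that subspace, and conclude that once $\epsilon$ drops below a threshold of order $k/m$ two distinct signals become indistinguishable, forcing reconstruction error exceeding $\epsilon$ for \emph{every} decoder.

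First I would pin down the range. With the choice $(k-1)x_{\max}^2 = 3x_{\mathrm{c}}^2$, the denominator in \eqref{eq:range_G} equals $2x_{\mathrm{c}}$, so $G(B_2^k(r)) = \tilde{\calX}_k = \{\bx \in \calX_k \,:\, x_n \ge \tfrac12\}$. Next, fix one group-sparse support pattern: in each of the first $k-1$ blocks select the (unique allowed) nonzero position, and recall that by construction the last block always places its nonzero entry at coordinate $n$. This gives a $k$-dimensional coordinate subspace $V \subseteq \bbR^n$, and the map identifying the selected coordinates of $V$ with the standard coordinates of $\bbR^k$ (with the $k$-th one corresponding to position $n$) is an isometry carrying the unit sphere of $V$ onto $\calS^{k-1}$ and the constraint $x_n \ge \tfrac12$ onto $z_k \ge \tfrac12$. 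Hence each point of $Z_k(\tfrac12)$ (cf.\ \eqref{eq:Zk}) pulls back to a unit-norm $k$-group-sparse vector with $x_n \ge \tfrac12$, i.e.\ to an element of $G(B_2^k(r))$.

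Now I would apply Lemma~\ref{lem:lb_packingNum} to get a packing $\calC \subseteq Z_k(\tfrac12)$ with $|\calC| \ge (c/\epsilon)^k$ and $\|\bz-\bz'\|_2 > 2\epsilon$ for distinct $\bz,\bz' \in \calC$, and let $\calC'$ be its isometric image in $G(B_2^k(r)) \cap V$ (same cardinality and pairwise separation). On the other side, for $\bx \in V$ the vector $\mathrm{sign}(\bA\bx)$ is determined by the orthant of $\bbR^m$ containing $\bA\bx$, and $\{\bA\bx : \bx \in V\}$ is a subspace of dimension at most $k$; by Lemma~\ref{lem:num_orthant} (applicable once $m \ge 2k$, which is the regime of interest, since for $m < 2k$ one has $k/m = \Omega(1)$ and the claim is immediate from $\epsilon_{\mathrm{opt}} \le 2$), the number of distinct values of $\mathrm{sign}(\bA\bx)$ over $\bx \in V$ is at most $2^k\binom{m}{k} \le (2em/k)^k$. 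Therefore, whenever $\epsilon < \min\{\tfrac12, \tfrac{ck}{2em}\}$ we have $|\calC'| > (2em/k)^k$, so two distinct $\bx,\bx' \in \calC'$ satisfy $\Phi(\bx) = \Phi(\bx')$; any decoder $\psi$ then returns a common estimate $\hat{\bx} = \psi(\bx) = \psi(\bx')$, and $\|\bx-\hat{\bx}\|_2 + \|\bx'-\hat{\bx}\|_2 \ge \|\bx-\bx'\|_2 > 2\epsilon$ forces $\max\{\|\bx-\hat{\bx}\|_2, \|\bx'-\hat{\bx}\|_2\} > \epsilon$, hence $\epsilon_{\mathrm{opt}} > \epsilon$. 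Letting $\epsilon$ approach the threshold yields $\epsilon_{\mathrm{opt}} = \Omega(k/m)$.

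I expect the main obstacle to be conceptual bookkeeping rather than any hard estimate: the genuinely delicate point is guaranteeing that the packing produced by Lemma~\ref{lem:lb_packingNum} lands inside the range $G(B_2^k(r))$, and this is exactly the role of the parameter relation $(k-1)x_{\max}^2 = 3x_{\mathrm{c}}^2$ — it makes the lower floor $x_n \ge \tfrac12$ in \eqref{eq:range_G} coincide with the threshold $z_k \ge \tfrac12$ built into the packing lemma. The remaining work is tracking the constants in the exponential comparison $(c/\epsilon)^k$ versus $2^k\binom{m}{k}$ and disposing of the trivial small-$m$ regime.
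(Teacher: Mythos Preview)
Your argument is correct and follows essentially the same route as the paper: restrict to one fixed group-sparse support (a $k$-dimensional coordinate subspace), invoke Lemma~\ref{lem:lb_packingNum} for the packing and Lemma~\ref{lem:num_orthant} for the sign-pattern count, then compare cardinalities. The only slip is the parenthetical handling of $m<2k$: the inequality $\epsilon_{\mathrm{opt}}\le 2$ is an \emph{upper} bound and cannot by itself deliver the required lower bound $\epsilon_{\mathrm{opt}}=\Omega(1)$; the paper simply works under the hypothesis $m\ge 2k$ of Lemma~\ref{lem:num_orthant} without further comment, and you should either do the same or, in that regime, replace the orthant count by the trivial bound $2^m<4^k$ to still force a collision once $\epsilon<c/4$.
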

\begin{proof}
   Note that $G(B_2^k(r))$ corresponds to a union of $N_{\mathrm{supp}} = \left(\frac{n}{k}\right)^{k-1}$ subsets $\cup_{i \in [N_{\mathrm{supp}}]} S_i$, with
    \begin{align}
        S_i := \bigg\{\bx \in \calX_k \,:\, \mathrm{supp}(\bx) \subseteq T_i, x_n \ge \frac{x_{\mathrm{c}}}{\sqrt{(k-1)x_{\max}^2 + x_{\mathrm{c}}^2}}\bigg\}, \label{eq:Si}
    \end{align}
    where the sets $T_i \subseteq [n]$ equal the $N_{\mathrm{supp}}$ possible supports of size $k$ for group sparse vectors. Substituting the assumption $(k-1)x_{\max}^2 = 3x_{\mathrm{c}}^2$ gives $ \frac{x_{\mathrm{c}}}{\sqrt{(k-1)x_{\max}^2 + x_{\mathrm{c}}^2}} = \frac{1}{2}$, and it follows that for any $i^* \in [N_{\mathrm{supp}}]$, we have
   \begin{align}
        \epsilon_{\mathrm{opt}} & = \inf_{\psi(\cdot)} \sup_{\bx \in G(B_2^k(r))}  \|\bx - \psi(\bx)\|_2  \\
       & \ge \inf_{\psi(\cdot)} \sup_{\bx \in S_{i^*}(\frac{1}{2})}  \|\bx - \psi(\bx)\|_2,
   \end{align}
   where we write \eqref{eq:Si} as $S_{i^*}(\frac{1}{2}):= \{\bx \in \calS^{n-1} \cap \calX_k \,:\, \mathrm{supp}(\bx) \subseteq T_{i^*}, x_n \ge \frac{1}{2} \}$ to highlight the fact that $\frac{x_{\mathrm{c}}}{\sqrt{(k-1)x_{\max}^2 + x_{\mathrm{c}}^2}} = \frac{1}{2}$.  Hence, it suffices to derive the lower bound for $\epsilon_{\mathrm{opt}}^* := \inf_{\psi(\cdot)} \sup_{\bx \in S_{i^*}(\frac{1}{2})}  \|\bx - \psi(\bx)\|_2$. 

    To simplify notation, we assume in the following that the preceding infimum over $\psi(\cdot)$ is attained by some $\psi^*(\cdot)$.\footnote{If not, a similar argument applies with $\psi^*_\zeta(\cdot)$ satisfying $\sup_{\bx \in S_{i^*}(\frac{1}{2})}  \|\bx - \psi^*(\bx)\|_2 \le \epsilon_{\mathrm{opt}}^* + \zeta$ for an arbitrarily small $\zeta$.}  By Lemma~\ref{lem:lb_packingNum}, there exists a set $\calC \subseteq S_{i^*}(\frac{1}{2})$, and a constant $c>0$ such that $|\calC| \ge \big(\frac{c}{\epsilon_{\mathrm{opt}}^*}\big)^k$, and for all $\bx,\bs \in \calC$, $\|\bx-\bs\|_2 > 2\epsilon_{\mathrm{opt}}^*$. In addition, from Lemma~\ref{lem:num_orthant}, the cardinality of the set $\widehat{\calX}^* := \{\hat{\bx} \in \bbR^n \,:\, \hat{\bx} = \psi^*(\bx) \text{ for some } \bx \in S_{i^*}(\frac{1}{2})\}$ satisfies $|\widehat{\calX}^*| \le 2^k \binom{m}{k}$, since each distinct outcome $\bb \in \{-1,1\}^m$ produces at most one additional estimated vector.

    For any $\bx \ne \bs \in \calC$, we must have $\psi^*(\bx) \ne \psi^*(\bs)$. To see this, suppose by contradiction that there exist $\bx \ne \bs \in \calC$ such that $\psi^*(\bx) = \psi^*(\bs)$. Because $\|(\bx-\psi^*(\bx))-(\bs-\psi^*(\bs))\|_2 = \|\bx-\bs\|_2 > 2\epsilon_{\mathrm{opt}}^*$, we have that at least one of $\|\bx-\psi^*(\bx)\|_2$ and $\|\bs-\psi^*(\bs)\|_2$ is larger than $\epsilon_{\mathrm{opt}}^*$, which contradicts the condition that $\sup_{\bx \in S_{i^*}(\frac{1}{2})}  \|\bx - \psi^*(\bx)\|_2 \le \epsilon_{\mathrm{opt}}^*$.

    Hence, combining the above cardinality bounds, we find
    \begin{equation}
        2^k \binom{m}{k} \ge |\widehat{\calX}^*| \ge |\calC| \ge  \left(\frac{c}{\epsilon_{\mathrm{opt}}^*}\right)^k,
    \end{equation}
   and applying the inequality $\binom{m}{k} \le \left(\frac{em}{k}\right)^k$, it follows that $\epsilon_{\mathrm{opt}}^* \ge \frac{ck}{2em}$ as desired.
\end{proof}

Lemma~\ref{lem:lb_noiseless} implies that for any $\epsilon \in \big(0,\frac{1}{2}\big)$, to ensure that there exists a reconstruction function $\psi(\cdot)$ such that $\sup_{\bx \in G(B_2^k(r))} \|\bx - \psi(\bx)\|_2 \le \epsilon$, we require that the number of samples $m$ satisfies $m = \Omega\left(\frac{k}{\epsilon}\right)$. 

\subsection{Proof of $\Omega\left(k \log(Lr)\right)$ Lower Bound} \label{sec:lb2}

The proof of the  $m = \Omega\left(k \log(Lr)\right)$ lower bound follows a similar high-level approach to that of $m = \Omega\left(\frac{k}{\epsilon}\right)$.  We first state the lower bound in terms of $n$ as follows.

\begin{lemma}
     For any $\epsilon \le \frac{\sqrt{3}}{4\sqrt{2}}$ and any reconstruction function $\phi(\cdot)$, in order to attain the recovery guarantee $\sup_{\bx \in G(B_2^k(r))} \|\bx - \phi(\bx)\|_2 \le \epsilon$, the number of samples $m$ must satisfy $m = \Omega\left(k \log \frac{n}{k}\right)$.
\end{lemma}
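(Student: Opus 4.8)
\emph{Proof proposal.} The plan is to lower bound the size of a well-separated packing of $G(B_2^k(r))$ obtained from a Gilbert--Varshamov code on the block positions and signs, and then to compare it against the number of distinct measurement outcomes, in the same spirit as the proof of Lemma~\ref{lem:lb_noiseless} but now exploiting the freedom in the choice of support rather than a single fixed support.

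First I would isolate a convenient finite subfamily of $G(B_2^k(r))$. Keeping the parameter choice $(k-1)x_{\max}^2 = 3x_{\mathrm{c}}^2$ of Lemma~\ref{lem:lb_noiseless}, so that $\|\tilde G(\bz)\|_2 = \sqrt{(k-1)x_{\max}^2 + x_{\mathrm{c}}^2} = 2x_{\mathrm{c}}$ is the \emph{same} for every signal we use, consider signals in which, for each of the first $k-1$ blocks, the single nonzero coordinate sits exactly at one of the two triangular peaks, i.e.\ equals $\pm x_{\max}$. Such a signal is parametrised by a position $j_i \in [n/k]$ and a sign $s_i \in \{-1,+1\}$ for $i \in [k-1]$ (the $k$-th block being the fixed $x_{\mathrm{c}}\be_n$), it is attained by some $\bz \in B_2^k(r)$ (take each $z_i$ at the relevant peak and $z_k = 0$, so $\|\bz\|_2^2 \le (k-1)r^2/k \le r^2$), and after dividing by $2x_{\mathrm{c}}$ it lies in $\calX_k \cap \calS^{n-1}$. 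If two such signals $\bx,\bx'$ disagree in block $i$ (i.e.\ $(j_i,s_i)\ne(j_i',s_i')$), that block contributes at least $2\big(\frac{x_{\max}}{2x_{\mathrm{c}}}\big)^2 = \frac{3}{2(k-1)}$ to $\|\bx-\bx'\|_2^2$ --- a factor $2$ when the positions differ and $4$ when only the signs differ --- since $\big(\frac{x_{\max}}{2x_{\mathrm{c}}}\big)^2 = \frac{x_{\max}^2}{4x_{\mathrm{c}}^2} = \frac{3}{4(k-1)}$.

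Next I would apply the Gilbert--Varshamov bound to $q$-ary codes of length $k-1$ with $q = 2n/k$: for any fixed relative distance $\rho\in(0,1)$ there is a code $\calC_0 \subseteq [q]^{k-1}$ of minimum Hamming distance at least $\rho(k-1)$ with $\log|\calC_0| \ge (k-1)\big((1-\rho)\log q - \log 2\big)$, which is $\Omega\big(k\log\frac nk\big)$ once $n/k$ exceeds a fixed constant (for smaller $n/k$ the claim is vacuous). Choosing $\rho$ to be a constant slightly larger than $\frac14$ and reading $\calC_0$ as a family $\calC$ of signals of the above form, any two distinct $\bx,\bx'\in\calC$ disagree in strictly more than $\frac{k-1}{4}$ blocks, hence $\|\bx-\bx'\|_2^2 > \frac{k-1}{4}\cdot\frac{3}{2(k-1)} = \frac38$, i.e.\ $\|\bx-\bx'\|_2 > \frac{\sqrt3}{2\sqrt2} \ge 2\epsilon$ whenever $\epsilon \le \frac{\sqrt3}{4\sqrt2}$. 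Finally, if the decoder $\phi$ attains $\sup_{\bx\in G(B_2^k(r))}\|\bx-\phi(\bx)\|_2 \le \epsilon$, it must be injective on $\calC$: $\phi(\bx)=\phi(\bx')$ for distinct $\bx,\bx'\in\calC$ would give $\|\bx-\bx'\|_2 \le \|\bx-\phi(\bx)\|_2 + \|\bx'-\phi(\bx')\|_2 \le 2\epsilon$, contradicting the previous bound. Since $\phi(\bx)$ is a function of $(\bA,\Phi(\bx))$ only and $\Phi(\bx)\in\{-1,+1\}^m$, for fixed $\bA$ the decoder produces at most $2^m$ distinct outputs, so $2^m \ge |\calC|$ and thus $m \ge \log_2|\calC| = \Omega\big(k\log\frac nk\big)$, as claimed. (Combined with the value of $L$ from Lemma~\ref{lem:lip_G}, this also yields the $\Omega(k\log(Lr))$ form used in Theorem~\ref{thm:lb_noiseless2}.)

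The only subtle point --- and hence what I would treat as the ``hard part'', even though it is not deep --- is tuning the constants so that the packing distance $\frac{\sqrt3}{2\sqrt2}$ strictly dominates $2\epsilon$ for \emph{all} admissible $\epsilon \le \frac{\sqrt3}{4\sqrt2}$ while the packing retains size $e^{\Omega(k\log(n/k))}$; fixing the Gilbert--Varshamov relative distance just above $\frac14$ achieves exactly this, since the rate $(1-\rho)\log q - \log 2$ stays $\Omega(\log(n/k))$ for any $\rho$ bounded away from $1$.
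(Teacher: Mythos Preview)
Your proposal is correct and follows essentially the same approach as the paper: build a $2\epsilon$-packing of $G(B_2^k(r))$ via a Gilbert--Varshamov code over the block indices, then use the trivial $2^m$ upper bound on the number of decoder outputs. The only differences are cosmetic --- you enlarge the alphabet to $q=2n/k$ by including signs (the paper uses only positions, $q=n/k$) and you take relative distance slightly above $\tfrac14$ (the paper takes $\tfrac12$); both choices land on the same separation threshold $\sqrt{3/8}$ and the same $\Omega(k\log(n/k))$ rate.
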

\begin{proof}
    Recall from \eqref{eq:Xk} that $\calX_k$ contains the $k$-group sparse signals on the unit sphere.  For any $\lambda \in (0,1)$, let 
    \begin{equation}
        S(\lambda) := \{\bx \in \calX_k \,:\, x_n \ge \lambda\}. \label{eq:calU}
     \end{equation}
    We claim that for some constant $c > 0$ and any $\epsilon \le \frac{\sqrt{3}}{4\sqrt{2}}$, there exists a subset $\calC \subseteq S(\frac{1}{2})$ such that $\log |\calC| \ge c k \log\left(\frac{n}{k}\right)$, and for all $\bx,\bs \in \calC$, it holds that $\|\bx-\bs\|_2 > 2\epsilon$. 
    To see this, consider the set
    \begin{align}
         \calU := \bigg\{\bx \in \calX_k \,:\, x_n = \frac{1}{2}, \, x_i \in \bigg\{0,\sqrt{\frac{3}{4(k-1)}} \bigg\} ~\forall i \le n-1, \|\bx\|_0 = k \bigg\}
    \end{align}
    of group-sparse signals with exactly $k$ non-zero entries, $k-1$ of which take the value $\sqrt{\frac{3}{4(k-1)}}$.   By a simple counting argument, we have $|\calU| = \left(\frac{n}{k}\right)^{k-1}$. 

    Let $k' = k-1$ for convenience, and for each $\bx \in \calU$, let $\bv \in \big\{ 1,\dotsc, \frac{n}{k}\big\}^{k'}$ be a length-$k'$ vector indicating which index in each block of the group-sparse signal (except the $k$-th one) is non-zero.  Then, for $\bx,\bx' \in \calU$ and the corresponding $\bv,\bv'$, we have
    \begin{equation}
        \|\bx - \bx'\|_2^2 = \frac{3}{4k'} \rmd_{\rm H}'(\bv,\bv'), \label{eq:l2_to_l0}
    \end{equation}
    where $\rmd_{\rm H}'(\bv,\bv') = \sum_{i=1}^n \bone\{ v_i \ne v'_i \}$ is the unnormalized Hamming distance.  By the Gilbert-Varshamov bound, we know that there exists a set $\calV$ of signals in $\big\{ 1,\dotsc, \frac{n}{k}\big\}^{k'}$ whose pairwise unnormalized Hamming distance is at least $d$, and with the number of elements satisfying
    \begin{align}
        |\calV| &\ge \frac{ (\frac{n}{k'})^{k'} }{ \sum_{j=0}^{d-1}( n/k - 1)^j } \\
            &\ge \frac{ (\frac{n}{k'})^{k'} }{ d ( \frac{n}{k'} )^d }.
    \end{align}
    Setting $d = \frac{k'}{2}$, we find that $\log |\calV| = \Omega\big( k \log \frac{n}{k} \big)$, and by \eqref{eq:l2_to_l0}, we have that the corresponding $\bx$ sequences are pairwise separated by at least a squared distance of $\frac{3}{8}$.  This gives us the desired set $\calC$ stated following \eqref{eq:calU}.

    By the triangle inequality, every $\bx \in \calC$ must have a different outcome $\Phi(\bx)$, since if two have the same outcome then their $2\epsilon$-separation (along with the triangle inequality) implies that the decoder's output cannot be $\epsilon$-close to both.  Since $m$ binary measurements can result in $2^m$ possible outcomes, it follows that $2^m \ge |\calC|$, and hence $m \ge \log_2 |\calC| = \Omega\left(k \log \frac{n}{k}\right)$.
\end{proof}  

Combining the preceding two lower bounds, we readily deduce Theorem \ref{thm:lb_noiseless2}: From Lemma~\ref{lem:lip_G}, the generative model $G$ that we used above has a Lipschitz constant given by
 \begin{equation}
    L = \frac{2n x_{\max}}{\sqrt{k}r x_{\mathrm{c}}} = \frac{n}{k} \frac{2 \sqrt{k} x_{\max}}{r x_{\mathrm{c}}},
 \end{equation}
which implies that when $(k-1)x_{\max}^2 = 3x_{\mathrm{c}}^2$, the condition $m = \Omega\left(k \log \frac{n}{k} \right)$ is equivalent to $m = \Omega\left(k \log(Lr)\right)$. Combining with the lower bound $\Omega\left(\frac{k}{\epsilon}\right)$ derived in Section \ref{sec:lb1}, we complete the proof of Theorem \ref{thm:lb_noiseless2}.

\subsection{Proof of Lemma~\ref{lem:lb_packingNum} (Lower Bound on the Packing Number)} \label{app:pf_packing}

    We first recall the following well-known lower bound on the packing number of the unit sphere.
    
    \begin{lemma}{\em \hspace{1sp}\cite[Ch.~13]{lorentz1996constructive}}
        \label{lem:wellKnownPacking}
         For any $k$ and $\epsilon \in \big(0,\frac{1}{2}\big)$, there exists a subset $\calC \subseteq \calS^{k-1}$ of size $|\calC| \ge \left(\frac{c}{\epsilon}\right)^k$ (with $c$ being an absolute constant) such that for all $\bz,\bz' \in \calC$, it holds that $\|\bz-\bz'\|_2 > 2\epsilon$.
    \end{lemma}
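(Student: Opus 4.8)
The plan is to produce the packing as a Cartesian-style product of a one-dimensional packing along the ``latitude'' coordinate $z_k$ with a packing of the lower-dimensional ``longitude'' sphere $\calS^{k-2}$, since it is precisely the extra factor of $\Theta(1/\epsilon)$ contributed by the latitude direction that raises the exponent from $k-1$ to $k$. Concretely, I would parametrize points of the cap as $\bz = (\sqrt{1-t^2}\,\bu,\, t)$ with $t \in [\tfrac12, \tfrac{\sqrt3}{2}]$ and $\bu \in \calS^{k-2}$; one checks immediately that $\|\bz\|_2 = 1$ and $z_k = t \ge \tfrac12$, so every such $\bz$ lies in $Z_k(\tfrac12)$.

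I would then choose the two ingredient sets. For the latitude, take $T \subseteq [\tfrac12, \tfrac{\sqrt3}{2}]$ to be a maximal set of reals with pairwise gaps strictly larger than $2\epsilon$; since the interval has constant length $\tfrac{\sqrt3-1}{2}$, this yields $|T| \ge c''/\epsilon$ for an absolute constant $c''$. For the longitude, I would apply Lemma~\ref{lem:wellKnownPacking} to the sphere $\calS^{k-2}$ with radius parameter $2\epsilon$ (legitimate when $\epsilon < \tfrac14$) to obtain $U \subseteq \calS^{k-2}$ with $|U| \ge (c/(2\epsilon))^{k-1}$ and all pairwise distances strictly larger than $4\epsilon$. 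The candidate packing is $\calC := \{(\sqrt{1-t^2}\,\bu,\, t) \,:\, t \in T,\ \bu \in U\}$, so that $|\calC| = |T|\cdot|U|$.

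The separation I would verify by splitting into two cases for distinct $\bz,\bz' \in \calC$. If the latitudes differ, then the last coordinate alone gives $\|\bz-\bz'\|_2 \ge |z_k - z_k'| = |t-t'| > 2\epsilon$, with no requirement on $\bu,\bu'$ whatsoever; this is the crucial point that lets the two packings be combined freely. If instead $t = t'$ but $\bu \ne \bu'$, then $\bz-\bz' = (\sqrt{1-t^2}(\bu-\bu'),\,0)$, and using $t \le \tfrac{\sqrt3}{2} \Rightarrow \sqrt{1-t^2} \ge \tfrac12$ together with $\|\bu-\bu'\|_2 > 4\epsilon$ gives $\|\bz-\bz'\|_2 = \sqrt{1-t^2}\,\|\bu-\bu'\|_2 > 2\epsilon$. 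Multiplying cardinalities, $|\calC| \ge \tfrac{c''}{\epsilon}\big(\tfrac{c}{2\epsilon}\big)^{k-1} \ge (c_0/\epsilon)^k$ with $c_0 := \min\{c'', c/2\}$, which is the claimed bound.

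The main obstacle—and the reason a single-latitude construction stalls at exponent $k-1$—is arranging that points at different latitudes remain $2\epsilon$-separated without having to impose any joint control on the longitudes. I expect this to be the crux, and I would resolve it exactly as above: the difference in the last coordinate already supplies the separation $|t-t'|$, so distinct latitudes cost nothing and the full product $|T|\cdot|U|$ survives as a valid packing, rather than only the single slice $\{t\}\times U$ of size $(c/\epsilon)^{k-1}$. The only remaining bookkeeping is the boundary regime $\epsilon \in [\tfrac14,\tfrac12)$, where for a small enough absolute constant one has $(c/\epsilon)^k \le 1$ and a single point of $Z_k(\tfrac12)$ satisfies the claim vacuously; I would dispose of this in one line and fold the resulting constraints into the final choice of the constant.
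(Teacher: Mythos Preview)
Your argument invokes Lemma~\ref{lem:wellKnownPacking} itself (applied to the lower-dimensional sphere $\calS^{k-2}$) as a black box when building the longitude packing $U$, but Lemma~\ref{lem:wellKnownPacking} is exactly the statement you are supposed to be proving. As written this is circular; if you intend it as an induction on $k$, you have not said so and have not supplied a base case.

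In fact, the paper does not prove Lemma~\ref{lem:wellKnownPacking} at all---it is quoted from a reference as a known result. What you have actually written is, almost verbatim, the paper's proof of the \emph{different} statement Lemma~\ref{lem:lb_packingNum}, which constructs a packing of the cap $Z_k(\tfrac12)$ by applying Lemma~\ref{lem:wellKnownPacking} to each latitude slice $\{z_k = t\}$ and then taking a union over a $2\epsilon$-separated set of latitudes. Your variant---latitude interval $[\tfrac12,\tfrac{\sqrt3}{2}]$, with the longitude separation inflated to $4\epsilon$ so that the factor $\sqrt{1-t^2} \ge \tfrac12$ still yields $2\epsilon$ after scaling---is a clean version of exactly that argument; the paper instead uses latitudes in $[\tfrac12,\tfrac34]$ and handles the scaling less explicitly by observing that the per-slice packing constant $c'(\tilde\lambda)$ is uniformly bounded below. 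So your write-up is a proof of Lemma~\ref{lem:lb_packingNum}, not of Lemma~\ref{lem:wellKnownPacking}.
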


    Recall that Lemma~\ref{lem:lb_packingNum} is stated for $\lambda = \frac{1}{2}$.
    Fix $\tilde{\lambda} \in [\frac{1}{2},\frac{3}{4}]$, and consider the set $\calT(\tilde{\lambda}):= \{\bz \in \calS^{k-1} \,:\, z_k = \tilde{\lambda}\}$. Applying Lemma~\ref{lem:wellKnownPacking} to $\sqrt{1-\tilde{\lambda}^2}\calS^{k-2}$, we obtain that for any $\epsilon>0$, there exists a subset $\calC'(\tilde{\lambda}) \subseteq \calT(\tilde{\lambda})$, and a constant $c'(\tilde{\lambda}) >0$, such that $|\calC'(\tilde{\lambda})| \ge \big(\frac{c'(\tilde{\lambda})}{\epsilon}\big)^{k-1}$, and for all $\bz,\bz' \in \calC'(\tilde{\lambda})$, it holds that $\|\bz-\bz'\|_2 > 2\epsilon$.  In addition, since we consider $\tilde{\lambda} \in [\frac{1}{2},\frac{3}{4}]$, we have $\min_{\tilde{\lambda} \in [\frac{1}{2},\frac{3}{4}]} c'(\tilde{\lambda}) > 0$.

    For the final entry, observe that there exists a set $\calL \subseteq [\frac{1}{2},\frac{3}{4}]$ with $|\calL| \ge \frac{1}{8\epsilon}$ such that for all $a,b \in \calL$, it holds that $|a-b| > 2 \epsilon$. Then, considering $\cup_{l \in [\calL]} \calT(l)$ and letting $\calC:= \cup_{l \in [\calL]} \calC'(l) \subseteq Z_k\big(\frac{1}{2}\big)$ (see \eqref{eq:Zk}). We deduce that there exists a constant $c>0$ such that $|\calC| \ge \big(\frac{c}{\epsilon}\big)^k$, and for all $\bx,\bs \in \calC$ it holds that $\|\bx-\bs\|_2 > 2\epsilon$.

\bibliographystyle{IEEEtran}
\bibliography{techReports,JS_References}

\end{document}